\definecolor{myblue}{rgb}{0.0, 0.2, 0.6}
\newcommand{\opt}{\mathrm{OPT}}
\newcommand{\card}{\DeclareMathOperator{\card}{\mathrm{Card}}}
\newcommand{\VC}{\mathrm{VC}}
\newcommand{\SP}{\mathrm{SP}}
\newcommand{\nonl}
{\renewcommand{\nl}{\let\nl\oldnl}}
\DeclareMathOperator*{\argmin}{arg\,min}
\definecolor{shadecolor}{gray}{0.95}
\newtheorem{theorem}{Theorem}
\newtheorem{lemma}[theorem]{Lemma}
  \newtheorem{example}{Example}
    \newtheorem{definition}{Definition}
  \newtheorem{proposition}[theorem]{Proposition}
  \newtheorem{corollary}[theorem]{Corollary}
  \newtheorem{conjecture}[theorem]{Conjecture}
\newtheorem{claim}[theorem]{Claim}
\renewenvironment{proof}[1][]{\par\noindent{\bf Proof #1\ }}{\hfill$\blacksquare$\\[2mm]}
\newcommand{\cA}{\mathcal{A}}
\newcommand{\cC}{\mathcal{C}}
\newcommand{\cD}{\mathcal{D}}
\newcommand{\cE}{\mathcal{E}}
\newcommand{\cF}{\mathcal{F}}
\newcommand{\cH}{\mathcal{H}}
\newcommand{\cI}{\mathcal{I}}
\newcommand{\cJ}{\mathcal{J}}
\newcommand{\cK}{\mathcal{K}}
\newcommand{\cL}{\mathcal{L}}
\newcommand{\cO}{\mathcal{O}}
\newcommand{\cP}{\mathcal{P}}
\newcommand{\cR}{\mathcal{R}}
\newcommand{\cT}{\mathcal{T}}
\newcommand{\cU}{\mathcal{U}}
\newcommand{\cV}{\mathcal{V}}
\newcommand{\cX}{\mathcal{X}}
\newcommand{\cY}{\mathcal{Y}}
\newcommand{\cZ}{\mathcal{Z}}
\newcommand{\A}{\mathbb{A}}
\newcommand{\N}{\mathbb{N}}
\newcommand{\R}{\mathbb{R}}
\newcommand{\E}{\mathbb{E}}
\newcommand \prob {\mathop{{{\mathbb{P}}}}\nolimits}
\title{Auditing Fairness under Model Updates:\\ Fundamental Complexity and Property-Preserving Updates}
\author{
  \textbf{Ayoub Ajarra, Debabrota Basu}\\
  \'Equipe Scool, Univ. Lille, Inria, CNRS, Centrale Lille, UMR 9189- CRIStAL, France\\
  \small{\texttt{ayoub.ajarra@inria.fr}, \texttt{debabrota.basu@inria.fr}}
}
\date{}
\begin{document}

\maketitle

\begin{abstract}

As machine learning models become increasingly embedded in societal infrastructure, auditing them for bias is of growing importance. However, in real-world deployments, auditing is complicated by the fact that model owners may adaptively update their models in response to changing environments, such as financial markets. These updates can alter the underlying model class while preserving certain properties of interest, raising fundamental questions about what can be reliably audited under such shifts.

In this work, we study group fairness auditing under arbitrary updates. We consider general shifts that modify the pre-audit model class while maintaining invariance of the audited property. Our goals are twofold: (i) \textit{to characterize the information complexity of allowable updates, by identifying which strategic changes preserve the property under audit}; and (ii) \textit{to efficiently estimate auditing properties, such as group fairness, using a minimal number of labeled samples}.

We propose a generic framework for PAC auditing based on an \textit{Empirical Property Optimization (EPO)} oracle. For statistical parity, we establish distribution-free auditing bounds characterized by the SP dimension, a novel combinatorial measure that captures the complexity of admissible strategic updates. Finally, we demonstrate that our framework naturally extends to other auditing objectives, including prediction error and robust risk.
\end{abstract}
\tableofcontents
\newpage
\section{Introduction}
The rise of algorithmic decision makers in high-stakes domains such as healthcare, finance, and employment \citep{obermeyer2019dissecting, raji2019actionable, aboy2024navigating, montag2024successful} has made algorithmic regulation a central challenge. While regulation can take different forms~\citep{vecchione2021algorithmic}, accurately auditing properties of stochastic ML models has emerged as a fundamental component.  In trustworthy ML, the key focus is to ensure that ML models meet social and ethical constraints, such as fairness measures, robustness, and privacy guarantees, through auditing~\citep{raji2020closing, madaio2020co,le2023algorithmic}. This has spurred significant interest within ML communities in developing accurate and reliable auditing algorithms~\citep{kearns2018preventing,cohen2019certified,IFverification:jhon,BayesAudit:Neiswanger,yan2022active, hsu2024distribution,ajarra2024active, she2025fairsense}. However, real-world assessments of algorithmic bias become complex under model drift and can fundamentally alter the very properties under audit \citep{widmer1996learning, lu2018learning}. These shifts may occur in several scenarios. For instance, in evolving operational scales, a business (model owner) expands and gains access to vastly larger and more diverse datasets (e.g., customer transactions and market signals). Under such conditions, simple models like linear classifiers or decision trees may begin to underfit, failing to capture the emergent complexity of the data. Moreover, the trained model class itself may change. For example, a model trained under stable market conditions using linear assumptions may perform poorly when markets become highly volatile, as nonlinear dependencies tend to dominate \citep{cont2001empirical}. In such regimes, the model owner needs to update its model to flexible architectures such as neural networks often yield superior predictive performance \citep{goodfellow2016deep}. This trade-off also extends beyond statistical performance; complex models like deep neural networks typically demand significant computational resources, which may be disproportionate to the business need. A model update can occur, where a preferable strategy is to find an interpretable linear model \citep{rudin2019stop}. 

Given these interdependencies among data scale, dynamic environment, computational constraints, and interpretability requirements, performance audits are insufficient. In practice, comprehensive audits in regulated or centralized settings can take several months due to their labor-intensive nature and the need for cross-functional coordination across engineering, legal, and compliance teams \citep{raji2020closing}. This motivates the need for auditing frameworks that allow model updates without compromising the audited property. \cite{yan2022active} studies the auditing problem for Statistical Parity (Definition~\ref{def:SP}) under the assumption that the hypothesis class is known to the auditor. In their framework, strategic updates are formalized as manipulation-proofness: an updated model is considered manipulation-proof (does not alter the property under audit) if it belongs to the version space specified by a teaching set. Crucially, the size of this teaching set scales with the sample complexity required to reconstruct the model. Consequently, if the model owner is constrained from altering predictions on these audit points, the model itself remains essentially unchanged with high probability. This effectively precludes meaningful model updates during the audit process, rendering the manipulation-proofness condition overly restrictive in general applications. Because their approach focuses on reconstructing the model using the known hypothesis class, \cite{yan2022active} characterizes audit complexity under manipulation-proofness via classical learning-theoretic measures (specifically, the disagreement coefficient and Vapnik-Chervonenkis dimension). They leave open the question of whether alternative information-theoretic measures could yield a more flexible characterization of audit complexity. In this work, we focus on characterizing the information complexity of the strategic class. We ask:
\begin{center}\label{Q1}
\textbf{Q1:} \textit{Given a strategic class $\cF$, what is the information complexity of auditing group fairness for models in $\cF$ without full model reconstruction\footnote{e.g., in terms of learning-theoretic complexity measures such as VC dimension}?}
\end{center}

\textbf{Q1} seeks a combinatorial characterization of the classes of strategic updates $\cF$ for which group fairness auditing is feasible. In particular, it asks how the complexity of $\cF$ governs the amount of information required to certify fairness properties without learning the model itself.

As model classes grow increasingly expressive --- most notably with the use of neural networks that go far beyond linear decision boundaries --- a complementary question naturally arises:

\begin{center}
\textbf{Q2:} \textit{If the strategic class is allowed to grow arbitrarily in VC dimension, can strategic updates still admit information-theoretically feasible fairness auditing?}
\end{center}

In highly overparameterized regimes, learning is known to become information-theoretically hard, with VC dimension serving as a proxy for combinatorial complexity. \textbf{Q2} investigates whether fairness auditing may nevertheless remain strictly easier than learning, even for highly expressive model classes.

\subsection{Related Work: Black-box Auditing}

There are two main lines of work in this context. One that tries to \textit{verify} whether a property of an ML model shoots over a certain threshold. This has been extensively studied in the case of robustness~\citep{cohen2019certified,salman2019provably}, group fairness measures, like SP~\citep{albarghouthi2017fairsquare,BayesAudit:Neiswanger}, and individual fairness~\citep{IFverification:jhon}. \cite{kearns2018preventing,hsu2024distribution} aimed to verify (SP) through a reduction to weak agnostic learning. \cite{hsu2024distribution} studied this framework for Gaussian feature distributions and homogeneous halfspace subgroups, and demonstrate the problem’s computational hardness. Though the initial literature focused on verification, it requires a priori knowledge of a valid threshold, which is hard to pre-define without social and application context. The other line of works try to accurately and statistically \textit{estimate} the property under audit. \cite{BayesAudit:Neiswanger} proposes a Bayesian approach to estimate distributional properties. \cite{wang2022beyond} estimates simpler distributional properties, e.g. the mean, the median. For ML models, \cite{yan2022active} further propose active learning algorithms to sample efficiently and audit SP of a model under the assumption that its class is known a priori. Due to space constraints, we provide a detailed related work section in Appendix~\ref{app:RW}.

In contrast to existing work \cite{yan2022active, godinot2024under}, we make no assumption that the auditor has access to the model owner’s original hypothesis class (Figure \ref{fig:limit}). The auditor is instead provided with a new strategic class representing the model owner’s intended post-audit constraints. The auditing task then consists of identifying a model within this class that reproduces the discriminative behavior of the black-box model. This formulation explicitly models structural shifts induced by the model owner during the audit, positioning the auditor as both an evaluator of group fairness and a mechanism for identifying feasible alternative models in a dynamic setting.

\begin{figure*}[t]
    \centering
    \includegraphics[width= 0.7\textwidth]{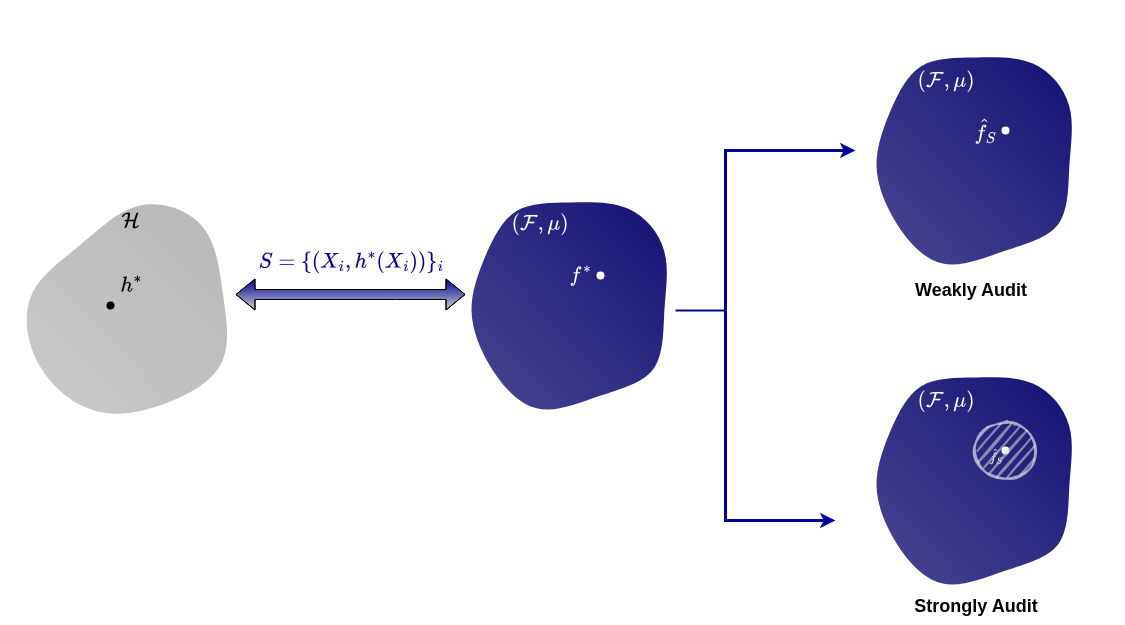}
    \caption{Grey shapes represent objects unknown to the auditor, while blue shapes denote model classes known to the auditor. The model class under audit is unknown, whereas the strategic model class is given to the auditor. In weak auditing, the auditor seeks a model in the strategic class with the same group fairness value as the audited model. In strong auditing, the auditor aims to characterize the set of all models in the strategic class that share the same group fairness value.}\label{fig:limit}
\end{figure*}

\subsection{Contributions}

\textbf{A New Framework.}  Our work introduces a general framework for the auditing problem under strategic updates that applies to \textit{unknown} strategic classes. We shift the goal of the auditor from merely estimating the property value to also requiring the auditor to output a subset of the strategic class that preserves this property under audit (\textit{prospect class}, definition~\ref{def:prospects}). This generalizes the standard notion of manipulation-proof to settings involving broader forms of model updates.

1. \textbf{Universal auditor.} We propose a generic algorithm (Algorithm~\ref{algo}) framework that, given access to the property of interest, outputs an estimate of the property and the prospect class. Specifically, we define auditing losses for different properties: SP, learning error, learning stability, and robust risk. 

2. \textbf{Fairness audits and SP dimension.} We focus on statistical parity as our primary property of interest. We provide sample complexity guarantees for finite and infinite hypothesis classes. We introduce a new capacity measure called the SP dimension for weakly auditing infinite hypothesis classes (Definition \ref{def:auditype1mp}). We demonstrate its relationship to VC dimension, showing that VC dimension effectively captures a broader set of auditing tasks beyond traditional statistical learning frameworks. For multiple protected groups, we prove that any shattered set must include instances from distinct protected groups, providing new insights into multi-group auditing scenarios.

3. \textbf{Measuring coverage of prospect class.} We introduce the $\mu$-prospective ratio as a data-dependent measure to measure the coverage of the prospect class. For statistical parity, we provide concentration bounds on the estimation of the prospect ratio.

Finally, we numerically demonstrate that our framework yields good estimates of SP and prospect class for real-life datasets and multiple ML models. 

\section{Preliminaries}
In this section, we present the problem formulation and introduce illustrative examples of properties of ML models, leading to a formalization of the PAC auditing framework under model updates in two distinct settings (i.e., weak and strong auditing).

Let $\cX$ and $\cY$ be the input and output spaces of an ML model, respectively. The input data follows a distribution $\cD_{\cX}$. $\cD$ denotes the joint distribution over the product space $\cX \times \cY$ modeling the randomness of the black-box model under audit \footnote{Rather than assuming a marginal distribution over inputs with deterministic labels, we model the data using a joint distribution, which allows us to capture scenarios involving randomized predictors.}, and $\cF$ denotes the strategic class of models from $\cX$ to $\cY$. A property of the ML model is defined as a functional $\mu: \cF \times \cP \rightarrow \R$ that takes the model and the corresponding data-generating distribution to yield a real number, where $\cP$ is a class of generating distributions.

\paragraph{Different Distributional Properties of ML Models.}  Many such properties discussed previously have been studied in the literature. In the following paragraph, we highlight several examples, focusing in particular on those defined over multiple subpopulations via conditional distributions. Statistical parity is one such property, measuring disparities in positive prediction rates between protected groups.~\citep{feldman2015certifying}. 

\begin{definition}[Statistical Parity]\label{def:SP}
    Let $\{\cX_0, \cX_1\}$ denote a partition of the input space based on a binary protected attribute (e.g., gender, where $\cX_0$ corresponds to females and $\cX_1$ to males). 
    The statistical parity of a model $f \in \cF$ measures the discrepancy of its predictions with respect to the two protected groups $\cX_0$ and $\cX_1$, i.e. $\mu(f,\cD) \triangleq |\underset{x \sim \cD_{\cX}}{\prob} [ f(x) = 1 | x \in \cX_0   ] -  \underset{x \sim \cD_{|\cX}}{\prob}[ f(x) = 1 | x \in \cX_1]|$.
\end{definition}

A model is considered fair across protected groups if it achieves a small statistical parity value. Definition \ref{def:SP} can be extended to multiple protected groups by considering the maximum discrepancy between groups.

\textbf{Expected risk.}  Measures the expectation of prediction errors by a model $f$ on input-output pairs generated by $\cD$~\citep{mlbook}. For the class of binary classifiers,
$\mu(f,\cD) = \underset{(x,y) \sim \cD}{\prob}[f(x) \neq y]$. 

\textbf{Learning stability.} Measures the discrepancy in predictive performance across two environments.  For a model trained on a distribution $\cD_{\text{src}}$ and deployed on $\cD_{shift}$. We define its learning stability as $\mu(f, \cD_{\text{src}}, \cD_{\text{shift}}) \triangleq \Big| \underset{(x,y) \sim \cD_{\text{src}}}{\prob}[f(x) \neq y ] - \underset{(x,y) \sim \cD_{\text{shift}}}{\prob}[f(x) \neq y ] \Big|$. This definition is closely related to distribution shifts; however, distribution shift itself is a property of the data alone:  $\cD_{\text{src}} \neq \cD_{\text{shift}}$ \citep{ben2010theory,taori2020measuring}. Our definition evaluates how stable the model’s performance remains under such shifts. Details are given in Appendix~\ref{app:examples}.

\textbf{Robust Risk.} After deploying an ML model, we encounter inputs that are noisy or manipulated by an adversary. To ensure safety, it is necessary to verify that the ML model is robust against such perturbations of the input. Robustness is measured using robust risk and is further ensured by minimizing the robust risk (e.g., it works on adversarial ML ~\citep{adversarialml}). For any $x$ in $\cX$, let $\cU(x)$ denote the set of perturbations acting on input $x$, and let $\cD^*$ denote the resulting distribution after the true model deployment.  Robust risk is defined as $\mu(f, \cD, \cU) = \underset{(x,y) \sim \cD}{\E} \Big[\underset{z \in \cU(x)}{\sup} \mathds{1}_{f(z) \neq y}   \Big]$.

\paragraph{Algorithmic Auditing.} 

As discussed in the introduction and illustrated in Figure \ref{fig:auditmot} and \ref{fig:limit}, the auditor is given access to a strategic class $\cF$, takes as input a set of i.i.d samples and the desired property $\mu$, and outputs both an estimate of the property and a property-preserving class that we refer to as the prospect class. Formally, an auditing problem is defined as a quintuplet $\langle \cX, \cY, \cF,\cP, \mu, \ell \rangle$, where $f$ is the model under audit. $\ell$ is a loss that depends \textit{implicitly} on the property under audit $\mu$. 

Details on how the loss function depends on the audited property are provided in Appendix~\ref{app:examples}, where we define loss functions for the properties discussed above and show how our framework extends to broader auditing problems.  In the following, we make no assumptions about $f$. In particular, we do not assume knowledge of the model class. Indeed, $f$  may be a randomized machine learning algorithm whose behavior depends on an unknown randomization mechanism. In the next section, we introduce a novel auditing setting that not only evaluates existing models but also yields prospective machine learning models for future deployment (aka Prospect class, see Definition \ref{def:prospects}). We establish a formal connection between auditing properties of black-box models with prospective guarantees and agnostic learning of those properties from data within a strategic class $\cF$. 

\begin{figure*}[t]
    \centering
    \includegraphics[width= 0.7\textwidth]{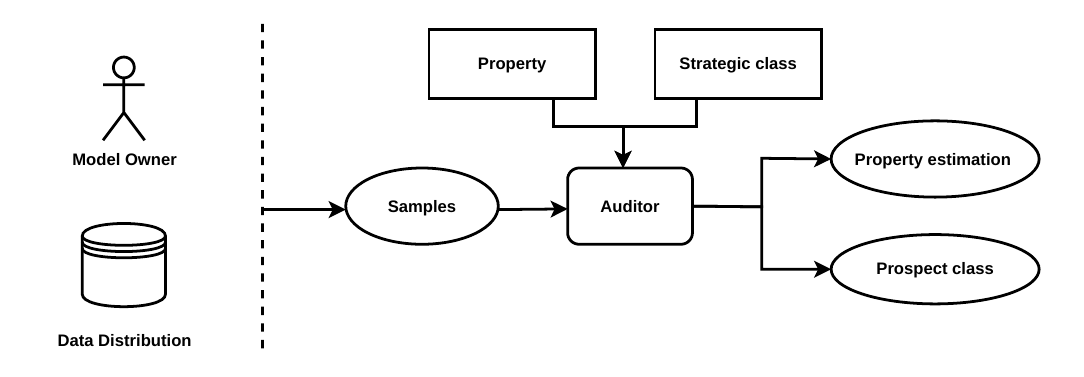}
    \caption{A schematic of black-box auditing with prospects.}\label{fig:auditmot}
\end{figure*}

\section{Auditing with Prospects}\label{sec:formulation}

During the auditing process, the model owner provides the auditor with a finite set of i.i.d samples labeled by a black-box model. Subsequently, the owner may wish to update their decision rule; this strategic update is communicated to the auditor. To accommodate such strategic updates, the auditor identifies the prospect class, a subclass of the strategic class that preserves the audited property. Agnostic auditing refers to the scenario where the pre-audit model is not an element of the strategic class.
The prospect class must include models that have the same property as the pre-audit model (defined as $\opt(\mu,\cD, \cF) = \min_{f \in \cF} |\mu_{\cD}(f) - \mu(\cD)|$). Realizable auditing corresponds to the scenario where the pre-audit model belongs to the strategic class, and in this case $\opt(\mu,\cD, \cF) =0$. This motivates Definition~\ref{def:auditype1mp}, where in weak auditability, finding one model in the prospect class suffices. For a fixed property $\mu$, i.i.d sample $S$ and a class $\cF$, let $\opt(\mu,S, \cF) \triangleq \min_{f \in \cF} |\mu_{S_{\cX}}(f) - \mu(S)|$. Finally, for an auditing algorithm $\cA$ taking as input an i.i.d set $S$ of size $m$, we define the audit risk for $\mu$ as: 
\begin{align}\label{eq:audit_risk}
\cE_m(A[S], \mu) \triangleq | \mu_{\cD}(\cA[S]) - \opt(\mu,\cD, \cF) |\,.\tag{Audit Risk}
\end{align}

\begin{definition}[Weakly $\mu$-auditable class]\label{def:auditype1mp}
    We call an algorithm $\cA$ to be  $(\epsilon, \delta)$-weak auditor for a given auditing problem $\langle \cX, \cY, h, \cF,\cP, \mu\rangle$ when it uses a sample set $S$ of size $m$ sampled from any $\cD \in \cP$ to yield a model $\cA[S]$ and an estimate $\mu_{\cD}(\cA[S])$ satisfying:
    \begin{align*}
        \underset{S \sim \cD^m}{\prob}  \Bigl [  \cE_m(A[S], \mu) \geq \epsilon \Bigl] \leq \delta\,,
    \end{align*}
    where $\epsilon, \delta \in (0,1)$ and $m$ is a bounded function of the problem parameters and $(1/\epsilon, 1/\delta)$.     
    We say that a strategic class $\cF$ is weakly $\mu$-auditable if for all $(\epsilon, \delta) \in (0,1)^2$, there exists an $(\epsilon, \delta)$-weak auditor.
\end{definition}

\setlength{\textfloatsep}{6pt}
\begin{algorithm}[H]
    \caption{EPO Oracle}\label{algo}
    \begin{algorithmic}[1]
        \Require 
        Training dataset $\mathcal{D} = \{(x_i, y_i)\}_{i=1}^{m}$, Strategic class $\cF$, Property to audit $\mu$
        \Ensure Estimate of the property $\hat{\mu}$, prospective model $\hat{f}$
        \State Define empirical risk for $\mu$: $\cE_m(f,\mu)$ 
        \State Use an ERM oracle to solve the optimization problem: $ \hat{f} = \arg\min_{f \in \cF} \cE_m(f,\mu) $
        \State \Return $\hat{f}$ and $\hat{\mu}_m(\hat{f})$
    \end{algorithmic}
\end{algorithm}

\paragraph{From ERM to Auditors: Auditing with loss functions.}Algorithm~\ref{algo} proposes a generic framework for using ERM oracles to weakly audit any property $\mu$ of an ML model $f$ given a strategic class $\cF$. Broadly, the intuition is that if we can define a loss function to audit each of the properties, we can use the samples collected from the black-box model $f$ to learn a prospective model $\hat{f} \in \cF$. One can use any off-the-shelf ERM solver, like SGD, Adam, ADMM etc., in Line 2.

The following definition extends weak auditability by saturating the strategic class and identifying all models within the prospect class that satisfy the audited property.

\begin{definition}[Strongly $\mu$-auditable class]\label{def:auditype2mp}
    We call an algorithm $\cA$ to be  $(\epsilon, \delta)$-strong auditor for a given auditing problem $\langle \cX, \cY, \cF,\cP, \mu\rangle$ when it uses a sample set $S$ of size $m$ sampled from any $\cD \in \cP$ to yield a class of models $\cA[S] \subseteq \cF$ and an estimate $\mu_{\cD}(\cA[S])$ satisfying:\\
(i) \textbf{Correctness:} $$\underset{S \sim \cD^m}{\prob}  \Bigl [ \sup_{f \in  A[S]} \cE_m(f,\mu)  \geq \epsilon \Bigl] \leq \delta\,.$$\\
(ii) \textbf{Completeness:} $$\underset{S \sim \cD^m}{\prob}  \Bigl [ \inf_{f \in {A^C[S]}}  \cE_m(f,\mu)  \leq \epsilon 
   \Bigl] \leq \delta\,.$$\\
    $A[S] \subseteq \cF$ is a subclass of models that achieve a small error on the training set $S$. 
    We say that a hypothesis class $\cF$ is strongly $\mu$-auditable if for all $(\epsilon, \delta) \in (0,1)^2$, there exists an $(\epsilon, \delta)$-strong auditor.
\end{definition}

Strong auditability is governed by two key conditions: correctness, which requires all models in the prospect class to preserve the same property as the pre-audit model, and completeness, which ensures the hypothesis class has been fully exhausted by including all models that share the property $\mu$ with the pre-audit model. The resulting prospect class serves as the set of potential post-audit models \footnote{The model owner may implement a majority vote among hypotheses in the prospect class as the new post-audit model; this can be referred to as improper auditing.}.

\section{Auditing Statistical Parity}\label{sec:mainres}
In this section, we focus on characterizing the problem of auditing statistical parity. In the black-box setting, the auditor can only access a pool of labeled instances by $f$, which we denote $S$ (i.e., $S \subseteq \cX \times \cY$). This is equivalent to assuming access to an empirical distribution $\hat{\cD}_{S}$ that encodes the discriminative behavior of the black box model with respect to the fixed protected groups. From now on, we denote the protected groups by $\cX_0$ and $\cX_1$, which form a partition of the input space $\cX$. Similarly, for a finite sample $S$, let $S_0 \subseteq \cX_0$ denote samples from the first protected group and $S_1 \subseteq \cX_1$ samples from the second protected group.

\subsection{Weakly Auditable Classes}
    We characterize the complexity using the minimum number of samples required for each protected group ($m_0$ and $m_1$). This approach differs from existing methods that rely on the total sample size ($m = m_0 + m_1$), which requires assuming specific proportions between protected groups \cite{yan2022active}. By considering the minimum samples needed for each group independently, our method remains valid regardless of how the probability mass is concentrated across protected groups, making it more robust to group imbalances in real-world scenarios. To rigorously define PAC weakly auditing of statistical parity, we denote by $\cF \subseteq 2^{\cX}$ the strategic class (a generalization of the hypothesis class). With this structure, we present the strategic lemma linking EPO solvers over pairs of points from different protected groups to uniform convergence in statistical parity.

\begin{restatable}[Strategic Lemma]{lemma}{strategiclemma}\label{SP-strategiclemma}
Let $\epsilon, \delta \in (0,1)$, $m: (0,1)^2 \to \N$. Suppose that the following holds:
\begin{itemize}
\item \textbf{Estimation accuracy.} $\cA$ outputs $f_S$ from $\cF$: 
    $\underset{S \sim \cD^m}{\prob} \Bigl[| \hat{\mu}_S (f_S) - \widehat{OPT}(S, \cF)| > \frac{\epsilon}{3} \Bigl] < \frac{\delta}{2}.$
\item \textbf{Uniform convergence.}  $\cF$ verifies 
    $\underset{S \sim \cD^m}{\prob} \Bigl[\exists f \in \cF, | \mu_{\cD} (f) - \hat{\mu}_S(f)| > \frac{\epsilon}{3} \Bigl] < \frac{\delta}{2}. $
\end{itemize}
Then, $\cA$ is $(\epsilon, \delta)$-weak auditor for statistical parity with a sample complexity.
\end{restatable}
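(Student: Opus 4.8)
The plan is to derive the weak-auditability guarantee from the two hypotheses by a standard triangle-inequality decomposition of the audit risk, combined with a union bound over the two failure events. Concretely, I would start by writing
\begin{align*}
\cE_m(\cA[S],\mu) = |\mu_{\cD}(f_S) - \opt(\mu,\cD,\cF)|
\end{align*}
and insert the empirical quantities $\hat\mu_S(f_S)$ and $\widehat{\opt}(S,\cF)$ as intermediate terms, so that
\begin{align*}
|\mu_{\cD}(f_S) - \opt(\mu,\cD,\cF)|
\le |\mu_{\cD}(f_S) - \hat\mu_S(f_S)|
+ |\hat\mu_S(f_S) - \widehat{\opt}(S,\cF)|
+ |\widehat{\opt}(S,\cF) - \opt(\mu,\cD,\cF)|.
\end{align*}
The second term is below $\epsilon/3$ with probability at least $1-\delta/2$ by the estimation-accuracy hypothesis, and the first term is below $\epsilon/3$ on the uniform-convergence event (which also has probability at least $1-\delta/2$).

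The slightly more delicate term is the third one, $|\widehat{\opt}(S,\cF) - \opt(\mu,\cD,\cF)|$: I would bound it by $\epsilon/3$ using only the uniform-convergence hypothesis, via the elementary fact that the infimum is $1$-Lipschitz with respect to the sup-norm perturbation of the objective. Writing $\opt(\mu,\cD,\cF) = \inf_{f\in\cF}|\mu_{\cD}(f) - \mu(\cD)|$ and $\widehat{\opt}(S,\cF) = \inf_{f\in\cF}|\hat\mu_S(f) - \mu(S)|$, and noting that on the uniform-convergence event $\sup_{f\in\cF}|\mu_{\cD}(f) - \hat\mu_S(f)| \le \epsilon/3$ (and that the plug-in estimate of $\mu(\cD)$ itself concentrates within the same event, since $\mu(\cD)$ is the SP value of the black box and $\mu(S)$ its empirical counterpart — this uses that the protected-group membership is observed), each term $|\mu_{\cD}(f) - \mu(\cD)|$ differs from its empirical analogue by at most $\epsilon/3$ uniformly in $f$, and hence so do their infima. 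This yields $|\widehat{\opt}(S,\cF) - \opt(\mu,\cD,\cF)| \le \epsilon/3$ on that event.

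Combining, on the intersection of the two good events — which by a union bound has probability at least $1-\delta$ — all three terms are at most $\epsilon/3$, so $\cE_m(\cA[S],\mu) \le \epsilon$; equivalently $\prob_{S\sim\cD^m}[\cE_m(\cA[S],\mu)\ge\epsilon]\le\delta$, which is exactly the definition of an $(\epsilon,\delta)$-weak auditor. The sample complexity $m(\epsilon,\delta)$ is then inherited as the larger of the two sample sizes at which the estimation-accuracy and uniform-convergence hypotheses hold at scale $\epsilon/3$ and confidence $\delta/2$; for statistical parity this is where the per-group counts $m_0,m_1$ and the SP-dimension-based uniform convergence bound enter, but the lemma as stated only asks us to exhibit that such a finite $m$ exists.

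I expect the main obstacle to be the bookkeeping around the third term: one must be careful that $\mu(\cD)$ versus $\mu(S)$ (the black-box property value and its empirical estimate, which the auditor does not control and cannot ERM over) are also handled by the same concentration argument, rather than only the $f$-dependent part; making the constants line up so that the three $\epsilon/3$ pieces and the two $\delta/2$ pieces assemble cleanly is the only real care-point, and it is routine once the decomposition is set up. Everything else is the triangle inequality and a union bound.
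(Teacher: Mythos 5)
Your proposal is correct and follows essentially the same route as the paper's proof: the same three-term triangle-inequality decomposition through $\hat\mu_S(f_S)$ and $\widehat{\opt}(S,\cF)$, a union bound over the two hypothesis events, and the observation that $|\widehat{\opt}(S,\cF)-\opt(\mu,\cD,\cF)|>\epsilon/3$ forces a uniform-convergence violation (the $1$-Lipschitzness of the infimum). If anything, you are slightly more careful than the paper in flagging that the third term also involves the black-box value $\mu(\cD)$ versus its empirical counterpart $\mu(S)$, which the paper's implication silently absorbs into the uniform-convergence event.
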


Lemma~\ref{SP-strategiclemma} establishes that uniform convergence over the property, combined with empirical audit risk minimization, guarantees weak auditing PAC auditing of that property. This lemma is risk-free, unlike the one used to derive VC bounds, which uses the intermediate of a loss function that we do not consider here. The proof is given in Appendix~\ref{app:strategiclemma}.

\paragraph{Finite Strategic Class.}

We first present a result on the sample complexity for weakly auditable finite classes.
\begin{restatable}[Agnostic weak auditability]{theorem}{theoremweakfinite}\label{SPauditfiniteH}
If $\cF$ is a finite hypothesis class, than $\cF$ is weakly auditable, with respect to statistical parity, for any distribution on $\cX \times \cY$ with a sample complexity $\cO \Bigl( \Bigl\lceil \frac{18}{ \epsilon^2} \log \frac{8 |\cF|}{\delta} \Bigl \rceil  \Bigl)$.
\end{restatable}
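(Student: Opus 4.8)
The plan is to invoke the Strategic Lemma (Lemma~\ref{SP-strategiclemma}), which reduces weak auditability to two ingredients: an estimation-accuracy guarantee for the EPO oracle's output, and a uniform-convergence bound for statistical parity over $\cF$. Since the EPO oracle (Algorithm~\ref{algo}) returns $f_S = \argmin_{f \in \cF} |\hat{\mu}_S(f) - \mu(S)|$ exactly, the estimation-accuracy condition holds trivially once uniform convergence is established: indeed, if every $f \in \cF$ has $|\mu_{\cD}(f) - \hat{\mu}_S(f)| \le \epsilon/3$, then $|\widehat{\opt}(S,\cF) - \opt(\mu,\cD,\cF)| \le \epsilon/3$ as well, and the empirical minimizer's empirical value is within $0$ of $\widehat{\opt}(S,\cF)$. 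So the entire argument reduces to proving the uniform-convergence clause with the claimed $m$.

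First I would write statistical parity as a difference of two conditional means. For a fixed $f \in \cF$, let $p_j(f) = \prob_{x \sim \cD_{\cX}}[f(x) = 1 \mid x \in \cX_j]$ for $j \in \{0,1\}$, so that $\mu_{\cD}(f) = |p_0(f) - p_1(f)|$, and let $\hat p_j(f)$ be the corresponding empirical frequency computed from the $m_j = |S_j|$ samples falling in group $j$. By the reverse triangle inequality, $|\mu_{\cD}(f) - \hat{\mu}_S(f)| \le |p_0(f) - \hat p_0(f)| + |p_1(f) - \hat p_1(f)|$. Then I would apply Hoeffding's inequality to each group separately: conditioned on the group sizes $m_0, m_1$, for a single $f$ we have $\prob[|p_j(f) - \hat p_j(f)| > \epsilon/6] \le 2\exp(-2 m_j (\epsilon/6)^2) = 2\exp(-m_j \epsilon^2/18)$. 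A union bound over the $|\cF|$ hypotheses and over the two groups gives a failure probability at most $4|\cF| \exp(-\min(m_0,m_1)\,\epsilon^2/18)$, which is at most $\delta/2$ once $\min(m_0,m_1) \ge \frac{18}{\epsilon^2}\log\frac{8|\cF|}{\delta}$; hence it suffices to take each group's sample count — and in particular $m$ as stated — of order $\lceil \frac{18}{\epsilon^2}\log\frac{8|\cF|}{\delta}\rceil$.

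The main obstacle, and the only place requiring care, is the handling of the random group sizes $m_0$ and $m_1$: the sample $S$ is drawn i.i.d.\ from $\cD$, so $m_j$ is a binomial random variable rather than a fixed quantity, and if one group has vanishing mass the conditional frequencies are ill-defined. This is exactly the motivation, flagged in the text, for stating the complexity in terms of the per-group counts $m_0, m_1$; I would condition on the event that both $m_0$ and $m_1$ meet the required threshold (which the definition's "bounded function of the problem parameters" allows us to assume by taking $m$ large enough relative to the minority-group probability, or by phrasing the guarantee per-group as the paper does) and then run the Hoeffding-plus-union-bound argument above on that event. Once that bookkeeping is in place, collecting the two $\delta/2$ failure terms via a final union bound and appealing to Lemma~\ref{SP-strategiclemma} yields that $\cF$ is $(\epsilon,\delta)$-weakly auditable with the stated sample complexity $\cO\bigl(\lceil \frac{18}{\epsilon^2}\log\frac{8|\cF|}{\delta}\rceil\bigr)$.
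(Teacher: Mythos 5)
Your proposal is correct and follows essentially the same route as the paper: reduce to SP-uniform convergence via the Strategic Lemma (the exact empirical minimizer makes the estimation-accuracy clause trivial), apply a per-hypothesis concentration bound, and union-bound over the finite class $\cF$. The only cosmetic difference is that the paper invokes its two-sample ``Discrepancy Chernoff'' bound (Lemma~\ref{lemma:discchernoff}) together with Claim~\ref{claiminterprob} to pass from $\frac{m_0 m_1}{m_0+m_1}$ to $\min(m_0,m_1)$, whereas you split the statistical-parity deviation into two one-sample Hoeffding bounds at level $\epsilon/6$ — both yield the stated $\cO\bigl(\lceil \frac{18}{\epsilon^2}\log\frac{8|\cF|}{\delta}\rceil\bigr)$ per-group complexity.
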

Theorem~\ref{SPauditfiniteH} offers an intuitive understanding of the auditing hardness, highlighting in the case of a finite class that auditing SP using weak auditing framework requires more sample complexity than both learning and reconstructing the model. The proof is given in Appendix~\ref{prooflem1}.

\textbf{Infinite Hypothesis Class.} The finiteness of $\cF$ poses inherent limitations for auditing, as such a hypothesis class may not contain models that satisfy the desired post-audit stakes. This may result in a singleton prospect class (pre-audit model), rendering the auditing process ineffective. Therefore, we extend our analysis to infinite strategic classes. Here, VC dimension falls short in tightly measuring the complexity of SP auditing. We illustrate this in the following example:
\begin{example}\label{example:3pt2groups}
    Consider a set $\cX \subseteq \R^2$ where the protected attribute is 'gender' and the second feature is 'age'. In the context of classification in $\R^2$ using linear classifiers (Figure~\ref{fig:spexample}), it is known that the VC dimension of this class is three. However, if the three points that can be shattered by the class of linear classifiers share the same protected attribute (gender), they are collinear and cannot be shattered.
\end{example}
Next, we define group traces of the strategic class $\cF$ with respect to the protected groups $\cX_0$ and $\cX_1$.
\begin{definition}[Group-Traces of a strategic class]\label{def:sptraces}
    Let $\cX$ denote an uncountable space, $\cF$ a set of subsets of $\cX$, and $S$ a finite subset of $\cX$. The group-traces of $\cF$ in the protected groups of $S = S_0 \cup S_1$, denoted by $ \Delta_{\cF}^{SP}(S)$, is defined as $\Delta_{\cF}^{SP}(S_0, S_1) \triangleq \Big \{ (A_0,A_1)| A_0 \subseteq S_0, A_1 \subseteq S_1, \exists c \in \cF, A_0 = c \cap S_0, A_1 = c \cap S_1  \Big \}$.
\end{definition}

Intuitively, the set of group-traces of a concept class $\cF$ represents all possible discriminatory behaviors within $\cF$ with respect to the protected groups.

\begin{figure}[t!]
    \centering
    \includegraphics[width= 0.5\textwidth]{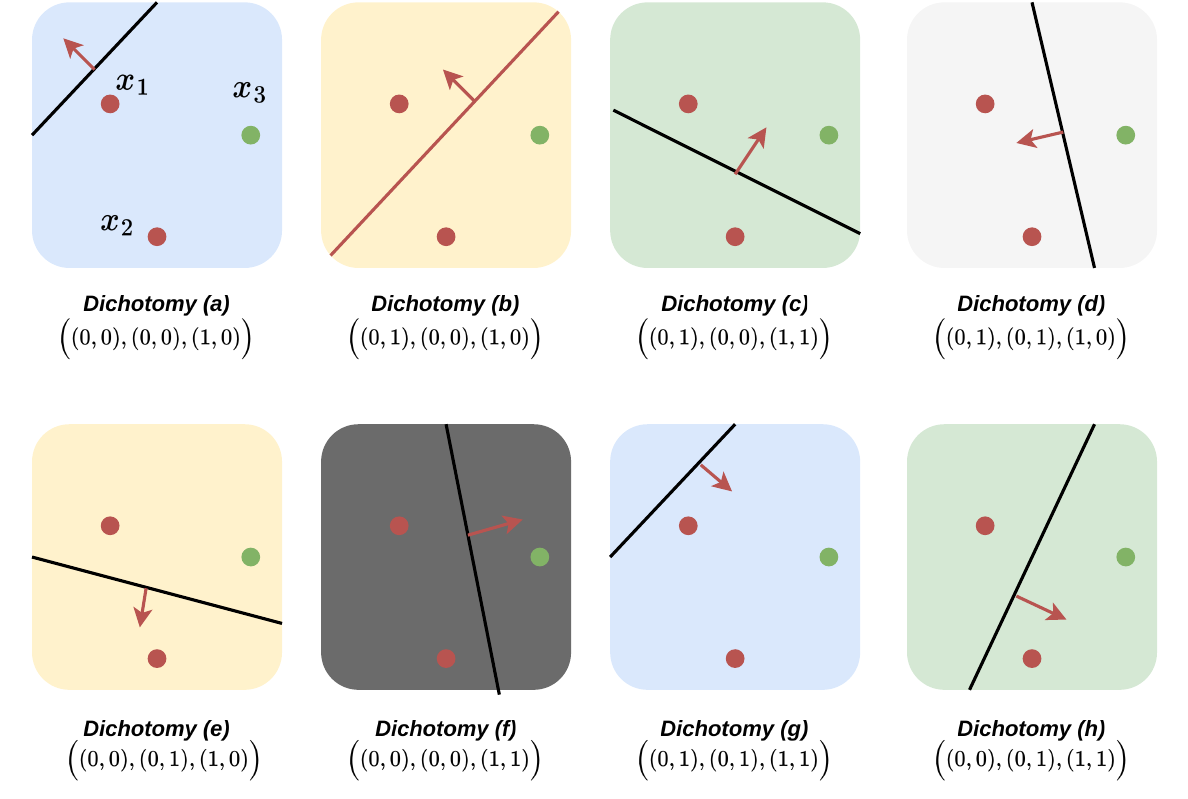}
    \caption{Illustration of SP dimension for the case of non-homogeneous classifiers in $\R^2$}\label{fig:spexample}
\end{figure}

\begin{lemma}\label{lemma:welldef}
    For any finite and non-empty sample sets $S_0$ and $S_1$ drawn from the first and second protected groups, respectively, the set $\Delta_{\cF}^{SP}(S_0, S_1)$ is well-defined.
\end{lemma}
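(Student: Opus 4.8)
The statement asserts that $\Delta_{\cF}^{SP}(S_0, S_1)$ is well-defined whenever $S_0, S_1$ are finite and non-empty. The plan is to unpack what ``well-defined'' must mean for a set defined by comprehension over an uncountable $\cF$: the only thing that could go wrong is that the defining predicate fails to pick out an honest set (e.g., it lives in a proper class, or is empty when it should not be, or the component sets $A_0, A_1$ are not themselves legitimate finite objects). So the first step I would take is to observe that since $S_0 \subseteq \cX_0$ and $S_1 \subseteq \cX_1$ are finite, the collections $2^{S_0}$ and $2^{S_1}$ are themselves finite sets, hence $2^{S_0} \times 2^{S_1}$ is a finite set; the group-trace $\Delta_{\cF}^{SP}(S_0, S_1)$ is by definition a subcollection of this finite set, carved out by the existential condition $\exists c \in \cF,\ A_0 = c \cap S_0,\ A_1 = c \cap S_1$. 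By the axiom schema of separation (specification), any subcollection of a set defined by a first-order predicate is again a set, so $\Delta_{\cF}^{SP}(S_0, S_1)$ is a well-defined finite set, regardless of the cardinality of $\cX$ or $\cF$.

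\textbf{Second step.} I would then argue non-triviality / non-emptiness, since ``well-defined'' in this context plausibly also means the object is non-degenerate. Here the hypothesis that $S_0$ and $S_1$ are non-empty matters: fix any $c \in \cF$ (the strategic class is assumed non-empty throughout), and set $A_0 := c \cap S_0$, $A_1 := c \cap S_1$. These are genuine subsets of $S_0$ and $S_1$ respectively — each is a finite set, and in particular the pair $(A_0, A_1)$ is a legitimate element of $2^{S_0} \times 2^{S_1}$. Hence $(c \cap S_0, c \cap S_1) \in \Delta_{\cF}^{SP}(S_0, S_1)$, so the group-trace is non-empty. One can also note that membership of a pair in the group-trace does not depend on the particular $c \in \cF$ witnessing it, only on the induced partition pattern, which confirms the definition is independent of representative choices.

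\textbf{Third step and main obstacle.} The genuinely subtle point — and the one I expect to require the most care — is measurability: for the group-trace to be useful in the PAC arguments that follow (uniform convergence, concentration of the prospect ratio), one needs that the event ``the sample $S$ realizes a given trace pattern $(A_0, A_1)$'' is a measurable subset of $(\cX_0 \times \cX_1)^{|S_0| + |S_1|}$, and that $|\Delta_{\cF}^{SP}(\cdot)|$ is a measurable function of the sample. This is exactly the classical ``well-behaved / image-admissible'' caveat in VC theory, and I would handle it by either (a) imposing the standard mild measurability assumption on $\cF$ (each $c$ is a measurable subset of $\cX$, and $\cF$ is image-admissible Suslin), under which each of the finitely many constraints $\{x \in c\}$ is measurable and finite Boolean combinations thereof remain measurable; or (b) noting that for the combinatorial statements in this section (Definition~\ref{def:sptraces} and the SP dimension) only the set-theoretic well-definedness of the finite object $\Delta_{\cF}^{SP}(S_0,S_1)$ for \emph{each fixed} $S_0, S_1$ is needed, and deferring measurability to where probabilities over $S$ are actually taken. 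I would state the lemma's proof at the level of (b) — finiteness plus separation plus non-emptiness of $\cF$ — and flag the measurability convention once, globally, so that the later probabilistic lemmas inherit it. The main obstacle is thus less a hard argument than correctly pinning down which of several reasonable readings of ``well-defined'' is intended and making the minimal hypotheses explicit.
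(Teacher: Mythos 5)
Your reading of ``well-defined'' --- set-theoretic existence via separation, finiteness of $2^{S_0}\times 2^{S_1}$, non-emptiness witnessed by any single $c\in\cF$, plus a measurability caveat --- is internally sound, but it proves a different and essentially vacuous statement from the one the paper actually extracts from this lemma. The paper's own proof establishes a substantive combinatorial fact: for a class $\cF$ with $\VC(\cF)=d$, no set of $d$ points shattered by $\cF$ can lie entirely inside one protected group. The argument is by contradiction: if a shattered $S\subseteq\cX_0$ realized all $2^d$ dichotomies, then (using that $\cX_0,\cX_1$ partition $\cX$) each realizing concept could be extended both to include and to exclude an arbitrary $x'\in\cX_1$, yielding $2^{d+1}$ dichotomies on $S\cup\{x'\}$ and contradicting $\VC(\cF)=d$. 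That is the content invoked downstream: the example computing the SP dimension of linear classifiers in $\R^2$ cites Lemma~\ref{lemma:welldef} precisely to force at least one point of each group into the shattered set, and the proof of Proposition~\ref{prop:infinitevcnotsp} derives its bound on $\max(|S_0|,|S_1|)$ ``by the result in Lemma~\ref{lemma:welldef}.'' None of this follows from your argument.

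So the gap is not a broken step but a missed target: under the hypothesis as literally stated (``$S_0$ and $S_1$ non-empty''), your version is true for trivial reasons, whereas what the surrounding development needs is the claim that sets shattered by $\cF$ are automatically forced to meet both protected groups. If your proof is to support Definition~\ref{def:spdim} and Proposition~\ref{prop:infinitevcnotsp}, you must add that shattering argument. Two remarks if you do: the paper's extension step implicitly assumes $\cF$ is closed under the modifications $c\mapsto c\cup\{x'\}$ and $c\mapsto c\setminus\{x'\}$, which does not hold for an arbitrary concept class and deserves an explicit hypothesis; and your measurability discussion, while not wrong, is orthogonal to what the lemma is used for here and is better placed with the uniform-convergence proofs where probabilities over $S$ are actually taken.
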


\begin{proof}
     To establish the result in Lemma~\ref{lemma:welldef}, it is sufficient to show that, for any concept class $\cF$ of $\text{VC}(\cF) = d$, and for any shattered set $S = \{x_1, \cdots, x_d\}$ shattered by $\cF$, not all elements of $S$ can belong to the same protected group. By the definition of VC dimension, the total number of dichotomies $u$ for the sample $S$ is $2^d$.  We assume by contradiction that all elements of $S$ belong to a single protected group. Without loss of generality, we assume that $S \subseteq \cX_0$. Let $x'$ be any point from $\cX_1$.
     For all $i$ in $[2^d]$, let $c_i$ denote the concept from $\cF$ that realizes the dichotomy $u_i$. Since $\cX_0$ and $\cX_1$ form the set of components of $\cX$ ($\cX_0 \cap \cX_1 = \emptyset$), each $c_i$ can be extended to $\Tilde{c^0_i}$ and $\Tilde{c^1_i}$, such that $x' \in \Tilde{c^0_i}$ and $x' \notin \Tilde{c^1_i}$. 

    Hence, $\cF$ realizes $2^{d+1}$ dichotomies over $S \cup \{x'\}$. In other words, $S \cup \{x'\}$ shatters $\cF$. This is a contradiction because $\text{VC}(\cF) =d$.
 \end{proof}

\begin{definition}[SP dimension]\,\label{def:spdim}
We say that a sample $S$ SP-shatters by $\cF$ when $|\Delta^{\SP}_{\cF}(S_0,S_1)| = 2^{|S|} + |S| - 2^{|S_0|} - 2^{|S_1|}$. 
The SP dimension of a class $\cF$ of subsets of $\cX$ is 
    \begin{equation*}
        \SP(\cF) \triangleq \underset{|S|: S = S_0\cup S_1}{\max} \log_2 |\Delta^{\SP}_{\cF}(S_0,S_1)| \,.
    \end{equation*}
\end{definition}

In contrast to \cite{yan2022active}, whose approach relies on pre-audit model reconstruction via a teaching set and is characterized by classical learning-theoretic complexities (e.g., the disagreement coefficient and VC dimension), the SP dimension of a concept class $\cF$  captures only those group-wise dichotomies that exhibit distinct discriminatory behaviors. It achieves this by quotienting out redundant symmetries between protected groups that induce equivalent discrimination patterns within $\cF$.  

\begin{example}
    As illustrated in Figure \ref{fig:spexample}, for non-homogeneous linear classifiers in $\R^2$, the VC dimension is $3$, which implies $2^3 = 8$ possible dichotomies. Given Lemma \ref{lemma:welldef} requiring at least one point from each protected group to be in the shattered set, the figure depicts this configuration. The same-colored squares in Figure~\ref{fig:spexample} illustrate the same behavior of dichotomies; Consider the green square as an example: it demonstrates a specific behavior with respect to protected groups by assigning a positive label to only one point from the first protected group. This constraint reduces the total number of valid dichotomies from 8 to 5. This is equivalent to computing SP dimension using definition \ref{def:spdim}: $2^{\SP(\cF_2)} = 2^3 + 3 - 2^2 - 2^1 = 5$.
\end{example}

\begin{theorem}[Quantitative characterization]\label{theorem:spbounds}
    For any concept class $\cF$, the minimum number of samples $ m(\cF, \epsilon, \delta)>0$ required to weakly audit SP is lower bounded by $\Omega\left(\frac{\SP(\cF)}{\epsilon^2}\right)$, while Algorithm~\ref{algo} requires $\cO \Big( \frac{32}{\alpha (1 - \alpha) \epsilon^2} \max\{ \log \frac{2}{\delta}, 2 \SP(\cF) \log \frac{32e}{\epsilon^2}  \} \Big)$ samples. Here, $\alpha \in(0,1)$ is the ratio of samples from two protected groups. 
\end{theorem}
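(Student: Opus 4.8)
\textbf{Proof proposal for Theorem~\ref{theorem:spbounds}.}

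The plan is to prove the two bounds separately, with the upper bound proceeding through the Strategic Lemma (Lemma~\ref{SP-strategiclemma}) and the lower bound through a standard information-theoretic argument adapted to the SP dimension.

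For the \emph{upper bound}, I would invoke Lemma~\ref{SP-strategiclemma}, which reduces weak auditability to two conditions: (i) an estimation-accuracy guarantee for Algorithm~\ref{algo}, which is immediate since the EPO oracle by construction returns $f_S = \arg\min_{f\in\cF}|\hat\mu_S(f) - \hat\mu(S)|$, so $|\hat\mu_S(f_S) - \widehat{\mathrm{OPT}}(S,\cF)| = 0$; and (ii) a uniform convergence bound $\prob_{S\sim\cD^m}[\exists f\in\cF,\ |\mu_\cD(f) - \hat\mu_S(f)| > \epsilon/3] < \delta/2$. The core work is establishing (ii). Writing $\mu_\cD(f) = |p_0(f) - p_1(f)|$ with $p_j(f) = \prob_{x\sim\cD_\cX}[f(x)=1 \mid x\in\cX_j]$, and similarly for the empirical version, a triangle inequality reduces uniform control of $|\mu_\cD(f) - \hat\mu_S(f)|$ to uniform control of $|p_j(f) - \hat p_j(f)|$ over each group $j\in\{0,1\}$ separately — this is exactly why the per-group sample sizes $m_0 = \alpha m$, $m_1 = (1-\alpha)m$ appear, and why a $\max$ over the two group bounds shows up in the final sample complexity. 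Each per-group term is a one-sided (in fact two-sided) uniform deviation of empirical frequencies of the set system $\{c\cap\cX_j : c\in\cF\}$ restricted to $S_j$; I would bound it by the standard growth-function / VC-style argument (symmetrization, Sauer–Shelah, union bound over the shattering coefficient), but with the key twist that the relevant shattering coefficient is governed by the \emph{SP dimension} rather than the VC dimension, because the group-traces $\Delta^{\SP}_\cF(S_0,S_1)$ (Definition~\ref{def:sptraces}) are precisely the joint dichotomy patterns that matter, and $|\Delta^{\SP}_\cF| \le (em/\SP(\cF))^{\SP(\cF)}$ by a Sauer-type bound on the SP-shattering function. Setting the resulting tail $\le \delta/2$ and solving for $m$ yields the claimed $\cO\big(\frac{1}{\alpha(1-\alpha)\epsilon^2}\max\{\log\frac1\delta,\ \SP(\cF)\log\frac1{\epsilon}\}\big)$; the $\alpha(1-\alpha)$ factor enters because both $m_0 = \alpha m$ and $m_1=(1-\alpha)m$ must individually be large enough.

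For the \emph{lower bound} $\Omega(\SP(\cF)/\epsilon^2)$, I would exhibit a family of hard instances. Take a set $S = S_0\cup S_1$ that is SP-shattered by $\cF$ with $|S|$ roughly $\SP(\cF)$ (using Definition~\ref{def:spdim}), so $\cF$ realizes the maximal number of distinct group-dichotomy patterns on $S$. I would place the distribution $\cD_\cX$ uniformly (or in carefully chosen proportions) on these shattered points and construct a packing: a collection of distributions / labelings that are pairwise $\Omega(\epsilon)$-separated in statistical parity yet statistically indistinguishable from $o(\SP(\cF)/\epsilon^2)$ samples. Concretely, perturbing the label probabilities on $\Theta(\SP(\cF))$ shattered points by $\pm\Theta(\epsilon/\sqrt{\SP(\cF)})$ changes the SP value by $\Theta(\epsilon)$ for appropriate sign patterns (each point contributes independently to $p_0-p_1$), while distinguishing two such hypotheses requires estimating each coordinate to accuracy $\epsilon/\sqrt{\SP(\cF)}$, costing $\Omega(\SP(\cF)/\epsilon^2)$ samples by a Fano / Assouad argument or a direct hypothesis-testing (Le Cam) bound. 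The fact that $S$ is SP-shattered (not merely VC-shattered) is what guarantees $\cF$ actually contains the witnesses realizing every needed dichotomy pattern, so the hard family lies inside $\cF$ and the bound is on genuine weak auditability of $\cF$.

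The main obstacle I anticipate is the uniform convergence step for SP: statistical parity is \emph{not} an average of a bounded loss over i.i.d.\ pairs in the naive way — it is a difference of two conditional expectations — so one must be careful that conditioning on group membership does not break the i.i.d.\ structure needed for symmetrization. The clean way around this is to condition on the realized group sizes $(|S_0|,|S_1|)$, argue uniform convergence within each group conditionally (where the within-group samples are i.i.d.\ from $\cD_{\cX\mid\cX_j}$), and then take expectation over the binomial split, using concentration of $|S_j|$ around $\alpha m$, $(1-\alpha)m$; this is also where the robustness-to-group-imbalance remark in the text is cashed out. A secondary subtlety is verifying that the Sauer-type bound genuinely holds for the SP-shattering function as defined — i.e., that $|\Delta^{\SP}_\cF(S_0,S_1)|$ is controlled by $\SP(\cF)$ with the right polynomial-in-$m$ growth — which requires checking the combinatorial identity in Definition~\ref{def:spdim} behaves monotonically under restriction; Lemma~\ref{lemma:welldef} supplies the structural fact (every shattered set meets both groups) that makes this accounting consistent.
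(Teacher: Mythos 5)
Your upper bound follows essentially the same path as the paper: the Strategic Lemma reduces weak auditing to uniform convergence of the empirical SP functional, and uniform convergence is obtained by a double-sample symmetrization plus a Sauer-type bound on the group-trace growth function controlled by $\SP(\cF)$. The only cosmetic difference is that you split $|\mu_\cD(f)-\hat\mu_S(f)|$ by triangle inequality into two per-group frequency deviations, whereas the paper keeps the difference structure intact, uses a ``discrepancy Hoeffding'' bound with the $\tfrac{m_0m_1}{m_0+m_1}$ factor, and runs the permutation step with group-respecting swap permutations before applying Sauer's lemma once per group; both yield the same $\alpha(1-\alpha)$ dependence, and your conditioning-on-$(|S_0|,|S_1|)$ remark is exactly how the paper's per-group sample-size accounting is meant to be read.

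The lower bound is where you genuinely diverge. The paper uses a realizable-style averaging argument: it fixes a VC-shattered set $\cZ=\{x_0,\dots,x_d\}$, puts mass $\tfrac{1-8\epsilon}{2}$ on a distinguished point and $\tfrac{8\epsilon}{d-1}$ on the rest, draws a concept uniformly from the subclass consistent on $(x_0,x_1)$, and shows any auditor disagrees on at least $d/4$ unseen pairs in expectation, converting this to an SP error of $2\epsilon$ with probability at least $1/7$; the sample-size condition it extracts is of the form $\min(m_0,m_1)\gtrsim \min(d_0,d_1)/\epsilon$, i.e.\ a $1/\epsilon$ rate. You instead propose an Assouad/Le Cam packing with label probabilities perturbed by $\pm\Theta(\epsilon/\sqrt{\SP(\cF)})$ on an SP-shattered set, which is the standard agnostic construction and actually delivers the $\Omega(\SP(\cF)/\epsilon^2)$ rate stated in the theorem — something the paper's own deterministic-labeling construction does not obviously achieve. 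Your route requires the joint-distribution (randomized-label) modeling that the paper permits, and you would still need to verify that SP-shattering supplies witnesses in $\cF$ realizing every sign pattern needed for the packing (your own caveat), but as a match to the claimed $1/\epsilon^2$ dependence your argument is the more natural one.
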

Theorem~\ref{theorem:spbounds} characterizes the sample complexity bounds for weak auditability when the auditor is given $m$ samples, distributed such that $\alpha m$ samples come from the first group and $(1-\alpha)m$ samples from the second group. For this setting, the theorem establishes both necessary and sufficient conditions: a sample size of $\Omega(\frac{\SP(\cF)}{\epsilon^2})$ is necessary for weak auditability, while a sample size of $\cO \Big( \frac{32}{\epsilon^2} \max\{ \log \frac{2}{\delta}, 2 \SP(\cF) \log \frac{32e}{\epsilon^2}\}\Big)$ is sufficient.

\begin{corollary}[Qualitative characterization]
    A concept class $\cF$ is agnostic and realizably weak auditable if and only if $\SP(\cF)$ is finite.
\end{corollary}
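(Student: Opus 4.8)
The plan is to obtain the corollary directly from the quantitative bounds of Theorem~\ref{theorem:spbounds}, handling the two implications separately. Neither direction requires new machinery; the corollary is the "qualitative shadow" of the matching $\Omega/\cO$ bounds, so the work is in checking that those bounds really do certify (respectively rule out) the existence of an $(\epsilon,\delta)$-weak auditor with bounded sample complexity in the sense of Definition~\ref{def:auditype1mp}.

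For the ``if'' direction I would argue as follows. Assume $\SP(\cF)<\infty$. Fix an arbitrary pair $(\epsilon,\delta)\in(0,1)^2$ and an arbitrary $\cD\in\cP$; let $\alpha\in(0,1)$ be the resulting split of mass across $\cX_0,\cX_1$ (equivalently, let the auditor collect until it holds the per-group counts $m_0,m_1$ prescribed by Theorem~\ref{theorem:spbounds}). By that theorem, Algorithm~\ref{algo} is an $(\epsilon,\delta)$-weak auditor for statistical parity using $m=\cO\!\big(\tfrac{32}{\alpha(1-\alpha)\epsilon^2}\max\{\log\tfrac{2}{\delta},\,2\SP(\cF)\log\tfrac{32e}{\epsilon^2}\}\big)$ samples, which is a finite quantity precisely because $\SP(\cF)<\infty$, and the guarantee was established for every $\cD$ with no realizability assumption. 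Hence $\cF$ is agnostic weakly $\mu$-auditable; since realizable auditing is the special case $\opt(\mu,\cD,\cF)=0$, $\cF$ is a fortiori realizably weakly auditable.

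For the ``only if'' direction I would prove the contrapositive. Assume $\SP(\cF)=\infty$ and show $\cF$ is not even realizably weakly auditable (which, since agnostic auditability is strictly harder, also excludes the conjunction). Fix $\epsilon=\delta=\tfrac14$ and suppose toward a contradiction that some $(\epsilon,\delta)$-weak auditor $\cA$ exists with finite sample complexity $m_0$. Let $C$ be the hidden constant in the $\Omega(\SP(\cF)/\epsilon^2)$ lower bound of Theorem~\ref{theorem:spbounds}. Because $\SP(\cF)=\infty$, there is an integer $n>Cm_0\epsilon^2$ and a finite set $S=S_0\cup S_1$ with $\log_2|\Delta_{\cF}^{SP}(S_0,S_1)|\ge n$, i.e. a witness that $\SP(\cF)\ge n$. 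Plugging this SP-shattered configuration into the lower-bound construction underlying Theorem~\ref{theorem:spbounds} yields a family of (realizable) distributions on which any algorithm using fewer than $Cn/\epsilon^2>m_0$ samples incurs audit risk $\ge\epsilon$ with probability $>\delta$, contradicting the assumed guarantee of $\cA$. So no $(\epsilon,\delta)$-weak auditor with bounded sample complexity exists, and $\cF$ is not weakly auditable.

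The main obstacle is this last direction: one must make sure that the lower-bound instances extracted from an SP-shattered set in the proof of Theorem~\ref{theorem:spbounds} can be taken \emph{realizable} (so that defeating them genuinely defeats realizable --- hence also agnostic --- auditability), and that the $\Omega(\SP(\cF)/\epsilon^2)$ bound is uniform over the group proportion $\alpha$, so that choosing $n>Cm_0\epsilon^2$ really does force more samples than whatever $m_0$ the hypothetical auditor uses. Granting those two facts --- both already contained in the proof of Theorem~\ref{theorem:spbounds} --- the corollary follows immediately.
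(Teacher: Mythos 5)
Your proposal is correct and follows essentially the same route as the paper: the corollary is obtained as an immediate consequence of Theorem~\ref{theorem:spbounds}, with the $\cO(\cdot)$ upper bound giving weak auditability whenever $\SP(\cF)<\infty$, and the $\Omega(\SP(\cF)/\epsilon^2)$ lower bound, instantiated on arbitrarily large SP-shattered witness sets, ruling out any bounded-sample auditor when $\SP(\cF)=\infty$. The two caveats you flag (realizability of the lower-bound instances and uniformity over the group proportion) are indeed handled inside the paper's lower-bound construction, which labels the shattered set by concepts in $\cC_{0,1}\subseteq\cF$ and controls $\min(m_0,m_1)$ directly.
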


The proofs are given in Appendix~\ref{sec:agnostic_sp_mp_type1}. These bounds establish that weak auditability has a complexity measure similar to learnability. Specifically, the learning problem of a hypothesis class $\cF$ has a sample complexity upper bound of $O\left(\frac{\VC(\cF) + \log(1/\delta)}{\epsilon^2}\right)$ and a lower bound of $\Omega\left(\frac{\VC(\cF)}{\epsilon^2}\right)$. These bounds show a tight correspondence with the bounds for weakly auditing. Although, as discussed previously, the SP dimension is upper bounded by the VC dimension. Meaning that the SP dimension allows for fewer possible group behavioural dichotomies compared to classification dichotomies. The sample complexity, however, remains the same up to constant factors.

\begin{restatable}[Learnability vs. Auditability]{proposition}{rsvcsp}\label{prop:vcsp}
For any $\cF$ with finite VC and SP-dimensions,  $\VC(\cF) \geq \SP(\cF)$.
\end{restatable}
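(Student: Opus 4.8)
The plan is to reduce $\VC(\cF)\ge\SP(\cF)$ to an elementary inequality about the defining count of SP-shattering together with one structural fact. By Definition~\ref{def:spdim}, $\SP(\cF)=\log_2\big(2^{|S|}+|S|-2^{|S_0|}-2^{|S_1|}\big)$ where $S=S_0\cup S_1$ is an SP-shattered sample attaining the maximum, so the claim is equivalent to $2^{|S|}+|S|-2^{|S_0|}-2^{|S_1|}\le 2^{\VC(\cF)}$ for that $S$. I would obtain this by combining two observations. (a) \emph{Sub-maximality of the count.} By Lemma~\ref{lemma:welldef} the extremal $S$ contains at least one point from each protected group, so $|S_0|,|S_1|\ge 1$; applying $2^{m}\ge m+1$ to each exponent gives $2^{|S_0|}+2^{|S_1|}\ge(|S_0|+1)+(|S_1|+1)=|S|+2$, whence
\[
2^{|S|}+|S|-2^{|S_0|}-2^{|S_1|}\ \le\ 2^{|S|}-2\ <\ 2^{|S|}.
\]
(b) \emph{The extremal set is not too large:} $|S|\le\VC(\cF)$. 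Granting (b), the inequalities chain to $2^{\SP(\cF)}<2^{|S|}\le 2^{\VC(\cF)}$, i.e.\ $\SP(\cF)<\VC(\cF)$, which is even slightly stronger than the asserted bound.

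Step (b) is where the real work lies: one must show that SP-shattering a sample $S$ already forces $\cF$ to VC-shatter a set of size $|S|$. The intended route is that $\Delta^{\SP}_{\cF}$ only quotients out the $2^{|S_0|}-|S_0|$ and $2^{|S_1|}-|S_1|$ within-group symmetries, leaving every singleton behaviour on each group in play, so hitting the maximal count means $\cF$ realizes a family of group-wise dichotomies too rich to be carried by any proper down-set of shattered subsets of $S$, which forces $S$ itself to be shattered; the worked example is consistent with this, as the SP-dimension of affine halfplanes in $\R^2$ is attained at a genuinely VC-shattered triple. If a direct argument proves delicate, a fallback is a Pajor/Sauer--Shelah estimate on $S$ --- the family of $\cF$-shattered subsets of $S$ is a down-set of cardinality at least the number of realized group-wise dichotomies --- but extracting exactly $|S|\le\VC(\cF)$ from it, uniformly over the uncontrolled split $(|S_0|,|S_1|)$, again leans on the precise arithmetic in (a). This is the only genuinely non-routine step; the rest is the two-line inequality above.
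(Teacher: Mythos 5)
Your step (a) is correct but routine; the proof fails at step (b), which you yourself identify as ``where the real work lies'' and then do not carry out. The claim that an SP-shattered sample satisfies $|S|\le\VC(\cF)$ \emph{is} the content of the proposition, and neither of the routes you sketch closes it. In particular, the Sauer--Shelah fallback provably cannot work from the cardinality condition alone: the Sauer bound on $|\Delta_{\cF}(S)|$ is $\sum_{i\le d}\binom{|S|}{i}$ with $d=\VC(\cF)$, while for an unbalanced split $|S_0|=|S|-1$, $|S_1|=1$ the target count defining SP-shattering is $2^{|S|}+|S|-2^{|S|-1}-2=2^{|S|-1}+|S|-2$. Taking $|S|=2d$, the Sauer bound equals $2^{2d-1}+\tfrac12\binom{2d}{d}$, which already exceeds the target $2^{2d-1}+2d-2$ for every $d\ge 2$. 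So the counting condition is perfectly consistent with an SP-shattered sample of size roughly $2\,\VC(\cF)$, and no argument that uses only \emph{how many} group-wise dichotomies are realized on $S$ can deliver $|S|\le\VC(\cF)$. A correct proof along your lines would have to exploit \emph{which} dichotomies are realized (i.e., show that attaining the count $2^{|S|}+|S|-2^{|S_0|}-2^{|S_1|}$ forces $\cF$ to genuinely shatter $S$), and that structural step is entirely missing.

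The paper argues in the opposite direction and thereby never needs your step (b): it starts from the ordinary trace $\Delta_{\cF}(S)=\{c\cap S: c\in\cF\}$, whose $\log_2$-cardinality on shattered sets is what $\VC(\cF)$ measures, and observes that the group-trace of Definition~\ref{def:sptraces} only merges dichotomies inducing the same discriminatory behaviour across $(\cX_0,\cX_1)$, so that $|\Delta^{\SP}_{\cF}(S_0,S_1)|\le|\Delta_{\cF}(S)|$ on every sample and the inequality follows by taking maxima. That top-down ``quotient'' argument is itself only sketched in the paper, but it is a genuinely different route from your bottom-up one, and it sidesteps the size bound that your decomposition makes unavoidable. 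I would either adopt that route or supply a direct proof of the implication ``$S$ is SP-shattered $\Rightarrow$ $S$ is VC-shattered''; without one of these, your argument is incomplete.
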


To comprehend this inequality, one can begin with a shattered set and reduce the size of dichotomies leading to the same discriminative behavior. The set of dichotomies in the concept class $\cC$, when the property to audit is the $\cL^1_{\cD}$ error, are defined as $\Delta_{\cC}(S) = \{A: A \subseteq S, \exists c \in \cC, A =c \cap S\}$. Then, the inequality is a direct consequence of the definitions of VC and SP dimension derived for the aforementioned dichotomies. 

Proposition~\ref{prop:vcsp} shows that auditing statistical parity has a lower information complexity than learnability. This difference arises because the auditing problem takes advantage of the input space's structure, specifically the symmetry between protected groups. This symmetry makes it easier to analyze how models behave with respect to protected groups, compared to the more complex task of learning the model's behavior across the entire input space.

\subsection{Strongly Auditable Classes}\label{sec:stronglyrslts}
We begin by analysing the bounds for strongly auditing the finite hypothesis classes.

\begin{restatable}[Strongly auditable finite classes]{theorem}{stronglyfinite}\label{theorem:stronglyfinite}
    Any finite hypothesis class is strongly auditable with a sample complexity $\cO \Bigl(\max \Big\{\underbrace{ \textstyle \frac{1}{\epsilon^2} \log \frac{|\cF|}{\delta}}_\text{without prospect}, \underbrace{\textstyle \frac{1}{\log\frac{1}{\epsilon^2}} \log \frac{|\cF|}{\delta}}_{\text{with prospect}}   \Big\}\Bigl)$.
\end{restatable}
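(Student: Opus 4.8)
The plan is to establish both the correctness and completeness conditions of Definition~\ref{def:auditype2mp} via a two-sided uniform convergence argument over the finite class $\cF$, and then to separate the sample complexity into the two regimes appearing in the max. First I would fix the protected-group structure and recall that for a fixed $f \in \cF$, the empirical statistical parity $\hat\mu_S(f)$ concentrates around $\mu_\cD(f)$: each conditional positive-prediction rate is an empirical mean of i.i.d.\ Bernoulli variables within a protected group, so a Hoeffding bound followed by a union over the two groups gives $\prob[|\hat\mu_S(f) - \mu_\cD(f)| > t] \le 4\exp(-c\, t^2 m)$ for an absolute constant $c$ (absorbing the group proportions, as in Theorem~\ref{theorem:spbounds}). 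Taking a union bound over all $|\cF|$ hypotheses, with $m = \cO(\epsilon^{-2}\log(|\cF|/\delta))$ we get $\sup_{f\in\cF}|\hat\mu_S(f) - \mu_\cD(f)| \le \epsilon/2$ with probability at least $1-\delta$; likewise $|\widehat{\opt}(S,\cF) - \opt(\mu,\cD,\cF)| \le \epsilon/2$ since $\opt$ is an infimum of these quantities. This is the ``without prospect'' term.

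Next I would define the strong auditor's output as the empirical sublevel set $A[S] \triangleq \{ f \in \cF : |\hat\mu_S(f) - \widehat{\opt}(S,\cF)| \le \epsilon/2 \}$ — i.e.\ all hypotheses that are empirically within $\epsilon/2$ of the empirical optimum. On the good event above, every $f \in A[S]$ satisfies $\cE_m(f,\mu) = |\mu_\cD(f) - \opt(\mu,\cD,\cF)| \le |\mu_\cD(f) - \hat\mu_S(f)| + |\hat\mu_S(f) - \widehat\opt(S,\cF)| + |\widehat\opt(S,\cF) - \opt(\mu,\cD,\cF)| \le \epsilon/2 + \epsilon/2 + \epsilon/2$, which I would handle by starting from threshold $\epsilon/3$ throughout so the final bound is $\le\epsilon$; this yields \textbf{correctness}. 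For \textbf{completeness}, any $f \in A^C[S]$ has $|\hat\mu_S(f) - \widehat\opt(S,\cF)| > \epsilon/3$, and the same triangle-inequality chain run in reverse forces $\cE_m(f,\mu) > \epsilon/3 - 2\epsilon/3 \cdots$ — so again I would calibrate the empirical threshold (to roughly $2\epsilon/3$) and the accuracy level (to $\epsilon/6$) so that on the good event $\cE_m(f,\mu) > \epsilon$ cannot fail, giving $\inf_{f\in A^C[S]}\cE_m(f,\mu) > \epsilon$ deterministically on that event. Both conditions then hold off an event of probability $\le \delta$ after adjusting constants in the union bound.

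The remaining point is the ``with prospect'' term $\frac{1}{\log(1/\epsilon^2)}\log(|\cF|/\delta)$, which must be the binding constraint in some regime (small $\epsilon$). I expect this to be the main obstacle, and the idea is that when we only need the \emph{prospect class} to be accurate — not a pointwise $\epsilon$-approximation of every $\hat\mu_S$ — it suffices to decide, for each $f$, which side of the threshold it lies on, and the ``hard'' hypotheses are only those whose true audit risk lies in the narrow band $(\epsilon/3,\,2\epsilon/3)$; distinguishing $\opt$-level from band-level hypotheses is a gap-detection problem whose per-hypothesis failure probability decays like $\epsilon^{2m}$ rather than $e^{-\epsilon^2 m}$ when one exploits the multiplicative/relative form of the deviation (Bernstein or a Chernoff bound in the small-parameter regime, where the relevant Bernoulli rates themselves are $\cO(\epsilon^2)$-separated), so inverting $\epsilon^{2m} \le \delta/|\cF|$ gives $m = \cO(\log(|\cF|/\delta)/\log(1/\epsilon^2))$. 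I would make this precise by a case split on whether $\epsilon^2 \ge$ some constant (use the Hoeffding term) or $\epsilon^2$ is small (use the relative Chernoff term), and take the maximum of the two sample sizes; the delicate part is verifying that the relative-deviation bound applies to the \emph{difference} of two conditional rates rather than a single rate, which I would resolve by bounding each of the two group-conditional empirical rates separately and noting that on the failure-free event the estimated parity lands on the correct side of the decision threshold.
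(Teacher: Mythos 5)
Your correctness step (Hoeffding plus a union bound over the finite class, then a triangle inequality around the empirical optimum) is sound and is actually more general than what the paper does, since the paper proves this theorem in the \emph{realizable} setting with $A[S]$ taken to be the version space $\{f:\cL_S(f)=0\}$ and bounds the correctness failure by $|\cF|(1-\epsilon)^{m}$. The genuine gap is in completeness. With a purely additive accuracy guarantee $\sup_{f\in\cF}|\hat{\mu}_S(f)-\mu_{\cD}(f)|\le\gamma$ and an empirical threshold $t$, a thresholding rule can only certify $\sup_{f\in A[S]}\cE_m(f,\mu)\le t+2\gamma$ and $\inf_{f\in A^C[S]}\cE_m(f,\mu)\ge t-2\gamma$. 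Correctness at level $\epsilon$ forces $t+2\gamma\le\epsilon$, while completeness at the \emph{same} level $\epsilon$ forces $t-2\gamma\ge\epsilon$; these are incompatible for any $\gamma>0$. Your own calibration (threshold $2\epsilon/3$, accuracy $\epsilon/6$) yields only $\inf_{f\in A^C[S]}\cE_m(f,\mu)>\epsilon/3$, which does not meet condition (ii) of Definition~4 as stated. The paper sidesteps this additive trade-off by making membership in $A[S]$ a zero/nonzero dichotomy (empirical consistency) rather than a position relative to a threshold, so both failure events become multiplicative.

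Relatedly, the ``with prospect'' term $\log(|\cF|/\delta)/\log(1/\epsilon^2)$ — which is the actual content of the theorem beyond standard uniform convergence — is never derived in your proposal. In the paper it arises from a specific geometric event: a hypothesis with true risk at most $\epsilon$ is (in their accounting) excluded from the version space only when every one of the $m$ samples lands in its error region, an event of probability at most $\epsilon^{m}$, and inverting $|\cF|\epsilon^{m}\le\delta$ gives $m\ge\log(|\cF|/\delta)/\log(1/\epsilon)$. Your sketch of a relative Chernoff bound in the small-$\epsilon$ regime is a plausible heuristic for why a $\log(1/\epsilon)$ improvement might appear, but it does not produce an $\epsilon^{m}$-type per-hypothesis failure probability, and it is not clear that any analogous ``all samples fail'' event exists for the statistical-parity estimator, which is a difference of two group-conditional means rather than a single rate. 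To close the proof you would need either to adopt the paper's realizable version-space construction or to exhibit a concrete multiplicative failure event for your thresholding rule; the case split on whether $\epsilon^2$ is bounded away from zero does not by itself supply one.
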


The provided bounds highlight the inherent difficulty of covering the entire prospect class in terms of error when the hypothesis class is finite. This shows the fundamental trade-off of achieving strong auditability (correctness: $\frac{1}{\epsilon^2}$ and completeness: $\frac{2}{\log \frac{2}{\epsilon^2
}}$) that imposes an additional cost on sample complexity. The proof is in Appendix~\ref{app:stronglyfinite}.

The completeness constraint on strong auditability for infinite model classes presents a fundamental challenge (details in Appendix \ref{app:extexp}). To address it, we introduce a new characterization based on the prospect ratio. We begin by formally defining the prospect class, which serves as building blocks for the prospect ratio definition that follows.

\begin{definition}[Prospect class]\label{def:prospects}
    Given a distributional property $\mu,$ strategic class $\cF$, and a parameter $\epsilon \in (0,1)$, 
    the true prospect class with respect to $\mu$ is the subclass of models in $\cF$ that have the same property $\mu$ up to $\epsilon$: $
            \cP(\cF, \epsilon) \triangleq \{f \in \cF: |\mu(f) - \mu(f^*)| \leq \epsilon\}$.\\
    The empirical prospect class with respect to $\mu$ is the subclass of models in $\cF$ that have the same property $\mu$ on $S$, up to $\epsilon$: $
            \hat{\cP}(\cF, \epsilon) \triangleq \{f \in \cF: |\hat{\mu}(f) - \mu(f^*)| \leq \epsilon\}\,,$
    where $f^* \in \underset{f \in \cF}{\argmin} |\mu(f) - \mu^*|$, $\mu^*$ is true property of the black-box model, and $\epsilon$ is the prospect parameter.
\end{definition}

\begin{definition}[Prospect ratio]\label{def:propratio}
    The prospect ratio is the volume of the prospect class compared to the hypothesis class $\cF$, i.e. $r(\epsilon) \triangleq \frac{\cV(\hat{\cP}(\cF, \epsilon))}{\cV(\cF)}$. $\cV: \cF \to \cR^{+}$ is volume function.
\end{definition}

To address the challenge of infinite prospect classes, we need a systematic way to measure their volume. This can be achieved by reducing the search space to a finite set of representative models through a probability measure over the model class $\cF$. Let $\nu$ denote this probability measure on $\cF$. While the choice of $\nu$ affects the prospect ratio's value and is largely arbitrary, it enables us to work with a manageable subset of models. When $\cF$ is finite, a natural choice of $\nu$ is the uniform distribution, which reduces to the case of strongly auditing finite classes. Let $n$ denote the sample size of models drawn from the strategic class. The prospect ratio is influenced by two distinct sources of uncertainty: (1) uncertainty due to sampling models from the prospect class and (2) uncertainty due to sampling points from protected groups. This induces two types of errors. Let $f_1, \cdots, f_n$ be $n$ models sampled independently from $\nu$,  and $\Tilde{r}_n(\epsilon) \triangleq \frac{1}{n} \sum_{i=1}^n \mathds{1}_{f_i \in \cP(\cF, \epsilon)}, \hat{{r}}_n(\epsilon) \triangleq \frac{1}{n} \sum_{i=1}^n \mathds{1}_{f_i \in \hat{\cP}(\cF, \epsilon)}$. 
We define \textit{the estimator for the prospect ratio} as:

$$\hat{{r}}_{n,m_0,m_1}(\epsilon) \triangleq \frac{1}{n} \sum_{k=1}^n \mathds{1}_{|\frac{1}{m_0} \sum_{i=1}^{m_0} \mathds{1}_{f_k(x_i) = 1} - \frac{1}{m_1} \sum_{j=1}^{m_1} \mathds{1}_{f_k(x'_j)=1}| \leq \epsilon},$$
where $x_i$'s and $x'_j$'s are samples from the first and second protected group, respectively.

\begin{restatable}[Concentration of prospect ratio]{theorem}{asymsize}\label{theorem:prospectratio}
    For all $\epsilon, \upsilon, \tau \in (0,1)$, 
    $\prob \{r(\epsilon - \upsilon) - \tau \leq \hat{{r}}_{n,m_0,m_1}(\epsilon) \leq r(\epsilon + \upsilon) + \tau \} \geq \Big(1 - \exp \Big\{\frac{- 2 \upsilon^2 m_0 m_1}{n(m_0 + m_1)}  \Big\} \Big)^n ( 1 - \exp (-n\tau^2))^2$.
\end{restatable}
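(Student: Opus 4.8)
The plan is to decompose the event inside the probability using the two independent sources of randomness identified in the text: the draw of $n$ models $f_1,\dots,f_n\sim\nu$, and, conditionally on those models, the draw of $m_0$ points from $\cX_0$ and $m_1$ points from $\cX_1$. The key structural observation is that $\hat r_{n,m_0,m_1}(\epsilon)$ is an average of indicators $\mathds{1}_{E_k}$, where $E_k$ is the event that the \emph{empirical} group-wise positive-rate gap of $f_k$ is at most $\epsilon$; the true prospect class membership $f_k\in\cP(\cF,\epsilon')$ is determined by the \emph{population} gap being at most $\epsilon'$. So I would first fix the models and, for each $k$, compare the empirical gap $\hat g(f_k)\triangleq|\tfrac1{m_0}\sum_i \mathds{1}_{f_k(x_i)=1}-\tfrac1{m_1}\sum_j\mathds{1}_{f_k(x_j')=1}|$ to the population gap $g(f_k)$. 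Each of the two group-wise empirical means concentrates around its expectation; a Hoeffding bound on each, combined with a union-type argument over the two groups, gives $|\hat g(f_k)-g(f_k)|\le\upsilon$ with probability at least $1-\exp\{-2\upsilon^2 m_0 m_1/(m_0+m_1)\}$ — this is exactly where the harmonic-mean-type quantity $\tfrac{m_0 m_1}{m_0+m_1}$ in the bound comes from (balancing the per-group deviations $\upsilon_0,\upsilon_1$ with $\upsilon_0+\upsilon_1=\upsilon$ and optimizing, or more simply splitting $\upsilon$ proportionally). Raising to the $n$-th power over the independent models accounts for the first factor $\bigl(1-\exp\{-2\upsilon^2 m_0 m_1/(n(m_0+m_1))\}\bigr)^n$ — note the extra $1/n$ inside, which suggests the intended split allocates deviation $\upsilon/\sqrt n$ or applies the per-model bound at level $\upsilon$ but demands \emph{all} $n$ to hold simultaneously via a union bound that is then written multiplicatively using independence.

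On the event that $|\hat g(f_k)-g(f_k)|\le\upsilon$ for all $k$, I get the sandwiching of \emph{sets}: $\{g(f_k)\le\epsilon-\upsilon\}\subseteq\{\hat g(f_k)\le\epsilon\}\subseteq\{g(f_k)\le\epsilon+\upsilon\}$, hence pointwise $\mathds{1}_{f_k\in\cP(\cF,\epsilon-\upsilon)}\le\mathds{1}_{E_k}\le\mathds{1}_{f_k\in\cP(\cF,\epsilon+\upsilon)}$, and averaging over $k$ yields $\tilde r_n(\epsilon-\upsilon)\le\hat r_{n,m_0,m_1}(\epsilon)\le\tilde r_n(\epsilon+\upsilon)$. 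The second step handles the model-sampling randomness: $\tilde r_n(\epsilon\pm\upsilon)$ is an average of $n$ i.i.d.\ Bernoulli$(r(\epsilon\pm\upsilon))$ indicators (since $r(\cdot)$ is the $\nu$-volume of the true prospect class, by Definitions~\ref{def:prospects} and~\ref{def:propratio}), so Hoeffding gives $\tilde r_n(\epsilon-\upsilon)\ge r(\epsilon-\upsilon)-\tau$ and $\tilde r_n(\epsilon+\upsilon)\le r(\epsilon+\upsilon)+\tau$, each with probability at least $1-\exp(-2n\tau^2)$; these are the two factors $(1-\exp(-n\tau^2))^2$ (the paper's bound is slightly looser, $\exp(-n\tau^2)$ rather than $\exp(-2n\tau^2)$, which is fine). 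Combining the three independent high-probability events by multiplying their probabilities (valid because the model draws and the two per-group point draws are mutually independent) gives the stated product lower bound.

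The main obstacle, and the step I would be most careful about, is getting the constant and the dependence structure in the first factor exactly right — specifically justifying why the per-group Hoeffding bounds combine into $\exp\{-2\upsilon^2 m_0 m_1/(m_0+m_1)\}$ and why the $1/n$ appears inside. The cleanest route is: allocate $\upsilon_0=\upsilon\cdot\tfrac{m_1}{m_0+m_1}$ and $\upsilon_1=\upsilon\cdot\tfrac{m_0}{m_0+m_1}$ so that $\upsilon_0+\upsilon_1=\upsilon$ and, by Hoeffding, $\prob[|\bar p_0-\E\bar p_0|>\upsilon_0]\le 2\exp(-2m_0\upsilon_0^2)$ with the two exponents $2m_0\upsilon_0^2=2m_1\upsilon_1^2=2\upsilon^2 m_0 m_1/(m_0+m_1)$ matching; then a union bound over the two groups \emph{and} over the $n$ models, written as $1-$ (sum of $2n$ terms), is what the paper packages — somewhat informally — as $(1-\exp\{-2\upsilon^2 m_0 m_1/(n(m_0+m_1))\})^n$ after a convexity/AM–GM rearrangement and dropping the factor of $2$. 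I would flag that the paper's displayed constant is the result of this bookkeeping and present the argument via the explicit $\upsilon_0,\upsilon_1$ split, which makes the harmonic-mean term transparent; everything else is routine Hoeffding plus the set-sandwiching monotonicity argument.
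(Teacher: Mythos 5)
Your overall architecture matches the paper's: decompose the randomness into model draws and point draws, sandwich the empirical indicator between the true-prospect indicators at levels $\epsilon\pm\upsilon$ on the event that every $|\hat\mu_S(f_k)-\mu_{\cD}(f_k)|\le\upsilon$, and then apply Hoeffding to $\tilde r_n(\epsilon\pm\upsilon)$ for the second factor. Your treatment of the second factor is in fact cleaner than the paper's (which only states two-sided concentration of $\tilde r_n(\epsilon)$ around $r(\epsilon)$ and then combines informally).

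The genuine gap is in the first factor, precisely at the step you flagged as uncertain. The $1/n$ in the exponent does not come from a $\upsilon/\sqrt n$ allocation or from a union bound ``written multiplicatively'' --- a union bound over $n$ models sharing the same sample would give $1-n\exp\{\cdot\}$, not $(1-\exp\{\cdot\})^n$, and with a shared sample the $n$ deviation events are not independent, so the product form cannot be asserted. What the paper actually does is split the labeled sample into $n$ disjoint subsamples of size $m_0/n$ and $m_1/n$, one per sampled model $f_k$, and apply its two-sample Discrepancy Chernoff bound (Lemma~\ref{lemma:discchernoff}) to each: the harmonic-mean quantity for a subsample is $\frac{(m_0/n)(m_1/n)}{(m_0+m_1)/n}=\frac{m_0m_1}{n(m_0+m_1)}$, which is where the $1/n$ comes from, and disjointness of the subsamples is what makes the $n$ events independent so that the probabilities multiply. (This is at the cost of a mismatch with the estimator as defined in the main text, which evaluates every $f_k$ on the full sample --- but it is the mechanism the proof relies on.) Separately, your per-group bookkeeping does not produce the claimed constant: with the proportional split $\upsilon_0=\upsilon m_1/(m_0+m_1)$ the two exponents $2m_0\upsilon_0^2$ and $2m_1\upsilon_1^2$ equal $2\upsilon^2 m_0m_1^2/(m_0+m_1)^2$ and $2\upsilon^2 m_1m_0^2/(m_0+m_1)^2$, which do not coincide with each other (unless $m_0=m_1$) nor with $2\upsilon^2 m_0m_1/(m_0+m_1)$, and the two-sided, two-group union bound leaves a prefactor of $4$ that cannot be dropped. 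The clean route is to invoke a single Hoeffding bound for the difference of the two independent group-wise empirical means, as the paper's Lemma~\ref{lemma:discchernoff} does, rather than splitting $\upsilon$ across groups.
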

The prospect ratio estimation error exhibits exponential dependence on the size of the prospect class. While obtaining sufficient samples from each protected group is important for accurate statistical parity estimation, the sample size requirement with respect to models from the hypothesis class is exponentially more critical for achieving strong auditability.
The proof is given in Appendix~\ref{app:prospectratio}.

\paragraph{Infinite VC Classes.} The following proposition \ref{prop:infinitevcnotsp} provides a lower bound result to \textbf{Q2} posed in the introduction. This question is equivalent to asking whether we can reduce an infinite set of dichotomies to a finite set that captures all distinct behaviors with respect to protected groups. 

\begin{restatable}{proposition}{infinitevcnotsp}\label{prop:infinitevcnotsp}
    The model classes with infinite VC dimension are not weakly or strongly auditable.
\end{restatable}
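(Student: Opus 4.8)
The plan is to show the contrapositive direction implicit in the statement: if a class $\cF$ is weakly (or strongly) auditable for statistical parity, then its VC dimension must be finite. By Theorem~\ref{theorem:spbounds} and its corollary, weak auditability is equivalent to $\SP(\cF)$ being finite, so it suffices to show that $\VC(\cF) = \infty$ forces $\SP(\cF) = \infty$. Equivalently, I would argue directly: given any $d$, I would exhibit a sample $S = S_0 \cup S_1$ with $|S| = d+1$ that SP-shatters under $\cF$, using a shattered set witnessing $\VC(\cF) \ge d$. The key mechanism is the extension trick already used in the proof of Lemma~\ref{lemma:welldef}: starting from a set $T = \{x_1,\dots,x_d\}$ shattered by $\cF$, I would pick the partition so that $T$ is split nontrivially across $\cX_0$ and $\cX_1$ (which is possible precisely because, by a padding argument, we can absorb a point from each protected group into a shattered set of size $d+1$ — enlarge $d$ if needed), and then verify that the group-trace count hits the combinatorial maximum $2^{|S|} + |S| - 2^{|S_0|} - 2^{|S_1|}$ demanded by Definition~\ref{def:spdim}.

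The concrete steps, in order, are: (1) Recall from the corollary that agnostic and realizable weak auditability hold iff $\SP(\cF) < \infty$; hence it is enough to prove $\VC(\cF)=\infty \Rightarrow \SP(\cF)=\infty$. (2) Fix an arbitrary target $N$; since $\VC(\cF)=\infty$, choose a set $S$ of size $|S| = N$ shattered by $\cF$, and by Lemma~\ref{lemma:welldef} (applied after possibly enlarging $S$ to guarantee both groups are nonempty) write $S = S_0 \cup S_1$ with $S_0 \subseteq \cX_0$, $S_1 \subseteq \cX_1$, both nonempty. (3) Show that $|\Delta^{\SP}_{\cF}(S_0,S_1)|$ attains the SP-shattering value: since $\cF$ realizes all $2^{|S|}$ dichotomies on $S$, the projection map $c \mapsto (c\cap S_0, c\cap S_1)$ is surjective onto $2^{S_0}\times 2^{S_1}$, which has cardinality $2^{|S_0|}2^{|S_1|}$; then reconcile this with the claimed value $2^{|S|} + |S| - 2^{|S_0|} - 2^{|S_1|}$ by checking the identity (or, more carefully, re-deriving the correct SP-shattering condition the definition intends — the maximal number of distinct discriminatory group-behaviors) so that a fully shattered $S$ indeed SP-shatters. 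Conclude $\SP(\cF) \ge \log_2$ of that quantity, which grows without bound in $N$, so $\SP(\cF) = \infty$. (4) For the strong-auditability half, note that strong auditability implies weak auditability (a strong auditor's output class is nonempty, and any member is a valid weak-auditor output satisfying the audit-risk bound by the correctness condition), so the weak impossibility immediately transfers.

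The main obstacle I anticipate is step (3): reconciling "$S$ is VC-shattered" with "$S$ SP-shatters" as defined by the exact formula $2^{|S|} + |S| - 2^{|S_0|} - 2^{|S_1|}$ in Definition~\ref{def:spdim}. That count is \emph{smaller} than the naive $2^{|S_0|}\cdot 2^{|S_1|}$ one would get from full VC-shattering of $S$, because it quotients out the symmetries between protected groups that induce equivalent discrimination patterns (as the surrounding text explains and the $\cF_2$ example illustrates: $2^3 + 3 - 2^2 - 2^1 = 5$ rather than $2^2\cdot 2^1 = 8$). So the real content is to verify that when $\cF$ shatters $S$ in the VC sense, the number of \emph{inequivalent} group-behaviors is exactly the maximal value the definition records — i.e. that VC-shattering of $S$ forces $\Delta^{\SP}_{\cF}(S_0,S_1)$ to be as large as combinatorially possible given $|S_0|,|S_1|$ — and then that this maximal value is unbounded as $|S| \to \infty$ with, say, $|S_0| = 1$ fixed (giving $2^{|S|} + |S| - 2 - 2^{|S|-1} = 2^{|S|-1} + |S| - 2 \to \infty$). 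The bookkeeping of which group-traces coincide must be done carefully, but it follows the same template as the proof of Lemma~\ref{lemma:welldef}; once that is in hand, the divergence of $\SP(\cF)$ and hence non-auditability is immediate.
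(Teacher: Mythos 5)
Your overall route coincides with the paper's: reduce the proposition to showing $\VC(\cF)=\infty \Rightarrow \SP(\cF)=\infty$, invoke the $\Omega\!\left(\SP(\cF)/\epsilon^2\right)$ lower bound of Theorem~\ref{theorem:spbounds} (equivalently the qualitative corollary) to kill weak auditability, and transfer to strong auditability via the observation that a strong auditor's output contains a valid weak-auditor output. Your step (4) is fine. The problem is the one you flag yourself in step (3), and it is a genuine gap, not mere bookkeeping: you never establish that a VC-shattered set yields an SP-shattered set, i.e.\ that $|\Delta^{\SP}_{\cF}(S_0,S_1)|$ attains the value $2^{|S|}+|S|-2^{|S_0|}-2^{|S_1|}$ demanded by Definition~\ref{def:spdim}. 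As you correctly compute, VC-shattering of $S$ forces $|\Delta^{\SP}_{\cF}(S_0,S_1)| = 2^{|S_0|}\cdot 2^{|S_1|} = 2^{|S|}$, which is \emph{strictly larger} than the SP-shattering count whenever both groups are nonempty (since $2^{|S_0|}+2^{|S_1|} > |S_0|+|S_1|$); so under the literal definitions a VC-shattered set never SP-shatters, and the "quotient by equivalent discrimination patterns" that would fix this is gestured at in the prose but never formalized. Deferring that reconciliation means the central implication of the proof is not actually established.

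For comparison, the paper argues in the reverse direction: it takes an arbitrary set $S$ that SP-shatters $\cF$, applies Lemma~\ref{lemma:welldef} to bound $\max(|S_0|,|S_1|)$ away from $|S|$, and deduces $|\Delta^{\SP}_{\cF}(S_0,S_1)| = 2^{|S|}+|S|-2^{|S_0|}-2^{|S_1|} \geq c\cdot 2^{|S|}$ for a constant $c$, concluding $\SP(\cF) \geq \VC(\cF) + \log_2 c$. This avoids having to prove that shattered sets SP-shatter, but it silently identifies the size of the largest SP-shattering set with $\VC(\cF)$ in the last step, which is exactly the link you identified as missing. In other words, your proposal and the paper's proof share the same unclosed step; yours is simply explicit about it. To complete either argument one must either (a) repair Definition~\ref{def:spdim} so that a VC-shattered set provably achieves the SP-shattering count (after which your surjectivity observation in step (3) finishes the job), or (b) prove directly that $\cF$ admits SP-shattering sets of size at least $\VC(\cF)-O(1)$, consistent with Proposition~\ref{prop:vcsp}.
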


The proof is given in Appendix~\ref{app:infinitevcnotsp}. In practice, models are parameterized and thus have a finite VC dimension. However, from a theoretical standpoint, neural networks with infinite width or infinitely many parameters fall outside the practical scope of auditing statistical parity.
\section{Numerical Experiments}\label{sec:experiments}
\begin{figure*}[t!]
\centering
\includegraphics[width=0.22\textwidth]{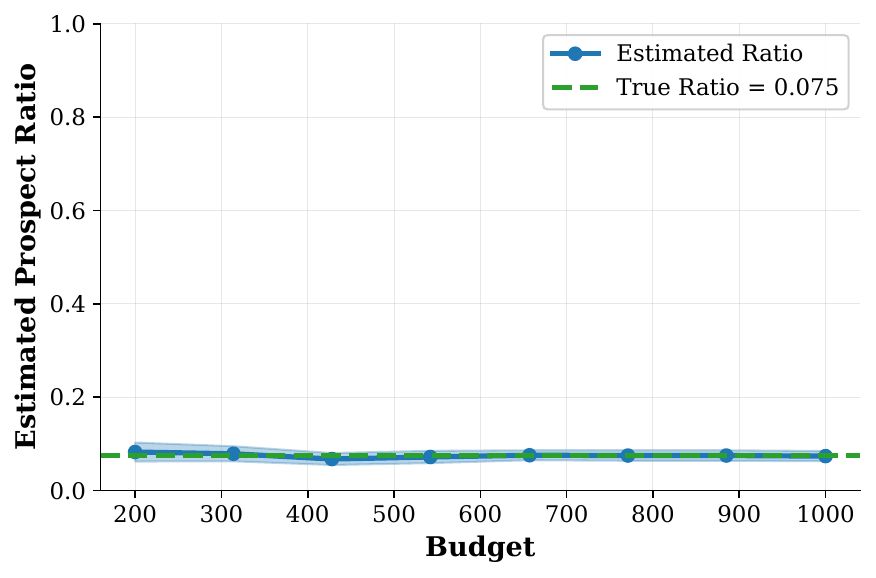}\hfill
\includegraphics[width=0.22\textwidth]{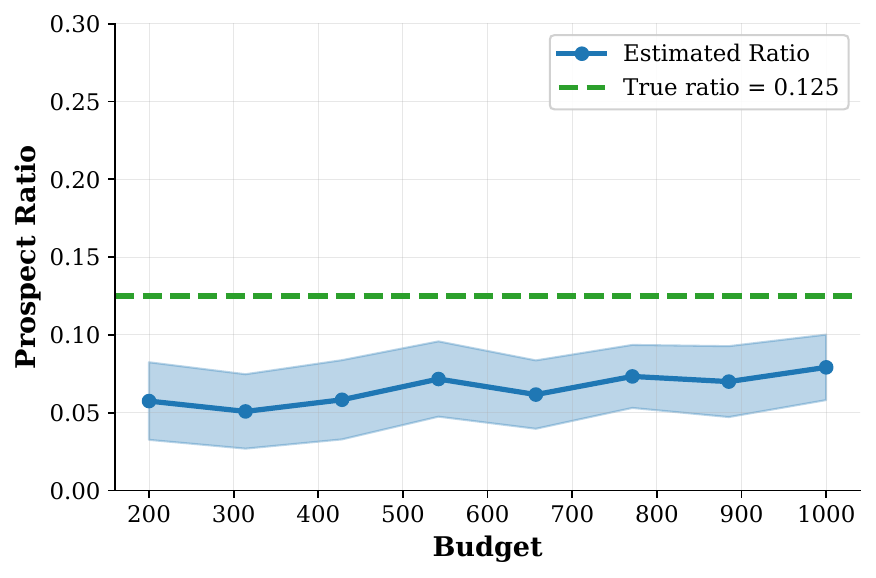}\hfill
\includegraphics[width=0.22\textwidth]{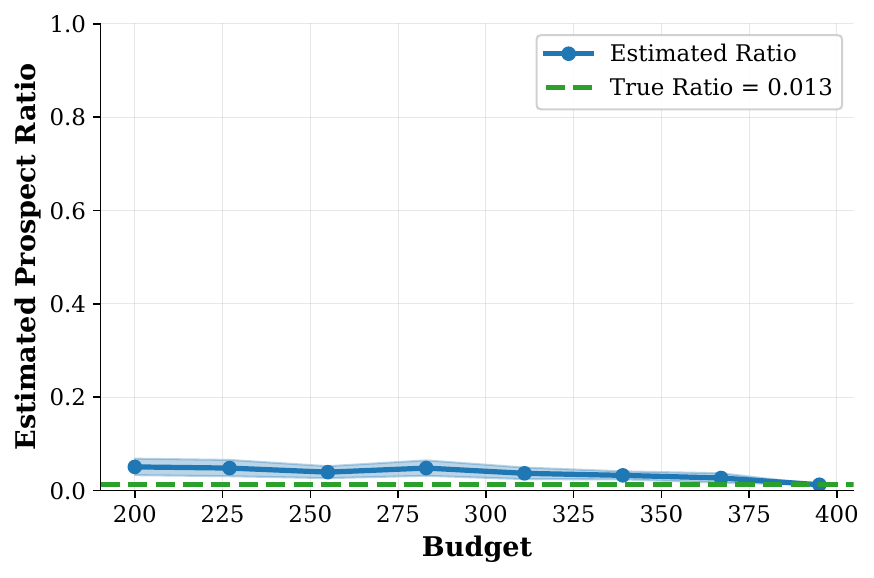}\hfill
\includegraphics[width=0.22\textwidth]{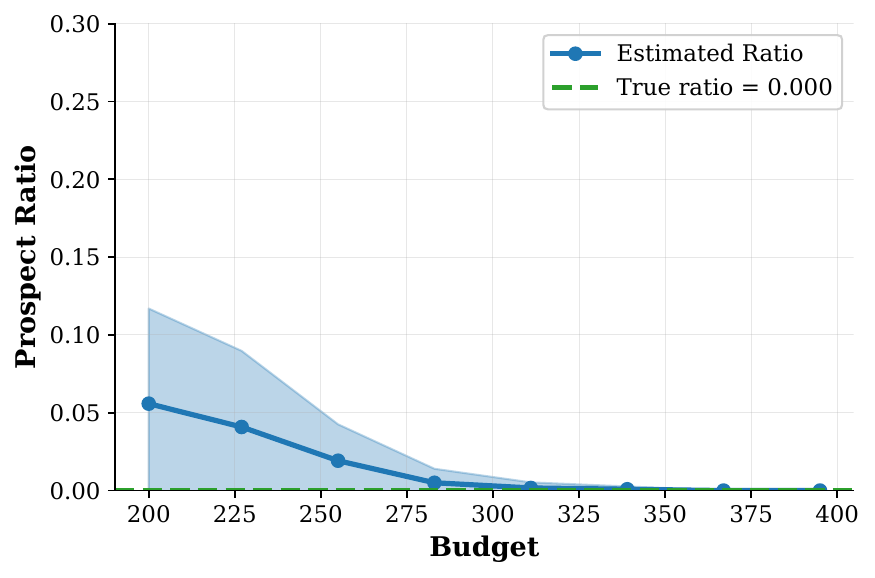}
\vspace{0.4cm} 
\includegraphics[width=0.22\textwidth]{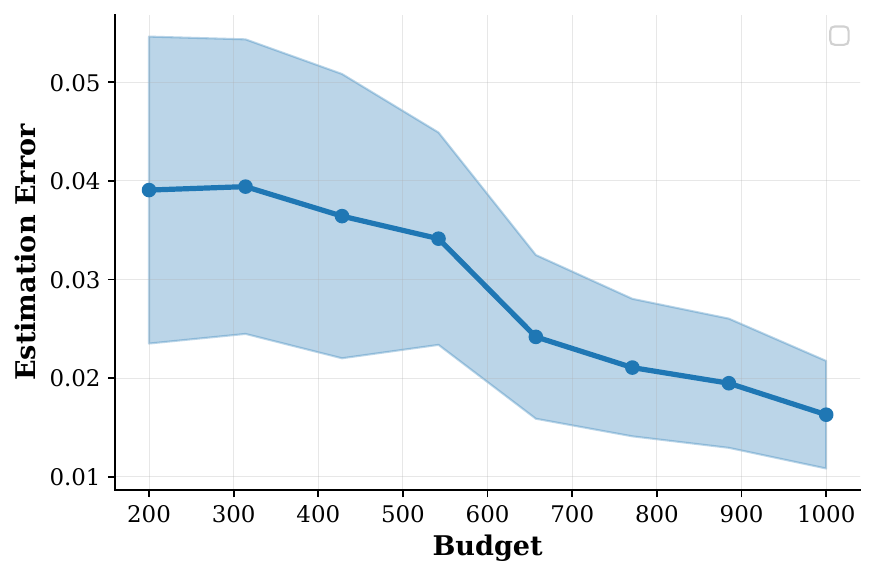}\hfill
\includegraphics[width=0.22\textwidth]{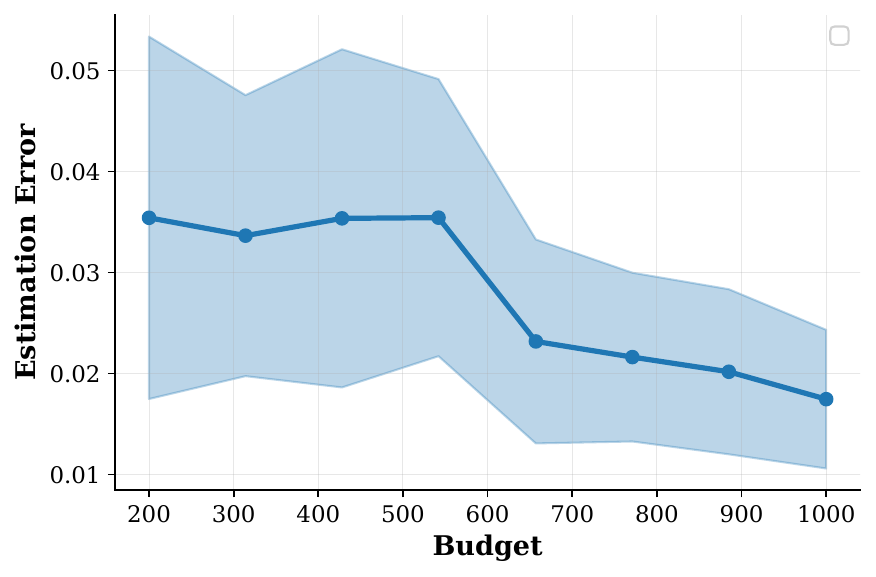}\hfill
\includegraphics[width=0.22\textwidth]{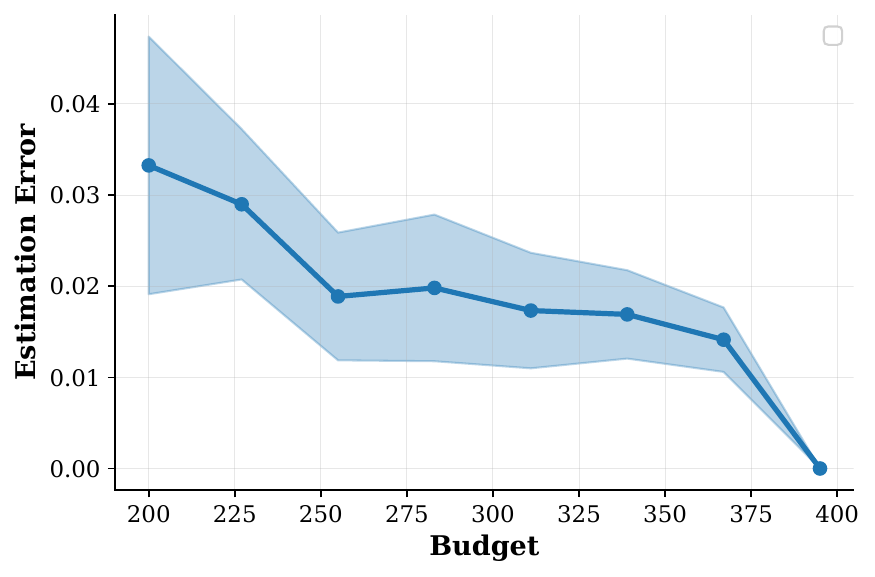}\hfill
\includegraphics[width=0.22\textwidth]{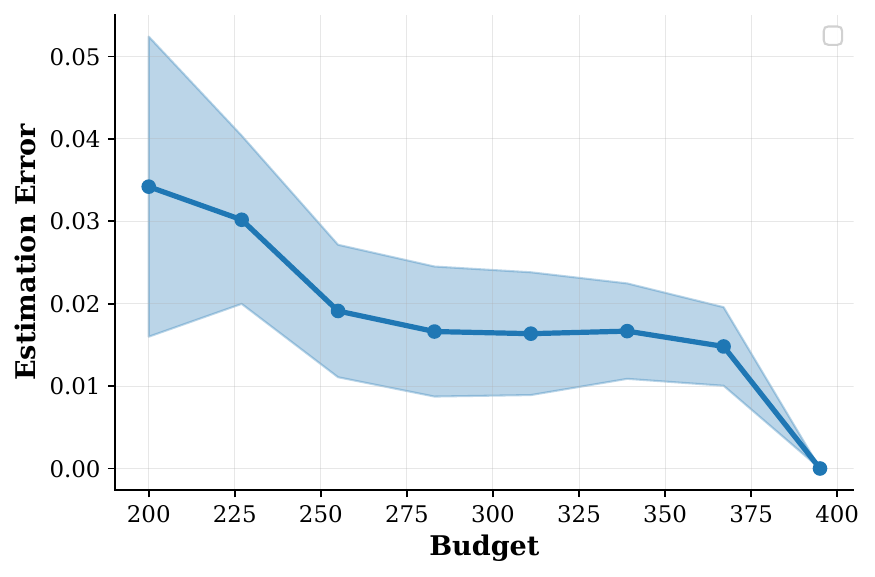}
\caption{Comparison of errors in statistical parity estimation and prospect ratio across different sample sizes.}\label{fig:experiments}
\end{figure*}

As discussed in Section~\ref{sec:stronglyrslts}, auditing an infinite hypothesis class is generally intractable. To address this, we adopt the sampling-based approximation described therein: we draw a fixed number of hypotheses to construct a finite representative subset. This subset preserves the geometric structure of the underlying strategic hypothesis class while enabling empirical evaluation. 

\noindent\subsection{Experimental Setup} 
Within this finite approximation, we evaluate \textsc{ERMP} (Algorithm~\ref{algo}) for strong auditability according to conditions (i) and (ii) in Definition~\ref{def:auditype2mp}. While \citet{yan2022active} evaluates the prospect class using its \emph{diameter},  $ \underset{f,f' \in \cP(\cF, \epsilon)}{\max} \mu(f) - \mu(f')$ which captures the worst-case spread of statistical parity values consistent with the observed data, a small diameter may arise simply because the prospect class contains only a single model. Such a scenario would trivially satisfy fairness but violate the \emph{completeness} requirement of strong auditing, namely that the auditor must consider a sufficiently rich set of compliant models.  To address this limitation, we complement the diameter-based analysis with an evaluation of condition (ii) using the \emph{estimated prospect ratio}, which better reflects the diversity of the prospect class beyond mere statistical error. 

We evaluate our approach on two standard fairness benchmarks: the \textsc{COMPAS} dataset \citep{angwin2016machine}, where groups are defined as Caucasian and non-Caucasian, and the \textsc{Student Performance} dataset, with groups defined as Female and Male. The true black-box model is a logistic regression classifier with $\ell_2$ regularization, trained on the original labels using scikit-learn’s default solver \citep{pedregosa2011scikit}. We consider two strategic model updates: one replacing the original model with a multi-layer perceptron (MLP), and another with a random forest. 

\subsection{Audit Fidelity Evaluation}
We assess the fidelity of our auditing procedure along two dimensions: 

(a) \textit{Prospect class comprehensiveness}: We measure the prospect ratio, the fraction of hypotheses that match the true model's statistical parity, and observe its convergence across increasing labeling budgets. As shown in the first row of Figure~\ref{fig:experiments}, the estimated ratio converges to the ground-truth ratio (computed on the full dataset), empirically validating that our algorithm reliably captures the true prospect class in the strong auditing setting.

(b) \textit{Prospect class correctness}: We evaluate whether models in the prospect class indeed exhibit statistical parity equivalent to the true model. The second row of Figure~\ref{fig:experiments} shows that the statistical parity estimation error for prospect models decreases with budget and remains consistently below our tolerance threshold $\epsilon = 0.005$, confirming the correctness of our prospect class construction.

\subsection{Runtime Evaluation} 

\textsc{EPO} oracle does not scales with the labeling budget, particularly for simpler strategic updates (i.e., random forests) requiring \textbf{$\sim$3 ms per sample} on average (see Table~\ref{tab:exp} and Figure~\ref{fig:bis}). This runtime accounts for both statistical parity estimation and prospect class construction. Additional details and hardware specifications are provided in Appendix~\ref{app:extexp}.

\begin{table}[ht]
\centering
\caption{Summary of experimental results for a budget of 1000 samples.}
\label{tab:exp}
\resizebox{\columnwidth}{!}{%
\begin{tabular}{lcccc}
\toprule
\textbf{Dataset} & \textbf{Strategic Class} & \textbf{Estimation Error} & \textbf{Ratio Error} & \textbf{Runtime (ms)} \\
\midrule
\textsc{COMPAS} & MLP & \num{2.33e-2} & \num{2.5e-3} & 1.9 \\
\textsc{COMPAS} & Random Forest & \num{2.33e-2} & \num{5e-3} & 1.3 \\
\textsc{Student Performance} & MLP & \num{8.22e-3} & 0 & 1.7 \\
\textsc{Student Performance}  & Random Forest & \num{8.22e-3} & \num{2.77e-17} & 1.2 \\
\bottomrule
\end{tabular}%
}
\end{table}
\section{Discussion and Future Work}\label{sec:directions}

We characterize the class of allowable strategic model updates by model owners --- namely, those that preserve the value of the audited property. We establish a necessary and sufficient condition in terms of the statistical parity (SP) dimension, a complexity measure strictly weaker than VC dimension and one that also appears in reconstruction-based property auditing using plug-in estimators. Our results suggest three natural directions for future work: (i) extending the framework to interactive settings via sequential, property-aware complexity measures; (ii) embedding audited properties directly into model architectures to obtain optimal, property-preserving predictors; and (iii) developing manipulation-proof audit definitions that are concept-class-agnostic and dimension-free.  While our focus was on traditional predictive models, our framework is particularly well-suited for auditing dynamic systems such as large language models (LLMs). LLMs undergo frequent updates that can shift their fairness behavior over time~\citep{chen2024chatgpt,schaeffer2023emergent}, and recent benchmarks reveal substantial subgroup disparities across models~\citep{parrish2021bbq}. Our estimator’s sample efficiency, stability under distribution shift, and ability to operate without full model access make it a promising candidate for continuous fairness monitoring in such settings, a direction we leave for future work.

\bibliographystyle{plainnat}
\bibliography{references}
\newpage 
\appendix

\section{Extended Related Works: Estimating Distributional Properties}\label{app:RW}

The auditing of properties of ML models has been explored in various contexts. For example, \cite{Bashir:2021} investigated model instability under feature imputation in a black-box setting, with guarantees on data minimization. More relevant to our work, several studies have focused on auditing group fairness to examine discriminatory behavior with respect to protected groups. In particular, \cite{kearns2018preventing} studied the auditing of statistical parity (SP) through a reduction to weak agnostic learning, where auditing is defined as verifying whether SP exceeds a given threshold. Building on this, \cite{hsu2024distribution} recently explored a specific case of this reduction, focusing on auditing SP under Gaussian feature distributions for homogeneous halfspace subgroups, and demonstrated the problem's computational hardness. Similarly, \cite{chugg2023auditing} examined the same verification problem within the statistical framework of hypothesis testing. More closely related to our work, \cite{yan2022active} examined auditing statistical parity by estimation instead of verification. In their approach, manipulation-proof constraints are enforced through active sampling. However, restricted to a finite hypothesis class, their method relies on reconstructing the model before plugging in the estimator. This limitation leads to the potential omission of hypotheses that could expand the size of the manipulation subclass, reducing the overall effectiveness of the auditing process.

In contrast to previous approaches, a variant of this problem has been investigated for auditing distributional properties, where the focus is on estimating characteristics of an unknown distribution. Examples of such properties includes Shannon entropy, the size of the distribution's support, support coverage, and various distribution metric distances (such as KL divergence and other divergence measures). Common approaches are often based on plug-in estimators that approximate the unknown distribution, typically requiring a logarithmic-factor increase in additional samples. Recent works by \cite{Hao:2018, Hao:2020} have explored this challenge within the realm of discrete distributions, incorporating smoothness assumptions and proposing an estimator that amplifies data relative to the empirical estimator by a factor  $\sqrt{\log n}$.  \cite{Hao:2020bis} extended the problem to the multi-distribution setting for estimating discrete distribution properties over a class of discrete distributions over $[k]$ by considering their mixtures (mixing strategies) and maintaining smoothness conditions with a sample complexity $\cO(\frac{k}{\epsilon^3 \sqrt{\log k}})$. \cite{Bharath:2012} explores integral probability metrics for continuous distributions, expanding upon various known distance metrics between distributions, such as total variance, Wasserstein distance, among others. \cite{Bharath:2012} further extended their analysis to kernelized distances, enriching the understanding of distance measures in the context of continuous distributions.
\section{Additional Auditing Scenarios}\label{app:examples}
In statistical learning in the agnostic setting, we study models for which we hope they behave the same as the data distribution. and we assume that the ground truth is fully described by the joint distribution $\cD$. This can be generalized to the setting where we learn models that have the same error over the ground truth.  This is equivalent to auditing the true error, where the true error is not necessarily the minimum over a hypothesis class.  The auditing framework extends the principles of statistical learning to a broader context, specifically that of black-box auditing of the model’s generalization error. In this setting, the regulator, denoted as $\mathfrak{R}$ (the auditor), operates in a black-box context where the internal structure of the learning algorithm (depicted by the left box in the figure) is inaccessible. The regulator $\mathfrak{R}$ is restricted to a single pass through data sampled according to the distribution $\mathcal{D}_{h^*}$ and must produce an audit hypothesis that approximates the true error of the black-box model $h^*$. While this black-box auditing of learning error scenarios may not have direct real-world applications, it serves to generalize the concept of agnostic learning, thereby extending it to the domain of agnostic auditing. This theoretical construct emphasizes the rigor of agnostic frameworks, illustrating their applicability to regulatory and auditing contexts where access to internal model parameters or structure is limited.

\paragraph{An analogy for PAC-audit: Bridging concepts.}

In the case of the statistical learning framework, given in the first example, the property $\mu$ represents the true error. In this case, the ground truth is zero (i.e. $\mu(\cD) = 0$), as the distribution itself encodes the truth. We say that the error of prediction and the learning error are coupled. In other words, for all $h \in \cH$, the (true) auditing error $|\mu(h) - \mu(\cD)|$ is equivalent to the true learning error $\cL_{\cD} (h) = \underset{(x,y) \sim \cD}{\prob}[h(x) \neq y]$. This is a special case where the property can be coupled to $|\mu(h) - \mu(\cD)| = \mu(h, \cD)$.

In summary, classic PAC learning produces a hypothesis that minimizes error and provides an estimate of the true error. This specific case, which outputs a single hypothesis, forms a one-sized, manipulation-proof subclass. Similarly, PAC auditing generalizes this framework by replacing error with a broader, abstract distributional property.

We now present examples of property auditing, demonstrating that weakly auditing naturally reduces to a risk minimization framework analogous to that in statistical theory.


\subsection{Loss functions}

\begin{definition}
    For any auditing problem, let $\{\cX_i\}_{i \in I}$ denote a feature partition of $\cX$, $\mu$ a distributional property defined over this partition, and $h$ the outputted model by the auditor.
    \begin{itemize}
    \item The auditing loss is defined as $ \ell_{\mu}: (\cY \times \cY)^{|I|} \longrightarrow \left[0,1 \right]$.
    \item The true auditing risk is defined as:
    \begin{align*}
        \cE_{\mu}(f) \triangleq \underset{(X,Y) \sim \cD}{\E} [\ell_{\mu}((f(X_i), Y_i)_{i \in I}) | X_i \in \cX_i]
    \end{align*}
    \item Let $S = \bigcup_{i \in I} S_i$ denotes a set of instances sampled i.i.d from $\cD$, where $S_{i_{|\cX}} \subseteq \cX_i$. For all $i \in I$, let $m_i$ denote the cardinal of $S_i$ and $S_i = \{(x^j_i,y^j_i)\}_{j=1}^{m_i}$. Let $m = \prod_{i \in I} m_i$.   The empirical auditing risk with respect to $S$ is
    \begin{align*}
        \hat{\cE}_{\mu}(f) \triangleq \frac{1}{m} \sum_{j = 1}^m  \ell_{\mu}((f(x^j_i),y^j_i)_{i \in I})
    \end{align*} 
    \end{itemize}
\end{definition}

The proposed general loss function extends the PAC learning framework, traditionally used to learn or audit properties defined over a single population, to settings where the target property depends on multiple populations. 

Table~\ref{tab:proploss} summarizes the auditing losses corresponding to these properties. 

\setlength{\textfloatsep}{6pt}
\begin{table}[t!]
    \centering
    \resizebox{0.8\columnwidth}{!}{
    \begin{tabular}{c|c}
        Property & Loss function \\
        \midrule
        Statistical Parity & $|\frac{1}{m_0} \mathds{1}_{h(x_i) = 1|x_i \cX_0} - \frac{1}{m_1} \mathds{1}_{h(x_i') = 1|x_i' \cX_1}|$\\
        Learning error & $|\mathds{1}_{h(X) \neq Y} - \mathds{1}_{Y^* \neq Y}|$\\
        Generalization error & $ \bigl| \big(\mathds{1}_{h(x) \neq y} - \mathds{1}_{h(\Tilde{x}) \neq \Tilde{y}} \big) - \big(\mathds{1}_{h^*(x) \neq y} - \mathds{1}_{h^*(\Tilde{x}) \neq \Tilde{y}}\big) \bigl|  $ \\
          Robust risk & ${\sup_{z \in \cU(x)}}|\mathds{1}_{h(z) \neq y} - \mathds{1}_{y^* \neq y}|$ \\
        \bottomrule
    \end{tabular}}
    \caption{Loss functions for auditing different properties.}\label{tab:proploss}
\end{table}

\subsection{Learning Error.} 

For auditing the learning error, the auditing risk is defined as: 
    \begin{align*}
         \cE(h) = \underset{\substack{(X,Y) \sim \cD \\ (X,Y^*) \sim \cD^* }}{\E} [\ell((h(X), Y), (Y^*, Y))]
    \end{align*}

Where the loss is defined as:

\begin{align*}
         \ell((h(X), Y), (Y^*, Y)) = |\mathds{1}_{h(X) \neq Y} - \mathds{1}_{Y^* \neq Y}|
    \end{align*}

\begin{proposition}
    Risk minimization implies weakly auditing of learning error.
\end{proposition}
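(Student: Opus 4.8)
The plan is to show that the auditing loss $\ell((h(X),Y),(Y^*,Y)) = |\mathds{1}_{h(X)\neq Y} - \mathds{1}_{Y^*\neq Y}|$ behaves, up to the coupling identity discussed in the ``Bridging concepts'' paragraph, exactly like a $\{0,1\}$-valued classification loss, so that classical agnostic PAC guarantees (uniform convergence over $\cF$ via $\VC(\cF)$, plus empirical risk minimization) transfer verbatim. Concretely, I would first observe that $\ell \in \{0,1\}$: since both indicators take values in $\{0,1\}$, their absolute difference is $1$ exactly when $h(X)\neq Y$ and $Y^*=Y$, or $h(X)=Y$ and $Y^*\neq Y$. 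Hence $\hat{\cE}(h)$ is an average of i.i.d. Bernoulli terms, $\cE(h)$ is its expectation, and the class of loss-composed functions $\{(x,y,y^*)\mapsto \ell((h(x),y),(y^*,y)) : h\in\cF\}$ has VC dimension at most that of $\cF$ (composition with a fixed function of $h(x)$ and the auxiliary labels does not increase shattering).

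Next I would invoke uniform convergence: for $m = \cO\big(\tfrac{\VC(\cF)+\log(1/\delta)}{\epsilon^2}\big)$ i.i.d. samples, with probability at least $1-\delta$, $\sup_{h\in\cF}|\hat{\cE}(h) - \cE(h)| \le \epsilon/2$. Then, letting $\hat h \in \argmin_{h\in\cF}\hat{\cE}(h)$ be the output of the ERM oracle (Algorithm~\ref{algo} with the learning-error loss plugged in) and $h^\star \in \argmin_{h\in\cF}\cE(h)$, the standard two-sided argument gives $\cE(\hat h) \le \hat{\cE}(\hat h) + \epsilon/2 \le \hat{\cE}(h^\star) + \epsilon/2 \le \cE(h^\star) + \epsilon = \opt(\mu,\cD,\cF) + \epsilon$. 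Finally I would translate this back into the audit-risk language of Definition~\ref{def:auditype1mp}: by the coupling identity $|\mu(h)-\mu(\cD)| = \cE(h)$ for the learning-error property, the bound $\cE(\hat h) \le \opt + \epsilon$ is precisely the statement that $\cE_m(\cA[S],\mu) \le \epsilon$ with probability $\ge 1-\delta$, i.e. $\cA$ is an $(\epsilon,\delta)$-weak auditor.

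The main obstacle, and the step deserving the most care, is establishing the coupling precisely and handling the joint sampling from $\cD$ and $\cD^\star$. The loss references both the observed label $Y$ (from $\cD$) and the ``true'' label $Y^*$ (from $\cD^\star$); I must be careful that the expectation in $\cE(h)$ is over the correct joint law so that $\cE(h)$ indeed equals $|\mu(h)-\mu(\cD)|$ rather than some perturbed quantity, and that the empirical samples used by the oracle are drawn from this joint law (the auditor sees pairs $(X,Y)$ and, via the pre-audit model, the relevant $Y^*$ information). Once the coupling $|\mu(h) - \mu(\cD)| = \cE(h)$ is nailed down exactly as in the special-case remark preceding this proposition, the rest is a routine instantiation of the Strategic Lemma (Lemma~\ref{SP-strategiclemma}) with uniform convergence supplied by the finite VC dimension of the loss class. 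I would also note that, unlike the statistical-parity case, no group-wise refinement is needed here since the property is defined over a single population, so $\VC(\cF)$ rather than $\SP(\cF)$ governs the rate.
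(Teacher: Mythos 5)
Your argument hinges on the ``coupling identity'' $|\mu(h)-\mu^*| = \cE(h)$, and that identity is false in the setting of this proposition. It holds only in the special case discussed in the ``Bridging concepts'' paragraph, where the ground truth is encoded by $\cD$ itself so that $\mu(\cD)=0$ (equivalently $Y^*=Y$ almost surely, so the second indicator vanishes). In general the two disagreement events $\{h(X)\neq Y\}$ and $\{Y^*\neq Y\}$ need not coincide: if they are disjoint and each has probability $0.3$, then $\cE(h)=0.6$ while $|\mu(h)-\mu^*|=0$. The correct relationship is the one-sided bound $\cE(h)=\E\left[\,|\mathds{1}_{h(X)\neq Y}-\mathds{1}_{Y^*\neq Y}|\,\right]\geq \left|\E[\mathds{1}_{h(X)\neq Y}]-\E[\mathds{1}_{Y^*\neq Y}]\right| = |\mu(h)-\mu^*|$, i.e.\ Jensen's inequality. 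This is precisely the paper's proof, and it is all that is needed: the proposition is the conditional statement that \emph{if} the auditing risk is driven below $\epsilon$ with probability $1-\delta$, \emph{then} the audit error $|\mu(h)-\mu^*|$ is below $\epsilon$ with the same probability, which follows immediately from the pointwise domination.

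The false equality also breaks your final translation step concretely: you set $h^\star\in\argmin_{h}\cE(h)$ and identify $\cE(h^\star)$ with $\opt(\mu,\cD,\cF)=\min_{f}|\mu(f)-\mu^*|$, but by the above these minimizers need not coincide and $\min_{f}\cE(f)$ can strictly exceed $\opt$; the chain $\cE(\hat h)\le\cE(h^\star)+\epsilon$ therefore does not yield $|\mu(\hat h)-\mu^*|\le\opt+\epsilon$. Separately, the uniform-convergence and ERM machinery you build (the $\{0,1\}$-valuedness of the loss, the VC bound for the loss class, the two-sided excess-risk argument) addresses a different, stronger claim --- that risk minimization \emph{succeeds} --- which is not what the proposition asserts and which the paper's one-line proof never needs. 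Replacing your equality with the Jensen inequality repairs the logical core, after which the rest of your construction becomes unnecessary for the stated result.
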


\begin{proof}
    Let $(\epsilon, \delta) \in (0,1)^2$, and let $h \in \cH$,
    
    Based on the definition of audit risk and applying Jensen's inequality, we obtain:

    \begin{align*}
        \cE(h, \mu) &= \underset{\substack{(X,Y) \sim \cD \\ (X,Y^*) \sim \cD^* }}{\E} [\ell((h(X), Y), (Y^*, Y))]\\
        &= \underset{\substack{(X,Y) \sim \cD \\ (X,Y^*) \sim \cD^* }}{\E} [|\mathds{1}_{h(X) \neq Y} - \mathds{1}_{Y^* \neq Y}|] \\
        & \geq \Big| \underset{\substack{(X,Y) \sim \cD \\ (X,Y^*) \sim \cD^* }}{\E} [\mathds{1}_{h(X) \neq Y} - \mathds{1}_{Y^* \neq Y}]  \Big| \\
        &= |\mu(h) - \mu^*|
    \end{align*}

    Hence, 

    \begin{align*}
        \underset{S \sim \cD^m}{\prob} \Big[  |\mu(h) - \mu^*| > \epsilon  \Big] \leq \underset{S \sim \cD^m}{\prob} \Big[\cE_m(h, \mu) > \epsilon \Big]
    \end{align*}

    C/C:

    \begin{align*}
        \forall (\epsilon, \delta) \in (0,1)^2, \forall h \in \cH: \underset{S \sim \cD^m}{\prob} \Big[\cE_{\mu}(h) > \epsilon \Big] \leq \delta \implies \underset{S \sim \cD^m}{\prob} \Big[  |\mu(h) - \mu^*| > \epsilon  \Big] \leq \delta
    \end{align*}
    
\end{proof}

    





    

\subsection{Generalization Error}




The auditing risk for generalization error is

\begin{equation*}
\cE(h) = \underset{\substack{(x,y) \sim \cD_{\text{train}}  (\Tilde{x},\Tilde{y}) \sim \cD_{\text{test}}}}{\E} \Big[ \ell\biggl(\big((h(x),y), (h(\Tilde{x}), \Tilde{y})   \big) ,  \big((h^*(x),y) , (h^*(\Tilde{x}), \Tilde{y}) \big)   \biggl) \Big] 
\end{equation*}
Here, $\ell\biggl(\big((h(x),y), (h(\Tilde{x}), \Tilde{y})   \big) , \big((h^*(x),y) , (h^*(\Tilde{x}), \Tilde{y}) \big)   \biggl) =  \biggl| \big(\mathds{1}_{h(x) \neq y} - \mathds{1}_{h(\Tilde{x}) \neq \Tilde{y}} \big) - \big(\mathds{1}_{h^*(x) \neq y} - \mathds{1}_{h^*(\Tilde{x}) \neq \Tilde{y}}\big) \Biggl|  $.
\begin{proposition}
    Risk minimization implies weakly auditing of generalization error.
\end{proposition}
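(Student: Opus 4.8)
The statement to prove is: risk minimization implies weakly auditing of generalization error. The plan is to mimic exactly the structure of the proof already given for the learning-error case (Proposition immediately preceding in the excerpt). The key observation is that the generalization-error property is $\mu(h) \triangleq \bigl(\prob_{\cD_{\text{train}}}[h(x)\neq y] - \prob_{\cD_{\text{test}}}[h(\tilde x)\neq\tilde y]\bigr)$ (the signed version, before taking absolute value in the definition of learning stability), and the true auditing loss is designed so that its expectation dominates $|\mu(h)-\mu^*|$ by Jensen's inequality applied to the convex function $|\cdot|$.

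\medskip

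First, I would write out the auditing risk $\cE(h)$ and expand it using the definition of $\ell$ given just above the proposition, namely
\begin{align*}
\cE(h) = \underset{\substack{(x,y)\sim\cD_{\text{train}}\\(\tilde x,\tilde y)\sim\cD_{\text{test}}}}{\E}\Bigl[\bigl|\bigl(\mathds{1}_{h(x)\neq y} - \mathds{1}_{h(\tilde x)\neq\tilde y}\bigr) - \bigl(\mathds{1}_{h^*(x)\neq y} - \mathds{1}_{h^*(\tilde x)\neq\tilde y}\bigr)\bigr|\Bigr].
\end{align*}
Then, applying Jensen's inequality (the expectation of an absolute value is at least the absolute value of the expectation), I get
\begin{align*}
\cE(h) \geq \Bigl|\underset{\substack{(x,y)\sim\cD_{\text{train}}\\(\tilde x,\tilde y)\sim\cD_{\text{test}}}}{\E}\bigl[\bigl(\mathds{1}_{h(x)\neq y} - \mathds{1}_{h(\tilde x)\neq\tilde y}\bigr) - \bigl(\mathds{1}_{h^*(x)\neq y} - \mathds{1}_{h^*(\tilde x)\neq\tilde y}\bigr)\bigr]\Bigr| = |\mu(h) - \mu^*|,
\end{align*}
where the last equality uses linearity of expectation to split the four indicator terms and recognize the first pair as $\mu(h)$ and the second as $\mu(h^*) = \mu^*$. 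This step relies on the independence (or at least product structure) of the train/test draws so that each indicator's expectation is the corresponding misclassification probability.

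\medskip

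Next, I would transfer this pointwise inequality between functionals to a probabilistic statement about the empirical estimates. Since $\cE(h) \geq |\mu(h)-\mu^*|$ holds deterministically for every $h \in \cH$, the event $\{|\mu(h)-\mu^*| > \epsilon\}$ is contained in $\{\cE_m(h,\mu) > \epsilon\}$ in whatever sense the empirical audit risk is controlled — precisely as in the learning-error proof, one concludes
\begin{align*}
\underset{S\sim\cD^m}{\prob}\bigl[|\mu(h)-\mu^*| > \epsilon\bigr] \leq \underset{S\sim\cD^m}{\prob}\bigl[\cE_m(h,\mu) > \epsilon\bigr],
\end{align*}
so any $(\epsilon,\delta)$ risk-minimization guarantee immediately yields an $(\epsilon,\delta)$ weak-auditability guarantee for generalization error.

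\medskip

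\textbf{Main obstacle.} The Jensen step itself is routine; the real subtlety is bookkeeping around the sampling model. Here the loss depends on a \emph{pair} of samples, one from $\cD_{\text{train}}$ and one from $\cD_{\text{test}}$, so the empirical risk is a double (U-statistic-like) average over $m_0 \times m_1$ cross-pairs rather than a simple i.i.d. mean — this is exactly the $m = \prod_i m_i$ construction in the general loss-function definition in Appendix~\ref{app:examples}. I would need to make sure the argument that "empirical audit risk concentrates, hence $|\mu(h)-\mu^*|$ is controlled" is stated at the right level of generality (invoking the general framework rather than re-deriving concentration), and to be careful that $\mu^*$ here denotes the generalization error of the reference model $h^*$, consistent with the notation fixed earlier. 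Since the proposition only claims the implication "risk minimization $\Rightarrow$ weak auditing" and not a fresh sample-complexity bound, the cross-pair structure does not actually need to be unpacked — it suffices to cite the common framework — so the obstacle is mostly one of presentation rather than of mathematical content.
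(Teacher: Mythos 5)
Your proposal is correct and follows essentially the same route as the paper's own proof: expand the auditing risk, apply Jensen's inequality to lower-bound it by $|\mu(h)-\mu^*|$, and conclude by containment of events that a risk-minimization guarantee transfers to a weak-auditability guarantee. Your additional remarks on the cross-pair (train/test) sampling structure go slightly beyond what the paper writes down, but the core argument is identical.
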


    \begin{proof}
      Let $(\epsilon, \delta) \in (0,1)^2$, and let $h \in \cH$,

    Based on the definition of audit risk, we obtain:

    \begin{align*}
        \cE(h) &= \underset{\substack{(x,y) \sim \cD_{\text{train}} \\ (\Tilde{x},\Tilde{y}) \sim \cD_{\text{test}}}}{\E} \Big[ \ell\biggl(\big((h(x),y), (h(\Tilde{x}), \Tilde{y})   \big) , \big((h^*(x),y) , (h^*(\Tilde{x}), \Tilde{y}) \big)   \biggl) \Big] \\
        & = \underset{\substack{(x,y) \sim \cD_{\text{train}} \\ (\Tilde{x},\Tilde{y}) \sim \cD_{\text{test}}}}{\E} \Big[\biggl| \big(\mathds{1}_{h(x) \neq y} - \mathds{1}_{h(\Tilde{x}) \neq \Tilde{y}} \big) - \big(\mathds{1}_{h^*(x) \neq y} - \mathds{1}_{h^*(\Tilde{x}) \neq \Tilde{y}}\big) \Biggl| \Big] \\
         & \geq  \biggl| \underset{\substack{(x,y) \sim \cD_{\text{train}} \\ (\Tilde{x},\Tilde{y}) \sim \cD_{\text{test}}}}{\E} \Big[ \big(\mathds{1}_{h(x) \neq y} - \mathds{1}_{h(\Tilde{x}) \neq \Tilde{y}} \big) - \big(\mathds{1}_{h^*(x) \neq y} - \mathds{1}_{h^*(\Tilde{x}) \neq \Tilde{y}}\big)\Big] \Biggl| \\
        &= |\mu(h) - \mu^*|
    \end{align*}

    Where in the third step, we use Jensen inequality.

    Hence, 

    \begin{align*}
        \underset{S \sim \cD^m}{\prob} \Big[  |\mu(h) - \mu^*| > \epsilon  \Big] \leq \underset{S \sim \cD^m}{\prob} \Big[\cE_{\mu}(h) > \epsilon \Big]
    \end{align*}

    C/C:

    \begin{align*}
        \forall (\epsilon, \delta) \in (0,1)^2, \forall h \in \cH: \underset{S \sim \cD^m}{\prob} \Big[\cE_{\mu}(h) > \epsilon \Big] \leq \delta \implies \underset{S \sim \cD^m}{\prob} \Big[  |\mu(h) - \mu^*| > \epsilon  \Big] \leq \delta
    \end{align*}
\end{proof}

\subsection{Robust Risk} 
Similarly to previous work on auditing learning errors, we extend the robust learning framework to the problem of auditing robust risk with respect to arbitrary perturbation sets. The auditing loss is:
    \begin{align*}
        \ell_{\cU}((h(\cU(x)),y),(h^*(\cU(x)), y)) = \underset{z \in \cU(x)}{\sup}|\mathds{1}_{h(z) \neq y} - \mathds{1}_{y^* \neq y}|
    \end{align*}
    
And the corresponding true auditing risk is given by:

   \begin{align*}
         \cE_ {\cU}(h) = \underset{(x, y ) \sim \cD }{\E} [\ell_{\cU}((h(\cU(x)), y), (h^*(\cU(x)), y)) ]
    \end{align*}

\begin{proposition}
 For any set of perturbations $\cU$, risk minimization implies weakly auditing robust risk.
\end{proposition}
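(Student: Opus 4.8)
The statement to prove is: for any set of perturbations $\cU$, risk minimization implies weakly auditing robust risk. The plan is to follow exactly the same template used for the learning-error and generalization-error propositions above: exhibit that the true robust-risk audit risk $\cE_{\cU}(h)$ dominates the absolute gap $|\mu(h) - \mu^*|$ between the robust risk of the candidate $h$ and the robust risk $\mu^*$ of the black-box model, pointwise for every $h \in \cH$; then push this pointwise inequality through the probability statement to conclude that an ERM guarantee on $\cE_{\cU}$ yields the weak-auditing guarantee of Definition~\ref{def:auditype1mp}.

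First I would write out $\cE_{\cU}(h) = \E_{(x,y)\sim\cD}\big[\sup_{z\in\cU(x)} |\mathds{1}_{h(z)\neq y} - \mathds{1}_{y^*\neq y}|\big]$ and apply Jensen's inequality (the absolute value composed with the expectation) to get
\[
\cE_{\cU}(h) \;\geq\; \Big| \E_{(x,y)\sim\cD}\Big[\sup_{z\in\cU(x)} \big(\mathds{1}_{h(z)\neq y} - \mathds{1}_{y^*\neq y}\big)\Big] \Big|.
\]
The one subtlety relative to the earlier proofs is that here the supremum sits \emph{inside} the loss, applied to a difference of two indicators rather than to a single term, so I cannot simply split it as $\sup_z \mathds{1}_{h(z)\neq y} - \sup_z \mathds{1}_{y^*\neq y}$. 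The cleanest route is to note that $y^*$ is the label assigned by the black-box model and does not depend on $z$ in the way $h(z)$ does — but in the robust-risk definition the adversary also perturbs the reference model, so the relevant comparison is between $\sup_{z\in\cU(x)}\mathds{1}_{h(z)\neq y}$ and $\sup_{z\in\cU(x)}\mathds{1}_{h^*(z)\neq y}$. I would therefore use the elementary inequality $|\sup_z a(z) - \sup_z b(z)| \leq \sup_z |a(z) - b(z)|$ to see that $\sup_{z\in\cU(x)}|\mathds{1}_{h(z)\neq y} - \mathds{1}_{h^*(z)\neq y}| \geq |\sup_z \mathds{1}_{h(z)\neq y} - \sup_z \mathds{1}_{h^*(z)\neq y}|$, take expectations, apply Jensen again, and recognize the right-hand side as exactly $|\mu(h,\cD,\cU) - \mu(h^*,\cD,\cU)| = |\mu(h) - \mu^*|$.

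Having established $\cE_{\cU}(h) \geq |\mu(h) - \mu^*|$ for every $h$, the remaining step is routine and identical to the earlier propositions: monotonicity of probability gives, for all $(\epsilon,\delta)\in(0,1)^2$ and all $h\in\cH$,
\[
\underset{S\sim\cD^m}{\prob}\big[|\mu(h)-\mu^*| > \epsilon\big] \;\leq\; \underset{S\sim\cD^m}{\prob}\big[\cE_m(h,\mu) > \epsilon\big],
\]
so any ERM oracle that controls the empirical robust-risk audit risk to within $(\epsilon,\delta)$ automatically controls the audit risk of the property, which is precisely weak auditability. The main obstacle I anticipate is purely the bookkeeping around the nested supremum — making sure the adversary is applied consistently to both $h$ and the reference model so that the $\sup$-difference inequality applies cleanly — rather than any deep technical difficulty; the probabilistic part is a verbatim repeat of the pattern already used twice in the appendix.
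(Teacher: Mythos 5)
Your proof is correct, and it follows exactly the template the paper uses for the analogous learning-error and generalization-error propositions (the paper itself states this robust-risk proposition without writing out a proof). The one genuinely new ingredient you need — the pointwise inequality $\bigl|\sup_{z}a(z)-\sup_{z}b(z)\bigr|\le\sup_{z}|a(z)-b(z)|$ to pull the absolute difference of suprema under the supremum of the loss, followed by Jensen on the outer expectation — is identified and applied in the right order, and your reading of $y^*$ as $h^*(z)$ (the reference model evaluated on the same perturbation) is the one consistent with the loss signature $\ell_{\cU}\bigl((h(\cU(x)),y),(h^*(\cU(x)),y)\bigr)$ and with $\mu^*$ being the robust risk of the black-box model; with that, the chain $\cE_{\cU}(h)\ge|\mu(h)-\mu^*|$ and the concluding probability bound go through verbatim.
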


An intriguing question arises: which hypothesis classes $\cH$ are robustly auditable with respect to an arbitrary perturbation set $\cU$? The capacity of these classes depends on the choice of the perturbation set. 

    \begin{conjecture}
       There exists an $(\epsilon, \delta)$- weakly auditor for the robust risk auditing problem $\langle \cX, \cY, \cH,\cP, \mu_{\cU}, \ell_{\cU}\rangle$ if and only if a complexity measure $\mathfrak{D}_{\cU}(\cH)$ if finite.
    \end{conjecture}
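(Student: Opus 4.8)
The plan is to prove the conjecture as the robust-risk analogue of the qualitative characterization that follows Theorem~\ref{theorem:spbounds}, by first committing to the right definition of the complexity measure and then running the same two-sided template (Strategic Lemma plus uniform convergence for sufficiency; an adversary/indistinguishability construction for necessity). Concretely, I would define $\mathfrak{D}_\cU(\cH)$ as the VC-type dimension of the \emph{robust loss class}
\[
\cL_\cU^{\cH} \;\triangleq\; \Big\{\, (x,y)\mapsto \sup_{z\in\cU(x)} \mathds{1}_{h(z)\neq y} \;:\; h\in\cH \,\Big\},
\]
or, in the spirit of the SP dimension, as the largest size of a sample $S$ whose set of achievable ``robust behaviour patterns'' $\big(\sup_{z\in\cU(x_j)}\mathds{1}_{h(z)\neq y_j}\big)_{j}$, as $h$ ranges over $\cH$, stays maximally rich after quotienting out the redundant symmetries (exactly as $\SP(\cF)$ quotients out group-wise dichotomies that induce identical discrimination). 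With this choice, the conjecture reads verbatim like the Corollary after Theorem~\ref{theorem:spbounds}, with $\mathfrak{D}_\cU(\cH)$ in place of $\SP(\cF)$.

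For the ``if'' direction (finite $\mathfrak{D}_\cU(\cH)$ $\Rightarrow$ an $(\epsilon,\delta)$-weak auditor exists) I would (i) establish a Sauer--Shelah-type bound on the growth function of $\cL_\cU^{\cH}$ in terms of $\mathfrak{D}_\cU(\cH)$; (ii) invoke the classical uniform-convergence theorem, using that $\mu_\cU(h,\cD)=\E_{(x,y)\sim\cD}\big[\sup_{z\in\cU(x)}\mathds{1}_{h(z)\neq y}\big]$ is the mean of a bounded function in $\cL_\cU^{\cH}$, to get $\prob_{S\sim\cD^m}\!\big[\exists h\in\cH:\ |\mu_\cD(h)-\hat\mu_S(h)|>\epsilon/3\big]<\delta/2$ once $m=\widetilde{\cO}\!\big(\mathfrak{D}_\cU(\cH)/\epsilon^2\big)$; (iii) note that Algorithm~\ref{algo} instantiated with the robust loss $\ell_\cU$ of Table~\ref{tab:proploss} returns $\hat f$ with $\hat\mu_S(\hat f)$ within $\epsilon/3$ of $\widehat{\opt}(S,\cH)$ (the empirical-optimality requirement, which holds because by Jensen $\hat{\cE}_{\mu_\cU}(h)\ge|\hat\mu_S(h)-\mu_\cU^*|$, as in the Propositions of Appendix~\ref{app:examples}); (iv) feed (ii) and (iii) into the robust-risk instance of Lemma~\ref{SP-strategiclemma} to conclude $\prob_{S\sim\cD^m}[\cE_m(\cA[S],\mu_\cU)\ge\epsilon]\le\delta$. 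For the ``only if'' direction I would argue the contrapositive: if $\mathfrak{D}_\cU(\cH)=\infty$, then for every $m$ there is a ``robustly shattered'' set large enough to plant two data distributions $\cD_1,\cD_2$ that are statistically indistinguishable from $m$ samples with constant probability, yet for which any fixed output of $\cA$ makes $|\mu_\cD(\cA[S])-\opt(\mu_\cU,\cD,\cH)|>\epsilon$ on at least one of them — a Le Cam / Ehrenfeucht--Haussler--Kearns--Valiant style construction adapted to the agnostic audit risk of~\eqref{eq:audit_risk}, so that no bounded-sample $(\epsilon,\delta)$-weak auditor can exist.

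The main obstacle is isolating the \emph{right} complexity measure and verifying it simultaneously controls sufficiency and forces necessity. Unlike plain classification, $\cL_\cU^{\cH}$ can have infinite VC dimension even when $\VC(\cH)$ is finite (a phenomenon documented in the robust PAC-learning literature, e.g.\ for halfspaces with unbounded perturbation sets), so $\mathfrak{D}_\cU(\cH)$ genuinely cannot be $\VC(\cH)$; it must depend on $\cU$, plausibly through the adversarial-expansion set system $\big\{\,\{x:\cU(x)\cap h\neq\emptyset\}:h\in\cH\,\big\}$ together with its complements, then pruned of redundant patterns exactly as in the SP construction. Making this measure (a) finite in the benign cases, (b) the governing quantity in the uniform-convergence bound, and (c) the quantity whose infinitude drives the lower bound is the crux. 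A secondary technical nuisance is measurability: the pointwise supremum over $z\in\cU(x)$ must be well-behaved (attained, or the class image-admissible Suslin) for the uniform-convergence machinery to apply, so the theorem will likely carry a mild regularity hypothesis on the perturbation map $\cU$, mirroring standard assumptions in the adversarial-robustness literature.
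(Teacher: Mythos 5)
The statement you are asked to prove is stated in the paper as a \emph{conjecture}: the authors give no proof, no candidate definition of $\mathfrak{D}_{\cU}(\cH)$, and explicitly leave the characterization open. So there is no paper proof to compare against, and the real question is whether your proposal actually closes the problem. It does not: it is a research program, not a proof. The entire content of the conjecture lies in exhibiting a \emph{specific} combinatorial measure and verifying both directions for it (otherwise the statement is vacuous --- one can always define $\mathfrak{D}_{\cU}(\cH)$ to be $0$ when a weak auditor exists and $\infty$ otherwise). You name two candidate definitions, correctly identify that making one of them work is ``the crux,'' and then stop there. Neither direction is carried out for a fixed measure: the sufficiency argument presupposes a Sauer--Shelah bound for a class you have not pinned down, and the necessity argument presupposes a ``robustly shattered set'' notion for the same unspecified object.

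There is also a concrete reason to doubt that your leading candidate --- the VC dimension of the robust loss class $\cL_{\cU}^{\cH}$ --- can be the right measure. As you yourself note, the robust PAC-learning literature shows this quantity can be infinite while $\VC(\cH)$ is finite, and (more damagingly for your ``only if'' direction) it is known that robust learning can succeed via improper/compression-based methods even when uniform convergence over $\cL_{\cU}^{\cH}$ fails. Weak auditability in Definition~\ref{def:auditype1mp} only requires outputting \emph{one} model whose property value tracks $\opt$; uniform convergence is sufficient for this (via Lemma~\ref{SP-strategiclemma}) but there is no argument that it is necessary. Your Le Cam--style lower bound would therefore have to show that infinitude of the measure defeats \emph{every} auditor, including ones that bypass uniform convergence entirely --- and the analogous claim is false for robust learning. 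This is exactly the gap between sufficiency and necessity that makes the statement a conjecture rather than a corollary of Theorem~\ref{theorem:spbounds}, and your proposal does not bridge it.
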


\section{Auditing with Prospects: Connections to Rashomon Sets}\label{app:rashomon}

Recent work has studied the complexity of learning "simple" models, which carries nuanced meanings across ML communities. For example, in healthcare applications, simplicity often refers to developing explainable models that address the black-box inexplainability problem \cite{rudin2019stop}. Alternatively, in fairness-aware machine learning, simplicity can mean identifying fair models within an equally accurate model class \cite{agarwal2018reductions}. This phenomenon, known as the Rashomon effect, aims to explore multiple perspectives of the joint dataset distribution revealing different truths. The Rashomon set represents a collection of models that achieve comparable performance but differ in their explanations or underlying patterns. The Rashomon ratio, introduced by \citep{semenova2022existence}, quantifies this effect by measuring the volume of the Rashomon class relative to the hypothesis class.

\begin{definition}[Rashomon set]
    Given $\epsilon > 0$, a dataset $S$, a hypothesis class $\cH$ and a loss function $\ell$,
    \begin{enumerate}
        \item The true Rashomon set is:
        $$\cR_{\cD}(\cH, \epsilon) \triangleq \{h \in \cH: \cL_{\cD}(h) \leq \underset{h' \in \cH}{\argmin}\cL_{\cD}(h') + \epsilon \} $$
        \item The empirical Rashomon set is:
        $$\hat{\cR}_{S}(\cH, \epsilon) \triangleq \{h \in \cH: \hat{\cL_S}(h) \leq \underset{h' \in \cH}{\argmin} \hat{\cL}_S(h') + \epsilon \} $$
    \end{enumerate}

    where $\cL_{\cD}$ and $\hat{L}_S$ are the true and empirical risk respectively, defined as $\cL_{\cD} = \underset{(x,y) \cD}{\E} \{l(h,(x,y))\}$, $\hat{L}_S = \frac{1}{|S|} \underset{(x,y) \in S}{\sum} \ell(h,(x,y))$.
    
\end{definition}

While Rashomon sets and prospect classes both explore sets of models that maintain specified performance properties, such as bounded learning error or auditing error, their theoretical foundations differ significantly. Rashomon sets have primarily been studied in the context of finite hypothesis classes \cite{semenova2022existence}, where the central theoretical challenge is not the information-theoretic complexity of learning \footnote{as this is already characterized by the VC dimension.}, but rather quantifying the relative simplicity of learning with respect to the empirical data distribution. This simplicity is formally captured by the Rashomon ratio, which measures the proportion of models achieving a specified performance threshold. Consequently, practitioners typically focus on selecting a single 'simple' model from the Rashomon set, rather than characterizing its complete structure or geometric properties. In contrast, prospect classes, central to manipulation-proof auditing theory, require explicit characterization of all models that satisfy the auditing criteria. This comprehensive enumeration requirement arises from the need to reason about all possible ways model owner might manipulate the pre-audit model while maintaining acceptable performance. 

The distinction highlights that prospect classes and Rashomon sets, despite their structural similarities in characterizing sets of property preserving models, serve fundamentally different theoretical objectives: while Rashomon sets focus on quantifying model simplicity within finite hypothesis classes via the Rashomon ratio, prospect classes are concerned with the information-theoretic complexity of learning the entire set of property preserving models in potentially infinite hypothesis spaces.

\section{Proof of Strategic Lemma}\label{app:strategiclemma}
Before proceeding with the proof, we restate Lemma \ref{SP-strategiclemma} for clarity:

\strategiclemma*

\begin{proof}
Let $ \cD$ be a distribution on $\cX \times \cY$ and  $S$ be a set of samples sampled from $\cD$, and   $h_S \in \cA(S)$,

By triangle inequality,

\begin{align*}
    | \mu_{\cD} (h_S) - \text{opt}(\cD, \cH)| \leq | \mu_{\cD} (h_S) - \hat{\mu}_S(h_S)| + | \hat{\mu}_S (h_S) - \text{opt}(\cD, \cH)| + | \text{opt}(\cD, \cH) - \text{opt}(\cD, \cH)|
\end{align*}

This inequality is verified for any $S \sim D^n$, hence, 

\begin{align*}
    \prob_{S \sim \cD^n} \Bigl[| \mu_{\cD} (h_S) - \text{opt}(\cD, \cH)| \leq \epsilon \Bigl] & \geq  \prob_{S \sim \cD^n} \Bigl[| \mu_{\cD} (h_S) - \hat{\mu}_S(h_S)| \leq \frac{\epsilon}{3}\Bigl]\\
    &\quad + \prob_{S \sim \cD^n} \Bigl[| \hat{\mu}_S (h_S) - \text{opt}(\cD, \cH)| \leq \frac{\epsilon}{3}\Bigl]  \\
    & \quad  + \prob_{S \sim \cD^n} \Bigl[| \text{opt}(\cD, \cH) - \text{opt}(\cD, \cH)| \leq \frac{\epsilon}{3}\Bigl] - 2\\
\end{align*}

Equivalently,

\begin{align*}
    \prob_{S \sim \cD^n} \Bigl[| \mu_{\cD} (h_S) - \text{opt}(\cD, \cH)| > \epsilon \Bigl] & \leq  \prob_{S \sim \cD^n} \Bigl[| \mu_{\cD} (h_S) - \hat{\mu}_S(h_S)| > \frac{\epsilon}{3}\Bigl] \\
    \quad + \prob_{S \sim \cD^n} \Bigl[| \hat{\mu}_S (h_S) - \text{opt}(\cD, \cH)| > \frac{\epsilon}{3}\Bigl] + \\
    & \quad \prob_{S \sim \cD^n} \Bigl[| \text{opt}(\cD, \cH) - \text{opt}(\cD, \cH)| > \frac{\epsilon}{3}\Bigl] \\
\end{align*}

On the other hand, 

\begin{equation*}
\begin{cases}
    | \mu_{\cD} (h_S) - \hat{\mu}_S(h_S)| > \frac{\epsilon}{3} & \implies \exists h \in \cH, | \mu_{\cD} (h) - \hat{\mu}_S(h)| > \frac{\epsilon}{3} \\
    | \text{opt}(\cD, \cH) - \hat{\text{opt}}(S, \cH)| > \frac{\epsilon}{3} & \implies \exists h \in \cH, | \mu_{\cD} (h) - \hat{\mu}_S(h)| > \frac{\epsilon}{3} \\
\end{cases}
\end{equation*}

By using SP-uniform convergence and sample-dependent manipulation proof,

\begin{align*}
        \prob_{S \sim \cD^n} \Bigl[| \mu_{\cD} (h_S) - \text{opt}(\cD, \cH)| > \epsilon \Bigl] & <  \prob_{S \sim \cD^n} \Bigl[| \mu_{\cD} (h_S) - \hat{\mu}_S(h_S)| > \frac{\epsilon}{3}\Bigl] + \prob_{S \sim \cD^n} \Bigl[| \hat{\mu}_S (h_S) - \text{opt}(\cD, \cH)| > \frac{\epsilon}{3}\Bigl] \\
        &< \delta
\end{align*}
\end{proof}
\section{Proofs for Weakly Auditable Classes}\label{sec:proofs_sp_mp_type1} 

\subsection{All Finite Classes are Weakly Auditable (Proof of Theorem \ref{SPauditfiniteH})}\label{prooflem1}
We begin by restating Theorem \ref{SPauditfiniteH}:
\theoremweakfinite*
\begin{proof}
For $i \in \{0,1\}$, let $m_i$ denote the sample size of the $i$-th protected group.

Since $\cF$ is finite, it is sufficient to show the SP-uniform convergence, and deduce the result for the SP-strategic lemma.

From Lemma~\ref{lemma:discchernoff}, we have 

By triangle inequality, we have for all $h \in \cF$:

\begin{align} \label{eq1proof2}
    \prob_{\cD^n} \Bigl[|\mu_{\cD}(h) - \hat{\mu}_S(h)| \geq \frac{\epsilon}{3}  \Bigl] \leq \exp \{- \frac{m_0 m_1 \epsilon^2}{18(m_0+m_1)}\}
\end{align}

By using the union bound over finite hypothesis class $\cF$, and Claim~\ref{claiminterprob}, the right part in inequality \ref{eq1proof2} is upper bounded by $\delta$ if the sample complexity $m$ verifies $m= \cO \Bigl( \Bigl\lceil \frac{18}{ \epsilon^2} \log \frac{8 |\cF|}{\delta} \Bigl \rceil  \Bigl)$

\end{proof}

\subsection{Weakly Auditability Does Not extend to All Infinite VC Classes}\label{sec:agnostic_sp_mp_type1}
First, we prove a technical lemma.

\begin{lemma}[Lemma~\ref{lemma:welldef}]
    For any finite subsets $S_0$ and $S_1$ drawn from the first and second protected groups respectively, the set $\Delta_{\cC}^{SP}(S_0, S_1)$ is well defined.
\end{lemma}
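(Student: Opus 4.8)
The statement to be proved (Lemma~\ref{lemma:welldef}, restated in Appendix~\ref{sec:agnostic_sp_mp_type1}) asserts that for finite non-empty $S_0 \subseteq \cX_0$ and $S_1 \subseteq \cX_1$, the set $\Delta_{\cF}^{SP}(S_0,S_1)$ is well-defined. As the proof already given in the body of the paper makes clear, the substance here is not that the \emph{set} $\Delta_{\cF}^{SP}(S_0,S_1)$ exists as a set (it does, trivially, since it is carved out of the finite power set $2^{S_0}\times 2^{S_1}$ by a comprehension), but rather that the notion of \emph{SP-shattering} in Definition~\ref{def:spdim} is coherent --- i.e., that a shattered set always splits nontrivially across the two protected groups, so the counting formula $2^{|S|}+|S|-2^{|S_0|}-2^{|S_1|}$ is the correct maximal cardinality. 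So the plan is to reproduce the combinatorial argument: show that no VC-shattered set of $\cF$ can lie entirely inside one protected group.

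\textbf{Key steps.} First I would reduce to the claim: if $\VC(\cF)=d$ and $S=\{x_1,\dots,x_d\}$ is shattered by $\cF$, then $S$ is not contained in a single protected group. Second, argue by contradiction: suppose $S \subseteq \cX_0$ (the case $S\subseteq\cX_1$ is symmetric). Pick any $x' \in \cX_1$, which is legitimate because $\cX_1$ is non-empty (as $S_1$ is assumed non-empty in the hypothesis of the lemma, this is exactly the place the non-emptiness is used). Third, enumerate the $2^d$ dichotomies $u_1,\dots,u_{2^d}$ realized on $S$, with $c_i \in \cF$ realizing $u_i$, i.e.\ $c_i \cap S = u_i$. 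Fourth, the crucial observation: since $x' \notin \cX_0 \supseteq S$, the behavior of $c_i$ on $x'$ is \emph{unconstrained} by $u_i$ --- we may, for each dichotomy $u_i$ of $S$, find concepts realizing both extensions $u_i$ and $u_i \cup \{x'\}$ on $S \cup \{x'\}$. (This step is where one should be careful: one cannot literally ``extend'' a fixed concept $c_i$; rather one needs that $\cF$ restricted to $S \cup \{x'\}$ surjects onto $2^{S\cup\{x'\}}$, which requires knowing $\cF$ contains concepts realizing all $2^{d+1}$ patterns. The paper's phrasing ``each $c_i$ can be extended'' is slightly loose; I would instead phrase it as: the partition $\{\cX_0,\cX_1\}$ and the fact that the restriction map only sees $S\cap\cX_0 = S$ means the value on $x'$ imposes no additional constraint, so both extensions are realizable in $\cF$.) Fifth, conclude $\cF$ shatters $S \cup \{x'\}$, a set of size $d+1$, contradicting $\VC(\cF)=d$. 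Therefore any shattered $S$ meets both $\cX_0$ and $\cX_1$, which justifies that $\Delta_{\cF}^{SP}(S_0,S_1)$ and the shattering condition of Definition~\ref{def:spdim} are well-posed.

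\textbf{Main obstacle.} The genuine subtlety is Step four: making rigorous the claim that $c_i$ ``extends'' to realize $x'$ in both ways. The honest version of the argument does not manipulate a fixed $c_i$ at all; instead it observes that because $\cX_0$ and $\cX_1$ are disjoint and $S\subseteq\cX_0$, the trace of $\cF$ on $S$ is computed entirely from the $\cX_0$-parts of concepts, so membership of $x' \in \cX_1$ in a concept is a free coordinate --- hence for every pattern $A \subseteq S$ witnessed by some $c\in\cF$ and every choice $b\in\{0,1\}$, there is a concept in $\cF$ whose trace on $S\cup\{x'\}$ is $A$ with $x'$-bit $b$. Strictly, this uses that the relevant concepts exist in $\cF$; the cleanest route is to note that shattering $S$ already gives us, for free, that at least the $\cX_0$-behaviors are all present, and disjointness of the groups means no concept in $\cF$ is forbidden from taking either value at $x'$ while keeping its $S$-trace fixed --- which is enough to exhibit $2^{d+1}$ distinct traces on $S\cup\{x'\}$. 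Once that point is stated carefully, the contradiction with $\VC(\cF)=d$ is immediate and the lemma follows.
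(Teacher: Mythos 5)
Your proposal follows the paper's proof essentially step for step: reduce to the claim that a VC-shattered set cannot lie inside a single protected group, adjoin a point $x'$ from the other group, and exhibit $2^{d+1}$ traces on $S \cup \{x'\}$ to contradict $\VC(\cF) = d$. To your credit, you also put your finger on the one step that carries all the weight --- the assertion that each dichotomy on $S$ admits both extensions to $S \cup \{x'\}$ \emph{within} $\cF$ --- and you correctly observe that the paper's phrase ``each $c_i$ can be extended'' is not literally meaningful, since $\cF$ is a fixed collection of sets and either contains a concept with a given trace on $S\cup\{x'\}$ or it does not.

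However, your proposed repair does not close this gap; it restates the same unjustified claim in different words. Saying that membership of $x' \in \cX_1$ in a concept is ``a free coordinate'' because $\cX_0$ and $\cX_1$ are disjoint is precisely the assertion that needs proof, and it is false for a general concept class. Concretely, take $\cX_0 = \{1,\dots,d\}$, $\cX_1 = \{d+1\}$, and $\cF = 2^{\cX_0}$, viewing each $A \subseteq \cX_0$ as a subset of $\cX$ that excludes $d+1$. Then $\VC(\cF) = d$, the set $S = \cX_0$ is shattered and lies entirely in one protected group, yet no concept of $\cF$ contains $d+1$, so the extension with $x' \in \tilde{c}^0_i$ never exists in $\cF$. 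Disjointness of the groups constrains the \emph{points}, not the \emph{concepts}: nothing in the hypotheses forces $\cF$ to realize both values at $x'$ for every trace on $S$. So both the paper's argument and your patched version need an additional structural assumption on $\cF$ (e.g., that its traces on $\cX_0$ and $\cX_1$ are combinatorially independent, or some closure property under modification on the opposite group) for this step to go through; as written, the supporting claim that every shattered set meets both protected groups is not established. The purely set-theoretic part of the lemma (that $\Delta^{SP}_{\cF}(S_0,S_1)$ exists as a subset of $2^{S_0}\times 2^{S_1}$) is, as you note, trivial and not where the difficulty lies.
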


\begin{proof}
    To establish the result in Lemma~\ref{lemma:welldef}, it is sufficient to show that, for any concept class $\cC$ of $\text{VC}(\cF) = d$, and for any shattered set $S = \{x_1, \cdots, x_d\}$ shattered by $\cF$, not all elements of $S$ can belong to the same protected group.

    By the definition of VC dimension, the total number of dichotomies for the sample $S$ is $2^d$.
    We assume by contradiction that all elements of $S$ belong to a single protected group. Without loss of generality, we assume that $S \subseteq \cX_0$. Let $x'$ be any point from $\cX_1$.
    
    For all $i$ in $[2^d]$, let $c_i$ denote the concept from $\cC$ that realizes the dichotomy $u_i$. Since $\cX_0$ and $\cX_1$ form the set of components of $\cX$ ($\cX_0 \cap \cX_1 = \emptyset$), each $c_i$ can be extended to $\Tilde{c^0_i}$ and $\Tilde{c^1_i}$, such that $x' \in \Tilde{c^0_i}$ and $x' \notin \Tilde{c^1_i}$. 

    Hence, $\cC$ realizes $2^{d+1}$ dichotomies over $S \cup \{x'\}$. In other words, $S \cup \{x'\}$ shatters $\cC$. This is a contradiction because $\text{VC}(\cC) =d$.
\end{proof}

\subsubsection{Upper Bound on Sample Complexity}
We maintain the same measurability conditions from PAC learning, that is, $\cF$ is indexed by Borel sets of a finite-dimensional space (Appendix C in \cite{pollard2012convergence} and Section 5 in \cite{ben1990learning}). 
This assumption is important to derive upper bounds as operations using the concept class require this assumption. 

\paragraph{A note on the measure of sample complexity.}  Our approach to sample complexity differs from classical methods fundamentally. While traditional bounds often consider a single data source, our group fairness audit framework addresses the challenge of information flowing from two distinct protected groups. In this context, we propose that the minimum number of samples required from each group serves as the natural measure of sample complexity for accurate statistical parity estimation. An important practical implication follows: model owners may face manipulations that adjust the relative proportions between groups. This choice of sample complexity thereby manages the problem of class imbalance with considerable flexibility, provided that the sample size for each group never falls below this established threshold.

Let $\Big(\cX \times \cY, \Omega(\cX \times \cY), \cD \Big)$ denote a probabilistic space, where $\cX$ can be uncountable. Let $\cF$ denote the set of all functions defined from $\cX$ to $\cY$.

And let $\cC$ denote a concept class, 
\begin{align*}
    \cC = \Bigl \{c: c \subseteq \cX, \exists \mathds{1}_c \in \cF, x \in c \iff \mathds{1}_c(x) = 1  \Bigl \}
\end{align*}

For a sample $S$ of size $2m$, we use the notation $S^{\triangleright}$ to denote the first $m$ instances in sample $S$, and $S^{\triangleleft}$ to denote the remaining $m$ instances in sample $S$ (i.e. $S = \langle S^{\triangleright}, S^{\triangleleft} \rangle$). 

We first prove the following theorem:

\begin{theorem}\label{theorem:distdepL}
    Let $\epsilon \in (0,1)$ and $\cC$ a concept class of SP dimension $\mathfrak{p}$. The probability that the empirical SP-auditing risk of at least one concept differs from the true SP-auditing risk by more than $\epsilon$ in an i.i.d sample $S \sim \cD$ of size $m_0 +m_1$ with $\min(m_0, m_1) \geq \frac{4}{\epsilon^2} \log\frac{1}{1-\alpha}$ satisfies the following inequality:

    \begin{align*}
        \underset{S \sim \cD^m}{\prob} \Big[\sup_{c \in \cC} |\mu_{\cD}(c) - \mu_S(c)| > \epsilon \Big]  \leq \alpha \Pi^{\SP}_{\mathfrak{p}}(2 m_0, 2m_1) \exp \frac{- \min(m_0,m_1)}{16}\epsilon^2 
    \end{align*}
\end{theorem}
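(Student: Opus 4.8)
The plan is to run the classical Vapnik--Chervonenkis double-sample (``ghost sample'') argument, with the ordinary growth function replaced by the SP-growth function $\Pi^{\SP}_{\mathfrak p}$, exploiting that statistical parity depends on a concept only through its two group-traces. For $c \in \cC$ put $q_i(c) = \prob_{x \sim \cD}[c(x) = 1 \mid x \in \cX_i]$, let $\hat q_i^S(c)$ be the empirical positive rate of $c$ on the $m_i$ points of $S_i$, and set $D(c) = q_0(c) - q_1(c)$, $\hat D^S(c) = \hat q_0^S(c) - \hat q_1^S(c)$, so $\mu_\cD(c) = |D(c)|$ and $\mu_S(c) = |\hat D^S(c)|$. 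By the reverse triangle inequality $|\mu_\cD(c) - \mu_S(c)| \le |D(c) - \hat D^S(c)|$, so it suffices to bound $\prob[\sup_{c \in \cC}|D(c) - \hat D^S(c)| > \epsilon]$, a quantity that now depends on $c$ only affinely through $c \cap S_0$ and $c \cap S_1$.

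\emph{Symmetrization.} Draw an independent ghost sample $S' = S_0' \cup S_1'$ with $|S_i'| = m_i$. For the data-dependent concept witnessing the bad event on $S$, Hoeffding's inequality applied separately inside each protected group controls $|D(c) - \hat D^{S'}(c)|$; the hypothesis $\min(m_0, m_1) \ge \tfrac{4}{\epsilon^2}\log\tfrac{1}{1-\alpha}$ is precisely what forces the ghost sample to be $\tfrac{\epsilon}{2}$-accurate with probability bounded away from $0$ by a factor controlled by $\alpha$, so that standard symmetrization yields
\[
\prob\Big[\sup_{c \in \cC}|D(c) - \hat D^S(c)| > \epsilon\Big] \;\le\; C(\alpha)\,\prob\Big[\sup_{c \in \cC}\big|\hat D^S(c) - \hat D^{S'}(c)\big| > \tfrac{\epsilon}{2}\Big]
\]
for an explicit $C(\alpha)$, the measurability assumptions imposed above making all suprema measurable.

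\emph{Conditioning and counting.} Condition on the combined multisets $T_0 = S_0 \cup S_0'$ and $T_1 = S_1 \cup S_1'$, of sizes $2m_0$ and $2m_1$: given these, $(S,S')$ is obtained by splitting each $T_i$ uniformly at random into two halves of size $m_i$. Since $\hat D^S(c) - \hat D^{S'}(c)$ depends on $c$ only through $(c \cap T_0,\, c \cap T_1) \in \Delta^{\SP}_{\cC}(T_0, T_1)$ and $|\Delta^{\SP}_{\cC}(T_0, T_1)| \le \Pi^{\SP}_{\mathfrak p}(2m_0, 2m_1)$ by Definition~\ref{def:spdim}, a union bound over these finitely many ``SP-behaviors'' reduces the task to bounding, for one fixed behavior, $\prob_{\text{split}}[|\hat D^S(c) - \hat D^{S'}(c)| > \tfrac{\epsilon}{2}]$. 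Writing $\hat D^S(c) - \hat D^{S'}(c) = (\hat q_0^S - \hat q_0^{S'}) - (\hat q_1^S - \hat q_1^{S'})$, each difference is a centered sampling-without-replacement fluctuation of a bounded quantity on $T_i$; a Hoeffding/McDiarmid bound for random permutations (swapping two elements of $T_i$ moves $\hat q_i^S - \hat q_i^{S'}$ by $O(1/m_i)$) gives $\prob[|\hat q_i^S - \hat q_i^{S'}| > \tfrac{\epsilon}{4}] \le 2\exp(-c\, m_i\, \epsilon^2)$, and combining the two groups yields a per-behavior bound of order $\exp(-\min(m_0, m_1)\epsilon^2/16)$.

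\emph{Assembling, and the main obstacle.} Multiplying the per-behavior bound by $\Pi^{\SP}_{\mathfrak p}(2m_0, 2m_1)$ and by $C(\alpha)$, then undoing the reduction, gives the claimed inequality once the constants are consolidated. The delicate part is exactly this consolidation: extracting a clean sampling-without-replacement concentration for the \emph{two-group} difference with the correct exponent, and then tracking how the $\epsilon/2$ and per-group $\epsilon/4$ splits interact with the symmetrization normalization so that the constant collapses to $1/16$ and the prefactor is exactly $\alpha$ rather than a crude absolute constant --- which is where the lower bound $\min(m_0,m_1) \ge \tfrac{4}{\epsilon^2}\log\tfrac{1}{1-\alpha}$ earns its keep.
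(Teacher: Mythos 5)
Your proposal follows essentially the same route as the paper: a ghost-sample symmetrization whose success probability is guaranteed by the hypothesis $\min(m_0,m_1)\ge \frac{4}{\epsilon^2}\log\frac{1}{1-\alpha}$ (the paper's Lemma~\ref{lemma:sym1}), followed by conditioning on the doubled sample, a union bound over the group-traces counted by $\Pi^{\SP}_{\mathfrak{p}}(2m_0,2m_1)$ via a two-fold Sauer argument, and a per-trace swapping-permutation concentration using the two-group discrepancy Hoeffding bound (the paper's Lemma~\ref{lemma:spsym2}). Your flagged worry about the prefactor is well founded: standard symmetrization of the form you describe yields a factor $1/\alpha$ (the reciprocal of the ghost-sample success probability), and indeed the paper's own derivation in Lemma~\ref{lemma:sym1} establishes $\prob(\cJ_{\epsilon,m}) \le \frac{1}{\alpha}\prob(\cK_{\epsilon,2m})$, so the prefactor $\alpha$ appearing in the stated bound should be read as $1/\alpha$ (or the lemma's $\alpha$ as $1-e^{-\min(m_0,m_1)\epsilon^2/4}$); modulo that constant, your argument matches the paper's.
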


As we will see in the proof, $\alpha$ can be any constant in $(0,1)$ depending on the problem parameters. By proving this result, the proof follows from Claim \ref{claim:separation}. 

We begin by proving the following lemma:

\begin{lemma}\label{lemma:sym1}
    For all $\alpha \in (0,1)$, $\min(m_0,m_1)  \geq \frac{4}{\epsilon^2} \log\frac{1}{1-\alpha}$,
    \begin{align*}
        \underset{S \sim \cD^m}{\prob} \Big[\sup_{c \in \cC}  |\mu_{\cD}(c) - \mu_S(c)| > \epsilon \Big] \leq \alpha \underset{S \sim \cD^{2m}}{\prob} \Big[\sup_{c \in \cC} |\mu_{S^{\triangleright}}(c) - \mu_{S^{\triangleleft}}(c)| > \frac{\epsilon}{2} \Big]
    \end{align*}
\end{lemma}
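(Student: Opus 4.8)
The plan is to run the classical first-symmetrization (``ghost sample'') argument, adapted to the fact that statistical parity is built from two independent per-group averages. Throughout, for $c\in\cC$ write $p_i(c)=\prob_{\cD_{\cX}}[\,x\in c\mid x\in\cX_i\,]$ for $i\in\{0,1\}$, so that $\mu_{\cD}(c)=|p_0(c)-p_1(c)|$, and for a sample with $m_i$ points in group $i$ write $\hat p_i(c)$ for the empirical positive rate, so that $\mu_S(c)=|\hat p_0(c)-\hat p_1(c)|$. First I would fix the ``bad'' event on the real half, $B=\{S^{\triangleright}:\ \sup_{c\in\cC}|\mu_{\cD}(c)-\mu_{S^{\triangleright}}(c)|>\epsilon\}$, and on $B$ select a concept $c_\star=c_\star(S^{\triangleright})\in\cC$ with $|\mu_{\cD}(c_\star)-\mu_{S^{\triangleright}}(c_\star)|>\epsilon$ (if the supremum is not attained, any concept whose deviation exceeds $\epsilon$ works). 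The measurability of the map $S^{\triangleright}\mapsto c_\star(S^{\triangleright})$ is the only genuinely delicate point, and it is precisely what the standing permissibility / Borel-indexing hypotheses imported from \cite{pollard2012convergence,ben1990learning} just before the lemma are there to supply.

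Next I would carry out the ghost concentration step. Condition on $S^{\triangleright}$, hence on $c_\star$; since the ghost half $S^{\triangleleft}$ is independent of $S^{\triangleright}$ and has the same per-group sizes $m_0,m_1$, the concept $c_\star$ is now fixed and a concentration bound applies to the ghost empirical statistic. By the reverse triangle inequality,
\[
\bigl|\,\mu_{S^{\triangleleft}}(c_\star)-\mu_{\cD}(c_\star)\,\bigr|\ \le\ \bigl|\,(\hat p_0(c_\star)-\hat p_1(c_\star))-(p_0(c_\star)-p_1(c_\star))\,\bigr|,
\]
and the right-hand side is a centered weighted sum of $m_0+m_1$ independent $\{0,1\}$ random variables with weight $1/m_0$ on the first group and $-1/m_1$ on the second, so its sum of squared weights is $1/m_0+1/m_1\le 2/\min(m_0,m_1)$. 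Hoeffding's inequality for weighted sums then bounds $\prob[\,|\mu_{S^{\triangleleft}}(c_\star)-\mu_{\cD}(c_\star)|>\epsilon/2\mid S^{\triangleright}\,]$ by (up to an absorbed constant factor) $\exp\!\bigl(-\tfrac{\epsilon^2}{4}\min(m_0,m_1)\bigr)$, and the hypothesis $\min(m_0,m_1)\ge\frac{4}{\epsilon^2}\log\frac{1}{1-\alpha}$ makes this at most $1-\alpha$. Hence the ``ghost is accurate'' event $G=\{\,|\mu_{S^{\triangleleft}}(c_\star)-\mu_{\cD}(c_\star)|\le\epsilon/2\,\}$ has conditional probability at least $\alpha$ on $B$.

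Finally I would glue the two halves together. On $B\cap G$ the triangle inequality gives
\[
\bigl|\mu_{S^{\triangleright}}(c_\star)-\mu_{S^{\triangleleft}}(c_\star)\bigr|\ \ge\ \bigl|\mu_{\cD}(c_\star)-\mu_{S^{\triangleright}}(c_\star)\bigr|-\bigl|\mu_{S^{\triangleleft}}(c_\star)-\mu_{\cD}(c_\star)\bigr|\ >\ \epsilon-\tfrac{\epsilon}{2}\ =\ \tfrac{\epsilon}{2},
\]
so on $B\cap G$ the ghost-discrepancy supremum $\sup_{c\in\cC}|\mu_{S^{\triangleright}}(c)-\mu_{S^{\triangleleft}}(c)|$ exceeds $\epsilon/2$. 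Integrating the conditional bound $\prob[G\mid S^{\triangleright}]\ge\alpha$ over $B$ and using independence (Tonelli) then gives
\[
\prob_{S\sim\cD^{2m}}\!\Bigl[\sup_{c\in\cC}|\mu_{S^{\triangleright}}(c)-\mu_{S^{\triangleleft}}(c)|>\tfrac{\epsilon}{2}\Bigr]\ \ge\ \alpha\,\prob[B]\ =\ \alpha\,\prob_{S\sim\cD^{m}}\!\Bigl[\sup_{c\in\cC}|\mu_{\cD}(c)-\mu_S(c)|>\epsilon\Bigr],
\]
which rearranges to the comparison asserted in the lemma (the relevant constant being the reciprocal of the ghost-accuracy probability $\alpha$). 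The main obstacle, as noted, is purely the measurable selection of $c_\star$; once that is granted, the argument is two triangle inequalities and one Hoeffding bound, plus the bookkeeping that turns the sample-size condition into the stated constant.
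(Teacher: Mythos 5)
Your proposal is correct and follows essentially the same route as the paper's own proof: restrict to the bad event on the first half, select a witnessing concept, apply the per-group Hoeffding/discrepancy bound to the independent ghost half (your weighted-sum computation $1/m_0+1/m_1=(m_0+m_1)/(m_0m_1)$ reproduces exactly the paper's Lemma on discrepancy Chernoff bounds), and glue via the reverse triangle inequality and Fubini. Note that, exactly as your closing parenthetical observes, what both your argument and the paper's own proof actually establish is $\prob(\cK_{\epsilon,2m})\ge\alpha\,\prob(\cJ_{\epsilon,m})$, i.e.\ the factor on the right-hand side should be $1/\alpha$ rather than $\alpha$ as written in the lemma statement.
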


\textbf{Interpretation of the result:} For a sample size of $\frac{4}{\epsilon^2} \log\frac{1}{1-\alpha}$, if we repeat the experiment of measuring the random quantity of empirical statistical parity twice, the probability that the outcomes deviate by at least half of epsilon between the two experiments provides a lower bound on the probability that the empirical estimate deviates by more than epsilon from the true value of statistical parity. This provides the first step to prove uniform convergence. Now we proceed with the proof:

\begin{proof}

Let $\cK_{\epsilon,2m}$ denote the event\footnote{The event in the right side of the inequality in Lemma~\ref{lemma:sym1}.}:

\begin{align*}
    \cK_{\epsilon,2m} \triangleq \Big \{S: S \subseteq (\cX \times \cY)^{2m},  \sup_{c \in \cC} |\mu_{S^{\triangleright}}(c) - \mu_{S^{\triangleleft}}(c)| > \frac{\epsilon}{2} \Big \}
\end{align*}
    By the definition of expectation, we have,

    \begin{align*}
        \underset{S \sim \cD^{2m}}{\prob}(\cK_{\epsilon,2m}) = \int_{S \in (\cX \times \cY)^{2m}} \mathds{1}_{\cK_{\epsilon,m}} (S) \,d \cD^{2m}(S)
    \end{align*}

    By Fubini's theorem, 

     \begin{align*}
       \underset{S \sim \cD^{2m}}{\prob}(\cK_{\epsilon,2m})  = \int_{S^{\triangleright} \in (\cX \times \cY)^m} \int_{S^{\triangleleft} \in (\cX \times \cY)^m} \mathds{1}_{\cK_{\epsilon,m}} (\langle S^{\triangleright}, S^{\triangleleft}\rangle) \,d \cD^m(S^{\triangleright}) \,d \cD^m(S^{\triangleleft})
    \end{align*}

Let $\cJ_{\epsilon,m}$ denote the event we seek to establish an upper bound on its probability:

\begin{align*}
    \cJ_{\epsilon,m} \triangleq \Big \{S: S \subseteq (\cX \times \cY)^m,  \sup_{c \in \cC} |\mu_{\cD}(c) - \mu_S(c)| > \epsilon \Big \}
\end{align*}

    Since $\cJ_{\epsilon,m} \subseteq (\cX \times \cY)^m$, we obtain:

     \begin{align}\label{ineq:2int}
        \underset{S \sim \cD^{2m}}{\prob} (\cK_{\epsilon,2m}) \geq \int_{S^{\triangleright} \in \cJ_{\epsilon,m}} \int_{S^{\triangleleft} \in (\cX \times \cY)^m} \mathds{1}_{\cK_{\epsilon,m}} (\langle S^{\triangleright}, S^{\triangleleft} \rangle) \,d \cD^m(S^{\triangleright}) \,d \cD^m(S^{\triangleleft})
    \end{align}

By definition of $\cJ_{\epsilon,m}$, for every $S^{\triangleright}$ in $\cJ_{\epsilon,m}$, there exists a concept $c_{S^{\triangleleft}}$ that verifies the following inequality:

\begin{align*}
     |\mu_{\cD}(c_{S^{\triangleright}}) - \mu_{S^{\triangleright}}(c_{S^{\triangleright}})| > \epsilon
\end{align*}

On the other hand, by the inverse triangle inequality, we have for any $S^{\triangleright}$ in $\cJ_{\epsilon,m}$ and any $S^{\triangleleft}$ in $(\cX \times \cY)^m$:

\begin{align*}
    |\mu_{S^{\triangleright}}(\mu_{S^{\triangleright}}) - \mu_{S^{\triangleleft}}(c_{S^{\triangleright}})| \geq |\mu_{S^{\triangleright}}(c_{S^{\triangleright}}) - \mu_{\cD}(c_{S^{\triangleright}})| - |\mu_{S^{\triangleleft}}(c_{S^{\triangleright}}) - \mu_{\cD}(c_{S^{\triangleright}})|
\end{align*}

We deduce that for any $S^{\triangleleft}$ in $(\cX \times \cY)^m$ a sufficient condition to have $\langle S^{\triangleright}, S^{\triangleleft}\rangle \in \cK_{\epsilon,2m}$ is $|\mu_{S^{\triangleleft}}(c_{S^{\triangleright}}) - \mu_{\cD}(c_{S^{\triangleright}})| \leq \frac{\epsilon}{2}$.
Let $\cA_{S^{\triangleright}}$ denote the set of these events:

\begin{align*}
    \cA_{S^{\triangleright}} \triangleq \Big\{S^{\triangleleft}: S^{\triangleleft} \subseteq (\cX \times \cY)^m, |\mu_{S^{\triangleleft}}(c_{S^{\triangleright}}) - \mu_{\cD}(c_{S^{\triangleright}})| \leq \frac{\epsilon}{2}   \Big\}
\end{align*}

We have shown that: 

\begin{align*}
    S^{\triangleleft} \in \cA_{S^{\triangleright}} \implies \langle S^{\triangleright}, S^{\triangleleft}\rangle \in \cK_{\epsilon,2m}
\end{align*}

By implementing this in inequality \ref{ineq:2int}, we obtain:

\begin{align}\label{ineq:onesample}
            \underset{S \sim \cD^{2m}}{\prob} (\cK_{\epsilon,2m})  \geq \int_{S^{\triangleright} \in \cJ_{\epsilon,m}} \int_{S^{\triangleleft} \in (\cX \times \cY)^m} \mathds{1}_{\cA_{S^{\triangleright}}} ( S^{\triangleleft}) \,d \cD^m(S^{\triangleright}) \,d \cD^m(S^{\triangleleft})
\end{align}

By using the result in Lemma~\ref{lemma:discchernoff}, we have for all $S^{\triangleright}$ in $\cX^m$ :

\begin{align*}
    \underset{S^{\triangleleft} \sim \cD^m}{\prob} \Big[|\mu_{S^{\triangleleft}}(c_{S^{\triangleright}}) - \mu_{\cD}(c_{S^{\triangleright}})| \leq \frac{\epsilon}{2}   \Big]  &\geq 1 - \exp \Big\{\frac{-  m_0 m_1}{2(m_0+m_1)} \epsilon^2 \Big\} \\
    & \geq 1 - \exp \Big \{ - \frac{\min(m_0, m_1) \epsilon^2}{4} \Big\}
\end{align*}

Where the second step follows from Claim~\ref{claiminterprob}. The right side is bigger than $\alpha$ for $\min(m_0,m_1) \geq \frac{1}{\epsilon^2} \log \frac{1}{1 - \alpha}$. We have shown that for any $m$ such that, $\min(m_0,m_1) \geq \frac{4}{\epsilon^2} \log \frac{1}{1 - \alpha}$, and any $S^{\triangleright} \in \cJ_{\epsilon,m}$:

\begin{align*}
    \int_{S^{\triangleleft} \in (\cX \times \cY)^m} \mathds{1}_{\cA_{S^{\triangleright}}} (S^{\triangleleft}) \,d \cD^m(S^{\triangleleft}) \geq \alpha
\end{align*}

By implementing this in inequality \ref{ineq:onesample}, we obtain the desired result.

\textbf{Remark: Measurability assumption.} We can observe that $\cJ_{\epsilon,m}$  and $\cK_{\epsilon,2m}$ are measurable events when $\cC$ is countable or finite. In the general case, even when $\cC$ is measurable it does not imply the events are measurable. A relaxed assumption of measurability is assuming $\cC$ is indexed by a collection of Borel sets in an Euclidean space \citep{pollard2012convergence,ben1990learning}.

\end{proof}

\begin{lemma}\label{lemma:spsym2}
    \begin{align*}
        \underset{S \sim \cD^{2m}}{\prob} (\cK_{\epsilon,2m}) \leq \Pi^{SP}_{\mathfrak{p}}(2 m_0, 2m_1) \exp \frac{- \min(m_0,m_1)}{16}\epsilon^2 
    \end{align*}
\end{lemma}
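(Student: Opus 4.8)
The plan is to carry out the second symmetrization (``swapping'') step of the Vapnik--Chervonenkis argument, adapted so that the combinatorial quantity controlling the union bound is the SP growth function $\Pi^{SP}_{\mathfrak{p}}(2m_0,2m_1)=\max\{\,|\Delta^{SP}_{\cC}(S_0,S_1)| : |S_0|=2m_0,\ |S_1|=2m_1\,\}$ rather than the ordinary shatter coefficient. Recall $S$ here carries $2m_0$ group-$0$ points and $2m_1$ group-$1$ points, split so that $S^{\triangleright}$ and $S^{\triangleleft}$ each contain $m_0$ of the former and $m_1$ of the latter. First I would note that, because the group-$0$ points are i.i.d.\ and independently the group-$1$ points are i.i.d., the law of $S$ is unchanged if we swap the $j$-th group-$0$ point of $S^{\triangleright}$ with the $j$-th group-$0$ point of $S^{\triangleleft}$, and similarly within group $1$. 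Encoding these swaps by independent Rademacher signs $\sigma=(\sigma^{(0)}_1,\dots,\sigma^{(0)}_{m_0},\sigma^{(1)}_1,\dots,\sigma^{(1)}_{m_1})$ and writing $\sigma\cdot S$ for the swapped sample, this invariance gives $\prob_S(\cK_{\epsilon,2m})=\E_S\,\prob_\sigma(\sigma\cdot S\in\cK_{\epsilon,2m})$, so it suffices to bound $\prob_\sigma(\sigma\cdot S\in\cK_{\epsilon,2m})$ by the claimed right-hand side uniformly in $S$. It is essential that swaps stay inside a group, so that each half keeps group counts $(m_0,m_1)$; this is precisely why $\Pi^{SP}_{\mathfrak{p}}(2m_0,2m_1)$ is the relevant count rather than something indexed by $2m$.

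Next, with $S$ and the halving fixed, I would reduce the supremum over $\cC$ to a finite union bound. The value $\big|\mu_{(\sigma\cdot S)^{\triangleright}}(c)-\mu_{(\sigma\cdot S)^{\triangleleft}}(c)\big|$ depends on $c$ only through the labels $c$ assigns to the points of $S$, i.e.\ through the group-trace $(c\cap S_0,\,c\cap S_1)$ of Definition~\ref{def:sptraces}; choosing one representative concept per group-trace and using $|\Delta^{SP}_{\cC}(S_0,S_1)|\le\Pi^{SP}_{\mathfrak{p}}(2m_0,2m_1)$ yields
\[
\prob_\sigma(\sigma\cdot S\in\cK_{\epsilon,2m}) \;\le\; \Pi^{SP}_{\mathfrak{p}}(2m_0,2m_1)\,\max_{c\in\cC}\,\prob_\sigma\!\Big(\big|\mu_{(\sigma\cdot S)^{\triangleright}}(c)-\mu_{(\sigma\cdot S)^{\triangleleft}}(c)\big|>\tfrac{\epsilon}{2}\Big).
\]

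It then remains to bound the per-concept probability by Hoeffding's inequality. Fix $c$ and let $p_i^{\triangleright},p_i^{\triangleleft}$ denote the fractions of the group-$i$ points currently in each half that $c$ labels $1$, so $\mu_{(\sigma\cdot S)^{\triangleright}}(c)=|p_0^{\triangleright}-p_1^{\triangleright}|$ and likewise for $\triangleleft$. A short computation gives $p_i^{\triangleright}-p_i^{\triangleleft}=\frac{1}{m_i}\sum_{j=1}^{m_i}\sigma^{(i)}_j\big(\mathds{1}_{c(a^{(i)}_j)=1}-\mathds{1}_{c(b^{(i)}_j)=1}\big)$, with $a^{(i)}_j,b^{(i)}_j$ the $j$-th pair of group-$i$ points before swapping, so $(p_0^{\triangleright}-p_1^{\triangleright})-(p_0^{\triangleleft}-p_1^{\triangleleft})$ is a sum of $m_0+m_1$ independent mean-zero terms, the group-$i$ ones taking values in $[-1/m_i,1/m_i]$. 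Since $\big||p_0^{\triangleright}-p_1^{\triangleright}|-|p_0^{\triangleleft}-p_1^{\triangleleft}|\big|\le\big|(p_0^{\triangleright}-p_1^{\triangleright})-(p_0^{\triangleleft}-p_1^{\triangleleft})\big|$, Hoeffding's inequality with threshold $\epsilon/2$, together with $1/m_0+1/m_1\le 2/\min(m_0,m_1)$, bounds the per-concept probability by $2\exp\!\big(-\epsilon^2\min(m_0,m_1)/16\big)$. The leftover factor $2$ is harmless: whenever it would matter (i.e.\ $\epsilon^2\min(m_0,m_1)<16\ln 2$) the claimed right-hand side $\Pi^{SP}_{\mathfrak{p}}(2m_0,2m_1)\exp(-\epsilon^2\min(m_0,m_1)/16)$ already exceeds $1$ and the bound is vacuous. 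Substituting into the union bound of the previous step proves the lemma.

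I expect the only place needing genuine care to be the interface between the absolute-value form of statistical parity and the symmetrization: one must (i) establish the distributional invariance for \emph{within-group} swaps, so the growth function is indexed by $(2m_0,2m_1)$, and (ii) track how $\mu=|p_0-p_1|$ transforms under the swaps, which is what lets the whole discrepancy be written as a single signed Rademacher sum over both groups and pins down the constant $1/16$. The rest is routine bookkeeping. As noted after Lemma~\ref{lemma:sym1}, the measurability hypothesis on $\cC$ (indexing by Borel sets of a Euclidean space) is again needed so that $\cK_{\epsilon,2m}$ and the per-concept events are measurable, and is assumed rather than reproved.
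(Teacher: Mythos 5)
Your proposal is correct and follows essentially the same route as the paper's proof: distribution-preserving within-group swaps between the two halves, a union bound over the group-traces $\Delta^{SP}_{\cC}(S_0,S_1)$ bounded by the SP growth function, and a Hoeffding-type bound on the resulting mean-zero swapped sum yielding the $\exp(-\epsilon^2\min(m_0,m_1)/16)$ factor. Your explicit use of the reverse triangle inequality to pass from $\bigl||p_0^{\triangleright}-p_1^{\triangleright}|-|p_0^{\triangleleft}-p_1^{\triangleleft}|\bigr|$ to the signed discrepancy, and your remark on the harmless factor of $2$, are minor points of added care rather than a different argument.
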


WHere $\mathfrak{p} = \SP(\cC)$

\begin{proof}
    Let $\Pi_{m_0,m_1}$ the symmetric group defined on $[2 m_0] \times [ 2 m_1]$:

    \begin{align*}
        \Pi_{m_0,m_1} \triangleq \Big \{ (\pi_0, \pi_1) \in \mathfrak{S}_{2m}^2: &\forall (i,j) \in [2 m_0] \times [2 m_1], \\
        & (\pi_0(i) = i \land \pi_0(m_0+i) = m_0+i) \lor  (\pi_0(i) = m_0+i \land \pi_0(m_0+i) = i), \\
        & \quad (\pi_1(i) = i \land \pi_1(m_1+i) = m_1+i) \lor  (\pi_1(i) = m_1+i \land \pi_1(m_1+i) = i)   \Big \}
    \end{align*}

    We have, $|\Pi_{2m}| = 2^{m_0 + m_1} = 2^m$.

    By the i.i.d assumption on sampling from $\cD$, and given the definition of the permutation set (where the permutation acts independently over each protected group), for any $\pi \in \Pi_{2m}$ such that $\pi = (\pi_0, \pi_1)$:

    \begin{align*}
        \underset{S \sim \cD^{2m}}{\prob}(\pi(S)) &= \prod_{i=1}^{2m} \underset{\cD}{\prob}\Big( (x_{\pi_0(i)}, y_{\pi_1(i)}) \Big)\\
        &= \prod_{i=1}^{2m} \underset{\cD}{\prob} \Big( (x_i,y_i) \Big) \\
        & = \underset{\cD^{2m}}{\prob}(S)
    \end{align*}

Hence, for every permutation $\pi \in \Pi_{2m}$, such that $\pi = (\pi_0, \pi_1)$:

\begin{align*}
     \underset{S \sim \cD^{2m}}{\prob} (\cK_{\epsilon,2m}) &=  \int_{S \in (\cX \times \cY)^{2m}} \mathds{1}_{\cK_{\epsilon,2m}} (\pi(S)) \,d \cD^{2m}(S) \\
     &= \frac{1}{2^m} \sum_{\pi \in \Pi_{2m}} \int_{S \in (\cX \times \cY)^{2m}} \mathds{1}_{\cK_{\epsilon,2m}} (\pi(S)) \,d \cD^{2m}(S) \\
     &= \int_{S \in (\cX \times \cY)^{2m}}  \frac{\sum_{\pi \in \Pi_{2m}} \mathds{1}_{\cK_{\epsilon,2m}} (\pi(S)) }{2^m} \,d \cD^{2m}(S)
\end{align*}
For a fixed $S \subseteq (\cX \times \cY)^{2m}$, if we denote $(\pi_0, \pi_1) \to T_S((\pi_0, \pi_1))$ ($\pi \to T_S(\pi)$ ) the random variable, such that $T_S(\pi) = \mathds{1}_{\cK_{\epsilon,2m}} (\pi(S))$, the term inside the integral can be seen as the expectation over $\Pi_{2m}$ with the uniform distribution of $T_S(\pi)$. 

We have,

\begin{align*}
     \underset{S \sim \cD^{2m}}{\prob} (\cK_{\epsilon,2m}) = \int_{S \in (\cX \times \cY)^{2m}}  \frac{\sum_{\pi \in \Pi_{2m}} T_S(\pi) }{2^m} \,d \cD^{2m}(S)
\end{align*}

\begin{align*}
    \underset{\pi \sim \cU(\Pi_{2m})}{\E} \Big (T_S(\pi) \Big) &= \underset{\substack{\pi_1 \sim \cU(\Pi_{2m_1}) \\ \pi_1 \sim \cU(\Pi_{2m_1})}}{\prob} \Big[\exists c \in \cC: | \frac{1}{m_0}\sum_{i=1}^{m_0} \mathds{1}_{c(x_{\pi_0(i)}) = 1} - \frac{1}{m_1}\sum_{i=1}^{m_1} \mathds{1}_{c(x_{\pi_1(i)}) = 1}\\
    &- \frac{1}{m_0} \sum_{i=1}^{m_0} \mathds{1}_{c(x_{\pi_0(m_0+i)}) = 1}  + \frac{1}{m_1} \sum_{i=1}^{m_1} \mathds{1}_{c(x_{\pi_1(m_1+i)}) = 1} | \geq \frac{ \epsilon}{2}  \Big] \\
    &= \underset{\substack{\pi_1 \sim \cU(\Pi_{2m_1}) \\ \pi_1 \sim \cU(\Pi_{2m_1})}}{\prob} \Big[\exists c \in \cC: | \frac{1}{m_0} \Big(\sum_{i=1}^{m_0} \mathds{1}_{c(x_{\pi_0(i)}) = 1} - \sum_{i=1}^{m_0} \mathds{1}_{c(x_{\pi_0(m_0+i)}) = 1} \Big)\\
    &- \frac{1}{m_1} \Big(\sum_{i=1}^{m_1} \mathds{1}_c(x_{\pi_1(i)}) = 1  - \sum_{i=1}^{m_1} \mathds{1}_{c(x_{\pi_1(m_1+i)}) = 1} \Big) | \geq \frac{ \epsilon}{2}\Big)  \Big]
\end{align*}

For each fixed $c \in \cC$, 

\begin{align*}
    \underset{\pi_0 \sim \cU(\Pi_{2m_0})}{\E} \Big[  \Big(\sum_{i=1}^{m_0} \mathds{1}_c(x_{\pi_0(i)}) =1 - \sum_{i=1}^{m_0} \mathds{1}_c(x_{\pi_0(m_0+i)}) &=1 \Big) \Big] = \frac{1}{2} \Big(\sum_{i=1}^{m_0} \mathds{1}_c(x_i) =1 - \sum_{i=1}^{m_0} \mathds{1}_{\mathds{1}_c(x_{m_0 +i}) =1} \Big) + \\
    & \quad \frac{1}{2} \Big(\sum_{i=1}^{m_0} \mathds{1}_c(x_{m_0+i}) =1 - \sum_{i=1}^{m_0} \mathds{1}_c(x_i) =1 \Big) \\
    & = 0
\end{align*}

By symmetry of the permutations between the first and second protected groups, we have:

\begin{align*}
     \underset{\pi_1 \sim \cU(\Pi_{2m_1})}{\E} \Big[  \Big(\sum_{i=1}^{m_1} \mathds{1}_c(x_{\pi_1(i)}) =1 - \sum_{i=1}^{m_1} \mathds{1}_c(x_{\pi_1(m_1+i)}) = 1 \Big) \Big] = 0
\end{align*}

By applying Discrepancy Hoeffding inequality again (Lemma~\ref{lemma:discchernoff}), for every $S \subseteq (\cX \times \cY)^{2m}$ and $c \in \cC$, 

\begin{equation*}
   \underset{\pi \sim \cU(\Pi_{2m})}{\E} \Big (T_S(\pi) \Big) \leq  \exp \frac{- \min(m_0,m_1)}{16}\epsilon^2
\end{equation*}

By applying the union bound,

\begin{align*}
     \underset{S \sim \cD^{2m}}{\prob} (\cK_{\epsilon,2m}) &\leq |\Delta_{\cC}^{\SP}(S)|  \exp \frac{- \min(m_0,m_1)}{16}\epsilon^2 \\
     & \leq \Pi^{\SP}_{\mathfrak{p}}(2 m_0, 2m_1) \exp \frac{- \min(m_0,m_1)}{16}\epsilon^2 
\end{align*}

The final step by applying Sauer lemma ( Claim~\ref{claimm}) twice on each protected group, which is true since an SP dichotomy exist when each of the points conditionned on each of the protected group define a dichotomy (can be classified following a concept $c$ in $\cC$).

\end{proof}

\subsection{Lower Bounds on Sample Complexity: Hardness of Weakly Auditing}

Let $(\epsilon, \delta) \in (0,1)^2$ and $\cC$ denote a concept class of VC dimension $d+1$.
Let $\cZ = \{x_0, x_1 \cdots, x_d\}$ be a subset of $\cX$ that shatters $\cC$. For any finite subset $S$ of $\cX$, we denote $S_0$ (resp. $S_1$) the subset of $S$ whose elements belong to the first (resp. second) protected group.

For any $c,c' \in \cC$, let $c \Delta_0^1 c' = \{(x,x') \in \cX_0 \times \cX_1: \mathds{1}_{x \in c} - \mathds{1}_{x' \in c} \neq \mathds{1}_{x \in c'} - \mathds{1}_{x' \in c'}\}$. Intuitively, this defines the set of pairs from the two protected groups, where $c$ and $c'$ behave differently \footnote{$c$ and $c'$ do not behave the same with respect to the pair $(x,x')$.}. For any concept $c$, we say that a concept $c'$ is SP- consistent with $(S,c)$ if for every pair $(x,x') \in S_0 \times S_1$, $\mathds{1}_{x \in c} - \mathds{1}_{x' \in c} = \mathds{1}_{x \in c'} - \mathds{1}_{x' \in c'}$

By Lemma~\ref{lemma:welldef}, there exists two points for $\cZ$ such that each one belongs to a different protected group. Without loss of generality, we assume that $x_0 \in \cX_0$ and $x_1 \in \cX_1$. Let $\cZ_0 = \cZ \cap \cX_0 $ and $\cZ_1 = \cZ \cap \cX_1$, and $d_0 = |\cZ_0|$, $d_1 = |\cZ_1|$, where $\cZ_0$ (resp. $\cZ_1$) is indexed by $\cI_0$ (resp. $\cI_1$).

Let $\cD_{\cX}$ denotes the marginal distribution on $\cX$ supported on $\cZ$, and defined as

\begin{center}
$\begin{cases}
    \underset{\cD_{\cX}}{\prob} \{x_0\} =  \frac{1 - 8 \epsilon}{2}  \\
    \underset{\cD_{\cX}}{\prob} \{x_1\} = 0 \\
    \forall i \in \{2,3 \cdots, d\}: \underset{\cD_{\cX}}{\prob} \{x_i\} = \frac{8 \epsilon}{d-1}
\end{cases}$
\end{center}

Since $\cD_{\cX}$ is supported on $\cZ$, we assume without loss of generality that $\cX = \cZ$ and $\cC \subseteq2^{\cZ}$.

Let $\cC_{0,1} = \Big\{ \{x_0,x_1\} \cup T, T \subseteq \{x_2,x_3, \cdots, x_d\}   \Big\}$

And $\Tilde{S}_m = \{S \subseteq \cX: |S_0| = m_0, |S_1| = m_1, m_0 \leq \frac{d_0}{2}, m_1 \leq \frac{d_1}{2}\}$

We further assume that the auditor $\A$ is (possibly) randomized, and takes as input $\omega$ denoting a sequence of boolean random variables sampled independently (i.e. $\A = \A(S;\omega)$). And without loss of generality, we assume that $(x_0 , x_1) \in \A(S;\omega)$ whenever concepts are selected from $\cC_{0,1}$.

For any fixed $c \in \cC$ and $S \in \Tilde{S}_m$, let $p = \underset{x \sim \cD_{\cX}}{\prob} \Big[ \A(S; \omega)  \Big]$ and $p' = \underset{x \sim \cD_{\cX}}{\prob} \Big[ c  \Big]$

\begin{align*}
    |\mu(\A(S;\omega)) - \mu(c)| &= | \underset{\substack{k \in \cI_0 \\ k \neq 0}}{\sum} \underset{x_k \in \A(S; \omega)}{\sum} p_k - \underset{\substack{k \in \cI_1 \\ k \neq 1}}{\sum} \underset{x_k \in \A(S; \omega)}{\sum} p_k - \underset{\substack{k \in \cI_0 \\ k \neq 0}}{\sum} \underset{x_k \in c}{\sum} p'_k + \underset{\substack{k \in \cI_1 \\ k \neq 1}}{\sum} \underset{x_k \in c}{\sum} p'_k| \\
    &= \frac{8 \epsilon}{d-1}| \underset{\substack{k \in \cI_0 \\ k \neq 0}}{\sum} \underset{x_k \in \A(S; \omega)}{\sum} 1 - \underset{\substack{k \in \cI_1 \\ k \neq 1}}{\sum} \underset{x_k \in \A(S; \omega)}{\sum} 1 - \underset{\substack{k \in \cI_0 \\ k \neq 0}}{\sum} \underset{x_k \in c}{\sum} 1 + \underset{\substack{k \in \cI_1 \\ k \neq 1}}{\sum} \underset{x_k \in c}{\sum} 1| \\
    & = \frac{8 \epsilon}{d-1} \Big| (x,x') \in \cZ_0 \times \cZ_1: (x,x') \in  \A(S; \omega) \Delta_0^1 c   \Big|
\end{align*}

On the other hand, suppose we have a uniform distribution over the concept class $\cC_{0,1}$. For each $c \in \cC_{0,1}$, there are exactly $2^{d_0 -m_0}$ (resp. $2^{d_1 -m_1}$) concepts that are consistent with $(S_0,c)$ (resp. $(S_1,c)$). For any of the couples $(x,x') \in \cZ_0 \times \cZ_1$ that are not in $S$, $\frac{1}{2}$ of the SP-consistent concepts will contain this couple and $\frac{1}{2}$ will not. We deduce that $\A(S; \omega)$ behaves the same as $c$ with respect to this couple for exactly $\frac{1}{2}$ of these $2^{d_0  - m_0 + d_1 - m_1}$ SP-consistent concepts. Therefore:

\begin{align*}
    \underset{c \sim \mathbb{U}_{\cC_{0,1}}}{\E} \Big[ \Big| (x,x') \in \cZ_0 \times \cZ_1: (x,x') \in  \A(S; \omega) \Delta_0^1 c   \Big| \Big] &\geq \frac{d_0 - m_0 + d_1 - m_1}{2} \\
    & = \frac{d - m}{2} \\
    & \geq \frac{d}{4}
\end{align*}

Where the second step follows from the fact that $S \in \Tilde{S}_m$. Since $d = d_0 +d_1$, we have shown that:

\begin{align}\label{ineqLB}
     \underset{c \sim \mathbb{U}_{\cC_{0,1}}}{\E} \Big[|\mu(\A(S;\omega)) - \mu(c)| \Big] \geq \frac{2 \epsilon d}{d-1}  \geq 2 \epsilon
\end{align}
This is true for every value of $S$ and $\omega$,

\begin{align*}
    \underset{c \sim \mathbb{U}_{\cC_{0,1}}, S, \omega}{\E} \Big[|\mu(\A(S;\omega)) - \mu(c)| \Big] \geq \frac{2 \epsilon d}{d-1}  \geq 2 \epsilon
\end{align*}

Therefore there exists $c \in \cC_{0,1}$, such that:

\begin{align*}
    \underset{ S, \omega}{\E} \Big[|\mu(\A(S;\omega)) - \mu(c)| \Big] \geq \frac{2 \epsilon d}{d-1}  \geq 2 \epsilon
\end{align*}

On the other hand, we have for all $(x,x') \in \cX_0 \times \cX_1$:

\begin{align*}
    \Big(\mathds{1}_{x \in \A(S; \omega)} = \mathds{1}_{x \in c} \land \mathds{1}_{x' \in \A(S; \omega)} = \mathds{1}_{x' \in c} \Big) \lor \Big(\mathds{1}_{x \in \A(S; \omega)} \neq \mathds{1}_{x \in c} \land \mathds{1}_{x' \in \A(S; \omega)} \neq \mathds{1}_{x' \in c} \Big)  \iff (x,x') \notin \A(S; \omega) \Delta_0^1 c
 \end{align*}

 We deduce,

\begin{align*}
    1 -  \underset{\substack{x \sim \cD_{\cX} \\ x' \sim \cD_{\cX}}}{\prob} \Big[ \A(S; \omega) \Delta_0^1 c |x\in \cX_0, x' \in \cX_1 \Big] = &\underset{\substack{x \sim \cD_{\cX} \\ x' \sim \cD_{\cX}}}{\prob}   \Big[\mathds{1}_{x \in \A(S; \omega)} = \mathds{1}_{x \in c} \land \mathds{1}_{x' \in \A(S; \omega)} = \mathds{1}_{x' \in c} \Big) \\
    & \quad \lor \Big(\mathds{1}_{x \notin \A(S; \omega)} = \mathds{1}_{x \in c} \land \mathds{1}_{x' \notin \A(S; \omega)} = \mathds{1}_{x' \in c} |x\in \cX_0, x' \in \cX_1 \Big] \\
    & \leq \underset{\substack{x \sim \cD_{\cX} \\ x' \sim \cD_{\cX}}}{\prob}   \Big[\mathds{1}_{x \in \A(S; \omega)} = \mathds{1}_{x \in c} \land \mathds{1}_{x' \in \A(S; \omega)} = \mathds{1}_{x' \in c} |x\in \cX_0, x' \in \cX_1\Big] \\
    & +  \underset{\substack{x \sim \cD_{\cX} \\ x' \sim \cD_{\cX}}}{\prob} \Big[\mathds{1}_{x \notin \A(S; \omega)} = \mathds{1}_{x \in c} \land \mathds{1}_{x' \notin \A(S; \omega)} = \mathds{1}_{x' \in c}|x\in \cX_0, x' \in \cX_1 \Big] \\
     & \leq 1 -  \underset{\substack{x \sim \cD_{\cX} \\ x' \sim \cD_{\cX}}}{\prob}   \Big[\mathds{1}_{x \notin \A(S; \omega)} = \mathds{1}_{x \in c} \lor \mathds{1}_{x' \notin \A(S; \omega)} = \mathds{1}_{x' \in c} |x\in \cX_0, x' \in \cX_1\Big] \\
    & +  \underset{\substack{x \sim \cD_{\cX} \\ x' \sim \cD_{\cX}}}{\prob} \Big[\mathds{1}_{x \notin \A(S; \omega)} = \mathds{1}_{x \in c} \land \mathds{1}_{x' \notin \A(S; \omega)} = \mathds{1}_{x' \in c} |x\in \cX_0, x' \in \cX_1\Big]
\end{align*}

\begin{align*}
    \underset{\substack{x \sim \cD_{\cX} \\ x' \sim \cD_{\cX}}}{\prob} \Big[ \A(S; \omega) \Delta_0^1 c |x\in \cX_0, x' \in \cX_1 \Big] \geq  &\underset{\substack{x \sim \cD_{\cX} \\ x' \sim \cD_{\cX}}}{\prob}   \Big[ \mathds{1}_{x \notin \A(S; \omega)} = \mathds{1}_{x \in c} \lor \mathds{1}_{x' \notin \A(S; \omega)} = \mathds{1}_{x' \in c} |x\in \cX_0, x' \in \cX_1\Big] \\
    & -  \underset{\substack{x \sim \cD_{\cX} \\ x' \sim \cD_{\cX}}}{\prob} \Big[\mathds{1}_{x \notin \A(S; \omega)} = \mathds{1}_{x \in c} \land \mathds{1}_{x' \notin \A(S; \omega)} = \mathds{1}_{x' \in c} |x\in \cX_0, x' \in \cX_1 \Big]\\
    & \geq \underset{x \sim \cD_{\cX}}{\prob}   \Big[\mathds{1}_{x \notin \A(S; \omega)} = \mathds{1}_{x \in c}|x\in \cX_0\Big] +  \underset{x' \sim \cD_{\cX}}{\prob} \Big[\mathds{1}_{x' \notin \A(S; \omega)} = \mathds{1}_{x' \in c} | x' \in \cX_1 \Big]
\end{align*}

\begin{lemma}
    \begin{align*}
        \underset{x \sim \cD_{\cX} }{\prob}   \Big[\mathds{1}_{x \notin \A(S; \omega)} = \mathds{1}_{x \in c}|x\in \cX_0 \Big] +\underset{x' \sim \cD_{\cX}}{\prob} \Big[\mathds{1}_{x' \notin \A(S; \omega)} = \mathds{1}_{x' \in c} |x' \in \cX_1 \Big] \geq |\mu(\A(S;\omega)) - \mu(c)|
    \end{align*}
\end{lemma}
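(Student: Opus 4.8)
The plan is to reduce this inequality to the elementary fact that, on each protected group separately, the gap between the group-conditional acceptance probabilities of two concepts is dominated by their group-conditional disagreement probability, and then to chain this bound through the (reverse) triangle inequality.

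First I would fix notation: write $g \triangleq \A(S;\omega)$ and set $a_0 \triangleq \prob_{x\sim\cD_{\cX}}[x\in g\mid x\in\cX_0]$, $b_0 \triangleq \prob_{x\sim\cD_{\cX}}[x\in c\mid x\in\cX_0]$, and analogously $a_1,b_1$ with $\cX_1$ in place of $\cX_0$; by Definition~\ref{def:SP} we then have $\mu(g)=|a_0-a_1|$ and $\mu(c)=|b_0-b_1|$. The key observation is that the event $\{\mathds{1}_{x\notin g}=\mathds{1}_{x\in c}\}$ is precisely the disagreement event $\{x\in g,\,x\notin c\}\cup\{x\notin g,\,x\in c\}$, so the first summand on the left-hand side equals $d_0 \triangleq \prob_{x\sim\cD_{\cX}}[\mathds{1}_{x\in g}\neq\mathds{1}_{x\in c}\mid x\in\cX_0]$ and the second equals the analogous quantity $d_1$ on $\cX_1$.

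Next I would prove $|a_0-b_0|\le d_0$ (and symmetrically $|a_1-b_1|\le d_1$): decomposing $a_0-b_0 = \prob[x\in g,\,x\notin c\mid x\in\cX_0]-\prob[x\notin g,\,x\in c\mid x\in\cX_0]$, both terms are nonnegative and sum to $d_0$, so $|a_0-b_0|\le d_0$. Finally I would conclude with the chain
\begin{align*}
|\mu(g)-\mu(c)| &= \big||a_0-a_1|-|b_0-b_1|\big| \le |(a_0-a_1)-(b_0-b_1)| \\
&= |(a_0-b_0)-(a_1-b_1)| \le |a_0-b_0|+|a_1-b_1| \le d_0+d_1,
\end{align*}
where the first step uses the reverse triangle inequality $\big||u|-|v|\big|\le|u-v|$ with $u=a_0-a_1$, $v=b_0-b_1$. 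I do not expect a genuine obstacle: this is a two-line computation. The only points needing mild care are that the conditioning events $\{x\in\cX_0\}$ and $\{x\in\cX_1\}$ carry positive mass under $\cD_{\cX}$ — which holds in the surrounding lower-bound construction once each protected group retains at least one shattered point of positive weight, guaranteed by Lemma~\ref{lemma:welldef} — and keeping the absolute values straight so that the two triangle-inequality steps are applied in the correct order.
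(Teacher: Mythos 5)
Your proposal is correct and follows essentially the same route as the paper's proof: decompose $|\mu(\A(S;\omega))-\mu(c)|$ via the (reverse) triangle inequality into per-group differences of conditional acceptance rates, then bound each by the corresponding group-conditional disagreement probability. Your version is in fact more complete, since you explicitly verify that the event $\{\mathds{1}_{x\notin \A(S;\omega)}=\mathds{1}_{x\in c}\}$ is the disagreement event and justify the bound $|a_i-b_i|\le d_i$, both of which the paper leaves implicit.
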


\begin{proof}
    By triangle inequality, 

    \begin{align*}
        |\mu(\A(S;\omega)) - \mu(c)| &\leq |\mu^0(\A(S;\omega)) - \mu^0(c)| +|\mu^1(\A(S;\omega)) - \mu^1(c)| \\
        &\leq |\mu^0(\A(S;\omega)) - \mu^0(c)| +|\mu^1(\A(S;\omega)) - \mu^1(c)| \\
        & \leq   \underset{x \sim \cD_{\cX} }{\prob}   \Big[\mathds{1}_{x \notin \A(S; \omega)} = \mathds{1}_{x \in c}|x\in \cX_0 \Big] +\underset{x' \sim \cD_{\cX}}{\prob} \Big[\mathds{1}_{x' \notin \A(S; \omega)} = \mathds{1}_{x' \in c} |x' \in \cX_1 \Big]
    \end{align*}
\end{proof}

We have proved the following:

\begin{align}\label{eqLB0}
     |\mu(\A(S;\omega)) - \mu(c)| \leq  \underset{\substack{x \sim \cD_{\cX} \\ x' \sim \cD_{\cX}}}{\prob} \Big[ \A(S; \omega) \Delta_0^1 c |x\in \cX_0, x' \in \cX_1 \Big] 
\end{align}

On the other hand, since $\A(S; \omega)$ is always correct on $(x_0,x_1)$ with respect to $c$, then for all $(x,x') \in \cX_0 \times \cX_1$:

\begin{align*}
    (x,x') \in \A(S;\omega) \Delta_0^1 c \implies (x,x') \neq (x_0,x_1)
\end{align*}

We deduce:

\begin{align*}
    \underset{\substack{X \sim \cD_{\cX} \\ X' \sim \cD_{\cX}}}{\prob} \Big[\A(S;\omega) \Delta_0^1 c | X \in \cX_0, X' \in \cX_1 \Big] &\leq \underset{\substack{X \sim \cD_{\cX} \\ X' \sim \cD_{\cX}}}{\prob} \Big[X \neq x_0 \land X' \neq x_1| X \in \cX_0, X' \in \cX_1   \Big] \\
    & \leq \underset{X \sim \cD_{\cX}}{\prob} \Big[X \neq x_0]  \\
    &=8 \epsilon
\end{align*}

Implementing this in the inequality~\ref{eqLB0} gives:

\begin{align}\label{ineqLB'}
    |\mu(\A(S;\omega)) - \mu(c)| \leq 8 \epsilon
\end{align}

From the inequalities in \ref{ineqLB} and \ref{ineqLB'}, we get:

\begin{align*}
     2 \epsilon \leq \underset{S, \omega}{\E} \Big[|\mu(\A(S;\omega)) - \mu(c)| \Big] \leq 8 \epsilon \underset{S, \omega}{\prob} \Big[|\mu(\A(S;\omega)) - \mu(c)| > \epsilon \Big] + \epsilon \Big(   \underset{S, \omega}{\prob} \Big[|\mu(\A(S;\omega)) - \mu(c)| > \epsilon \Big] - 1\Big)
\end{align*}

Therefore, 

\begin{align*}
     \underset{S, \omega}{\prob} \Big[|\mu(\A(S;\omega)) - \mu(c)| > \epsilon | S \in \Tilde{S}_m \Big] > \frac{1}{7}
\end{align*}

On the other hand, 

\begin{align*}
      \underset{S, \omega}{\prob} \Big[|\mu(\A(S;\omega)) - \mu(c)| > \epsilon \Big] &= \underset{S, \omega}{\prob} \Big[|\mu(\A(S;\omega)) - \mu(c)| > \epsilon | S \in \Tilde{S}_m \Big] \underset{S}{\prob} \Big[ S \in \Tilde{S}_m \Big] \\
      & \geq \frac{1}{7}  \underset{S}{\prob}  \Big[ S \in \Tilde{S}_m \Big] 
\end{align*}

Let $T_0$ (resp. $T_1$) denote the number of realizations of  $\{x_2, x_3 \dots, x_d\}$ in a sample $S_0$ (resp. $S_1$) 

\begin{align*}
    \underset{\cD^m}{\prob} \Big\{\Tilde{S}_m \Big\} &\geq  \underset{\cD^m}{\prob} \Big\{T_0 \leq \frac{d_0}{2} \land T_1 \leq \frac{d_1}{2} \Big\} \\
    & = \underset{\cD^m}{\prob} \Big\{ \min(T_0 ,T_1) \leq \min( \frac{d_0}{2},  \frac{d_1}{2} ) \Big\}
\end{align*}

The last term is bounded by the probability that a binomial( $\min(m_0,m_1), 8 \epsilon)$ is less than $\min( \frac{d_0}{2},  \frac{d_1}{2} )$.

By applying Bernstein inequality \ref{lemma:bernstein},

\begin{align*}
     \underset{\cD^m}{\prob} \Big\{\Tilde{S}_m \Big\} &\geq 1 - \exp{- \frac{\min( \frac{d_0}{2},  \frac{d_1}{2} ) - 8\min(m_0,m_1) \epsilon^2 }{8 \min(m_0,m_1) \epsilon(1-8\epsilon)}}\\
     & \geq 1 - \exp{- \frac{\min( d_0,d_1 ) - 16\min(m_0,m_1) \epsilon^2 }{16 \min(m_0,m_1) \epsilon(1-8\epsilon)}}
\end{align*}

For $\min(m_0, m_1) = \frac{\min(d_0,d_1)}{32 \epsilon}$, we can simplify the right term and for $\epsilon< \frac{1}{8}, \delta < \frac{1}{20}$ the sample complexity $\min(m_0, m_1) > \frac{\min(d_0,d_1)}{32 \epsilon}$.

\section{Proofs for Strongly Auditable Classes}
\subsection{All Finite Classes are Strongly Auditable (Theorem~\ref{theorem:stronglyfinite})} \label{app:stronglyfinite}

We start by restating Theorem~\ref{theorem:stronglyfinite}:
\stronglyfinite*

\begin{proof}
    Let $m^{\text{corr}}$ (resp. $m^{\text{comp}}$) denote the sample complexity for correctness (resp. correctness).

     Let $S$ denote a sample of labeled instances by the true hypothesis (realizability assumption), $S_{|x}$ denotes the projection of $S$ on the input space $\cX$. 
    
    \paragraph{Step 1: Bounding sample complexity for correctness $m^{\text{corr}}$.}

    Let $\Tilde{\cH}$ denote the set $\{h \in \cH, \cL_{\cD}(h) > \epsilon  \}$,
    
    In this step we decouple the subclass $A[S]$ from $S$ using the set $\Tilde{\cH}$ by the showing the following:

    \begin{claim}
        $\{S_{|x}, \exists h \in A[S], \cL_{\cD}(h)> \epsilon\} \subseteq \{S_{|x}, \exists h \in \Tilde{\cH}, \cL_{S}(h) = 0\}$
    \end{claim}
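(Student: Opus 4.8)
The plan is to prove this set inclusion by a pointwise argument: take an arbitrary sample $S$ whose feature projection $S_{|x}$ lies in the left-hand set and exhibit an explicit witness hypothesis that certifies $S_{|x}$ also lies in the right-hand set. Before that I would pin down the operational meaning of $A[S]$ in the present realizable, finite regime. Since $S$ is labeled by the true hypothesis $h^\star \in \cH$, the realizability assumption forces the empirical optimum $\widehat{\opt}$ to be $0$, so the ``small empirical error'' subclass returned by the auditor is the empirical version space $A[S] = \{h \in \cH : \cL_S(h) = 0\}$; this is non-empty because $h^\star \in A[S]$. Here $\cL_\cD,\cL_S$ denote the true and empirical auditing error for statistical parity (the quantity that the SP loss of Table~\ref{tab:proploss} controls), and by definition $\Tilde{\cH} = \{h \in \cH : \cL_\cD(h) > \epsilon\}$.

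With these conventions the argument is immediate. Suppose $S_{|x}$ belongs to the left-hand set, i.e. there is some $h \in A[S]$ with $\cL_\cD(h) > \epsilon$. Then $\cL_S(h) = 0$ by membership in $A[S]$, and $h \in \Tilde{\cH}$ by $\cL_\cD(h) > \epsilon$; hence this same $h$ witnesses that $S_{|x}$ lies in $\{S_{|x} : \exists h \in \Tilde{\cH},\ \cL_S(h) = 0\}$, which is exactly the asserted inclusion. Passing to probabilities over $S \sim \cD^m$ then gives $\prob[\exists h \in A[S],\ \cL_\cD(h) > \epsilon] \le \prob[\exists h \in \Tilde{\cH},\ \cL_S(h) = 0]$, after which a union bound over the finite class $\Tilde{\cH} \subseteq \cF$ together with a Hoeffding-type tail bound for each fixed bad hypothesis (a hypothesis with $\cL_\cD(h) > \epsilon$ has empirical error exactly $0$ with probability at most $e^{-2m\epsilon^2}$) yields the first branch $m^{\mathrm{corr}} = \cO\!\left(\tfrac{1}{\epsilon^2}\log\tfrac{|\cF|}{\delta}\right)$ of the claimed sample complexity.

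I do not expect a genuine obstacle in this particular claim: it is a one-line definitional containment, decoupling $A[S]$ from $S$ via the fixed ``bad'' set $\Tilde{\cH}$ in the standard PAC fashion. The only points requiring care are (i) making the operational definition of $A[S]$ as the empirical version space explicit and invoking realizability so that it is well-defined and non-empty, and (ii) being precise that the reduction is stated for the auditing error $\cL$ that the SP loss of Table~\ref{tab:proploss} dominates, so that the implication goes in the right direction for a correctness upper bound. The substantive difficulty of Theorem~\ref{theorem:stronglyfinite} lies in the other branch — the completeness term $\tfrac{1}{\log(1/\epsilon^2)}\log\tfrac{|\cF|}{\delta}$ — where one must control $\inf_{f \in A^C[S]} \cE_m(f,\mu)$ and rule out hypotheses outside the empirical prospect class that nevertheless preserve statistical parity, so I would keep this correctness claim short and move on.
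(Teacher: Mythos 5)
Your proof of the containment is correct and takes essentially the same route as the paper: both identify $A[S]$ with the empirical version space $\{h \in \cH : \cL_S(h) = 0\}$ via the realizability assumption, after which the very same $h$ that violates $\cL_\cD(h) \le \epsilon$ witnesses membership in the right-hand set. The extra material on the union bound and the tail probability belongs to the surrounding theorem rather than to this claim, and does not change the verdict.
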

    \begin{proof}
        By the realizability assumption, $A[S] = \{h \in \cH, \cL_S(h) = 0 \}$, which concludes the proof of the claim.
    \end{proof}

        We deduce,

        \begin{align*}
            \underset{S \sim \cD^{m^{\text{corr}}}}{\prob} \Biggl[\exists h \in A[S],  \cL_{\cD}(h) > \epsilon   \Biggl]   &\leq \underset{S \sim \cD^{m^{\text{corr}}}}{\prob} \Biggl[\exists h \in \Tilde{\cH},  \cL_{S}(h) =0   \Biggl] \\
            & \leq \sum_{h \in \Tilde{\cH}} \underset{S \sim \cD^{m^{\text{corr}}}}{\prob} \Biggl[  \cL_{S}(h) =0   \Biggl] \\
            & \leq \sum_{h \in \Tilde{\cH}} \underset{S \sim \cD^{m^{\text{corr}}}}{\prob} \Biggl[ \forall x \in S:  h(x) = y  \Biggl] \\
            & \leq \sum_{h \in \Tilde{\cH}} \prod_{i=1}^{m^{\text{corr}}} \underset{x_i \sim \cD}{\prob} \Biggl[  h(x_i) = y_i  \Biggl] \\
            & \leq \sum_{h \in \Tilde{\cH}} \prod_{i=1}^{m^{\text{corr}}} (1- \epsilon) \\
            &\leq |\cH| e^{- \epsilon m^{\text{corr}}}
        \end{align*}
    
The first inequality comes from the result of the claim, the second inequality comes from the union-bound, the fourth inequality comes from the i.i.d assumption, and the fifth inequality comes from the definition of $\Tilde{\cH}$.

This result shows that a sample size of $\cO\Bigl( \frac{1}{\epsilon} \log \frac{|\cH|}{\delta}\Bigl)$ is sufficient for correctness.
    
\paragraph{Step 2: Bounding sample complexity for completeness $m^{\text{comp}}$.} 

In the following proof, we use the same decoupling argument with an extra cost over $\epsilon$ proportion of samples that leads to a small true error but a "large" (nonzero) empirical error. 

Let $\Bar{\Tilde{\cH}}$ denote the complementary of the set $\Tilde{\cH}$ defined in the previous step. That is $\Bar{\Tilde{\cH}} = \{h \in \cH, \cL_{\cD}(h) \leq \epsilon   \}$, and set $\cT$ to be the set $\{S_{|x}, \exists h \in \Bar{\Tilde{\cH}} \setminus A[S],  \forall x \in S_{|x}: h(x) \neq y   \}$.
\begin{claim}
The following is true:

    $\{S_{|x}, \exists h \in \cH \setminus A[S], \cL_{\cD}(h) \leq \epsilon \} \subseteq \{S_{|x}, \exists h \in  \Bar{\Tilde{\cH}},  \cL_S(h) = 0\} \cup \cT$.
\end{claim}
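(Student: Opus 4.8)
The plan is to prove this inclusion pointwise, in direct analogy with the decoupling step already used for correctness in Step~1. Fix a sample projection $S_{|x}$ in the left-hand event, so there is a witness $h \in \cH \setminus A[S]$ with $\cL_{\cD}(h) \le \epsilon$; by definition of $\Bar{\Tilde{\cH}}$ this already gives $h \in \Bar{\Tilde{\cH}}$, and everything hinges on how $h$ relates to the realizable version space $A[S] = \{h' : \cL_S(h') = 0\}$ and to the true hypothesis $h^{*}$, which labels $S$.

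I would then split on $\cL_S(h)$. If $\cL_S(h) = 0$, then $h$ is a consistent member of $\Bar{\Tilde{\cH}}$, so $S_{|x}$ lies in the first set $\{S_{|x} : \exists h \in \Bar{\Tilde{\cH}},\ \cL_S(h) = 0\}$ of the right-hand union; note that this set is never empty under realizability, since $h^{*} \in \Bar{\Tilde{\cH}}$ and $\cL_S(h^{*}) = 0$, which is what makes it the natural harmless branch of the decomposition. Otherwise $\cL_S(h) > 0$: here $h$ has small true error but errs on $S$, and the task is to read off from the definitions of $A[S]$ and $\cT$ that $S_{|x} \in \cT$. Under realizability, $h \notin A[S]$ is precisely the statement that $h$ disagrees with $h^{*}$ somewhere on $S_{|x}$, and $\cT$ is designed to collect exactly the low-true-error, version-space-excluded hypotheses whose empirical error is large enough to incur the extra cost the surrounding text warns about; unfolding this definition completes the inclusion. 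I expect this second branch to be the delicate point: matching the fact that $h$ is excluded yet of low true error against the precise misclassification condition defining $\cT$ requires using the exact form of $A[S]$ (the exact version space versus an $\epsilon$-slack empirical Rashomon set) and of $\cT$ faithfully.

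The real obstacle, though, is not the inclusion itself but turning it into a probability bound: the purpose of pushing the left-hand event into this particular union is that, after dropping the $A[S]$ from the definition of $\cT$ (which only enlarges it), both right-hand sets are controlled through the sample-\emph{independent} classes $\Bar{\Tilde{\cH}}$ and $\Tilde{\cH}$ rather than through the sample-dependent prospect class $A[S]$, so one can close the argument with a union bound over the finite class $\cH$ together with a per-hypothesis concentration estimate. For the $\cT$ term this estimate reads $\prob[\forall x \in S_{|x} : h(x) \ne y] = \cL_{\cD}(h)^{m} \le \epsilon^{m}$, hence $\prob[\cT] \le |\cH|\,\epsilon^{m}$, and forcing this below $\delta$ yields a completeness budget of order $\tfrac{1}{\log(1/\epsilon)}\log\tfrac{|\cF|}{\delta}$; reconciling this with the correctness budget of Step~1 so that the final sample complexity is the stated maximum of the two regimes is the step I would treat most carefully.
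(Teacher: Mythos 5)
Your case split does not close the main case, and that is a genuine gap. Under the realizability convention used in Step~1, $A[S]=\{h:\cL_S(h)=0\}$, so your first branch ($\cL_S(h)=0$) is vacuous: such an $h$ would lie in $A[S]$, contradicting the choice of witness. Everything therefore rests on the branch $\cL_S(h)>0$, where you need to place $S_{|x}$ in $\cT$. But $\cT$ asks for a hypothesis in $\Bar{\Tilde{\cH}}\setminus A[S]$ that errs on \emph{every} point of $S_{|x}$, while your witness $h$ is only guaranteed to err on \emph{at least one} point. A hypothesis with $\cL_{\cD}(h)=\epsilon/2$ that misclassifies exactly one of the $m$ sample points satisfies the left-hand condition but gives no membership in $\cT$; you flag this mismatch as ``the delicate point'' but never resolve it, and it cannot be resolved by merely unfolding the definitions of $A[S]$ and $\cT$.

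The only reason the stated inclusion is true at all is the observation you make in passing and then set aside: under realizability $h^*\in\Bar{\Tilde{\cH}}$ and $\cL_S(h^*)=0$ for every sample, so the first right-hand set is the \emph{entire} sample space and the inclusion is trivial. That would close the claim, but it also makes the decomposition useless for the probability bound you sketch afterwards: the first event then has probability $1$, and the estimate $\sum_h\prod(1-\epsilon)$ that you (and the paper) apply to it is only valid for hypotheses with true error at least $\epsilon$, i.e.\ for $\Tilde{\cH}$ rather than $\Bar{\Tilde{\cH}}$. For comparison, the paper's own proof takes a different and equally informal route: it partitions the \emph{points} of $S_{|x}$ into those where $h$ agrees with the labels and those where it errs, silently reusing the symbol $\cT$ for the latter subset of sample points rather than for the event defined just before the claim; it does not establish the stated set inclusion any more rigorously than your sketch does.
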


\begin{proof}
    Let $S_{|x}$ and  $h \in \cH \setminus A[S]$ such that, $\cL_{\cD}(h) \leq \epsilon$

Since $h \notin A[S]$, $\cL_S(h) \neq 0$, but since $\cL_{\cD}(h) \leq \epsilon$, let $\cT$ denote the fraction of samples from $S_{|x}$ that induce the error, that is for all $x \in \cT, h(x) \neq y$.

We have partitioned $S_{|x}$ into a subset $S_1$ that agrees with the true function on the labels, that is $\cL_{S_1} = 0$, and another subset that verifies: $\underset{x \sim \cD}{\prob}(\cT) \leq \epsilon$, which concludes the proof of the claim.
\end{proof}

We deduce the following:

 \begin{align*}
            \underset{S \sim \cD^{m^{\text{comp}}}}{\prob} \Biggl[\exists h \in \cH \setminus A[S],  \cL_{\cD}(h) \leq \epsilon   \Biggl]   &\leq \underset{S \sim \cD^{m^{\text{comp}}}}{\prob} \Biggl[\exists h \in \Bar{\Tilde{\cH}},  \cL_{S}(h) =0   \Biggl] + \underset{S \sim \cD^{m^{\text{comp}}}}{\prob} \Bigl[\forall x \in S, x \in \cT \Bigl] \\
            & \leq \sum_{h \in \cH} \underset{S \sim \cD^{m^{\text{corr}}}}{\prob} \Biggl[  \cL_{S}(h) =0   \Biggl] + \sum_{h \in \cH} \prod_{i=1}^{m^{\text{corr}}} \underset{x_i \sim \cD}{\prob} \Bigl[ \cT \Bigl]\\
            & \leq \sum_{h \in \cH} \prod_{i=1}^{m^{\text{corr}}} (1- \epsilon) +  \sum_{h \in \cH} \prod_{i=1}^{m^{\text{corr}}} \epsilon\\
            &\leq |\cH| e^{- \epsilon m^{\text{corr}}} + |\cH| \epsilon^{m^{\text{corr}}}
        \end{align*}

This result shows that a sample size of $\cO \Bigl(\max \Big\{\frac{1}{\epsilon} \log \frac{|\cH|}{\delta}, \frac{1}{\log\frac{1}{\epsilon}} \log \frac{|\cH|}{\delta}\Big\}\Bigl)$ is sufficient for correctness.

\end{proof}

\subsection{Concentration Bounds on Prospect Ratio (Theorem~\ref{theorem:prospectratio})}\label{app:prospectratio}

We first begin by restating the theorem: 
\asymsize*

\begin{proof} 
Let $\epsilon, \upsilon, \tau \in (0,1)$.
    By the definition of the estimator of prospect ratio:
    $$\hat{\Tilde{r}}_{n,m_0,m_1}(\epsilon) \triangleq \frac{1}{n} \sum_{i=1}^n \mathds{1}_{|\frac{1}{m_0} \sum_{i=1}^{m_0} \mathds{1}_{f_k(x_i) = 1} - \frac{1}{m_1} \sum_{j=1}^{m_1} \mathds{1}_{f_k(x'_j)=1}| \leq \epsilon}$$
    Since we sample independently $f_k$'s from $\cF$, the random variables $\mathds{1}_{|\frac{1}{m_0} \sum_{i=1}^{m_0} \mathds{1}_{f_k(x_i) = 1} - \frac{1}{m_1} \sum_{j=1}^{m_1} \mathds{1}_{f_k(x'_j)=1}| \leq \epsilon}$ are also independent and take values in $[0,1]$. 

    By Hoeffding inequality:

    \begin{equation}\label{eq:aprime}
        \prob \{| \Tilde{r}_n(\epsilon)  - r(\epsilon)   |\geq \tau    \} \leq 2 \exp \{ - 2 n \tau^2 \}
    \end{equation}

    On the other hand, let $S$ be sample that contains $m_0$ points from first protected group and $m_1$ points from second protected group. By applying Lemma~\ref{lemma:discchernoff} on each function $f_k$ over equally size subsamples of size $\frac{m}{n}$, we have for all $k \in [n]$:

    $$\underset{S \sim \cD^{\frac{m}{n}}}{\prob}  \{| \hat{\mu}_S(f_k) - \mu_{\cD}(f_k)| \geq \upsilon  \}  \leq \exp \Big\{ \frac{-2 \upsilon^2 m_0 m_1}{n(m_0 + m_1)} \Big \}$$
    
    By the independence of the events:
    \begin{equation}\label{eq:a}
        \underset{S \sim \cD^{\frac{m}{n}}}{\prob}  \{\inf_{k \in [N]}| \hat{\mu}_S(f_k) - \mu_{\cD}(f_k)| \geq \upsilon  \}   = \prod_{k=1}^n \underset{S \sim \cD^{\frac{m}{n}}}{\prob}  \{| \hat{\mu}_S(f_k) - \mu_{\cD}(f_k)| \geq \upsilon  \}  
    \end{equation}

    We deduce:

    $$\underset{S \sim \cD^{\frac{m}{n}}}{\prob}  \{\sup_{k \in [N]}| \hat{\mu}_S(f_k) - \mu_{\cD}(f_k)| \leq \upsilon  \}   \geq \Big( 1 - \exp \Big\{ \frac{-2 \upsilon^2 m_0 m_1}{n(m_0 + m_1)} \Big \}\Big)^n $$

    Now let $k \in [N]$, such that $ |\hat{\mu}_S(f_k) - \mu_{\cD}(f_k)| \geq \upsilon$
    
    for all $\tau$ in $(0,1)$, 
    \begin{align*}
        f_k \in \hat{\cP}(\cF, \epsilon) &\implies |\hat{\mu} - \mu (f_k)| \leq \epsilon \leq \epsilon + \upsilon\\
        & \implies f_k \in \hat{\cP}(\cF, \epsilon + \upsilon)\\
    \end{align*}

    Since this is true for all $k$'s, we deduce: $\hat{\Tilde{r}} \leq \Tilde{r}(\epsilon + \upsilon)$

    Similarly if $f_k \in \hat{\cP}(\cF, \epsilon - \upsilon)$, we have $\Tilde{r}(\epsilon - \upsilon) \leq \hat{\Tilde{r}}(\epsilon)$
    In other words,  $ \Tilde{r}(\epsilon - \upsilon) \leq \hat{\Tilde{r}}(\epsilon) \leq \Tilde{r}(\epsilon + \upsilon)$

    Therefore, 
    \begin{equation}\label{eq:b}
        \underset{S \sim \cD^{\frac{m}{n}}}{\prob}  \{\sup_{k \in [N]}| \hat{\mu}_S(f_k) - \mu_{\cD}(f_k)| \leq \upsilon  \} \leq \prob \{ \Tilde{r}(\epsilon - \upsilon) \leq \hat{\Tilde{r}}(\epsilon) \leq \Tilde{r}(\epsilon + \upsilon) \}
    \end{equation}

    From Equation \ref{eq:a} and Equation\ref{eq:b}, we deduce: 

\begin{equation}\label{eq:c}
        \Big( 1 - \exp \Big\{ \frac{-2 \upsilon^2 m_0 m_1}{n(m_0 + m_1)} \Big \}\Big)^n  \leq \prob \{ \Tilde{r}(\epsilon - \upsilon) \leq \hat{\Tilde{r}}(\epsilon) \leq \Tilde{r}(\epsilon + \upsilon) \}
    \end{equation}

By the inequality in \ref{eq:aprime} and the inequality \ref{eq:c}, we deduce the desired result.

\end{proof}

\subsection{Infinite VC Classes are Not Auditable (Proposition \ref{prop:infinitevcnotsp})}\label{app:infinitevcnotsp}

We start be restating Proposition~\ref{prop:infinitevcnotsp}.

\infinitevcnotsp*

\begin{proof}
We prove the result by showing a lower bound on the SP dimension that depends on the VC dimension.

Let $\cF$ denotes a hypothesis class and $S$ denotes a sample that SP-shatters $\cF$, $S_0$ (resp. $S_1$) the subset of $S$ belonging to the first protected group (resp. second protected group).

Since $S$ SP-shatters $\cF$, by the result in Lemma~\ref{lemma:welldef}, we have 
\begin{align*}
   & \max(|S_0|, |S_1|) \leq |S| - 2 \\
    & 2^{\max(|S_0|, |S_1|)} \leq  2^{|S|} (1 -  \frac{3}{4})
\end{align*}
Hence, for all $S$ that SP-shatters $ \cF$:
\begin{align*}
    2^{|S|} - 2^{\max(|S_0|, |S_1|)} \geq \frac{3}{4} 2^{|S|}
\end{align*}

And since $$ 2^{|S|} - 2^{\max(|S_0|, |S_1|)}  \leq 2^{|S|} - 2^{|S_0|} - 2^{|S_1|} $$

We deduce 
\begin{align*}
    \SP(\cF) \geq \log_2 \frac{3}{4} + \VC(\cF)
\end{align*}
This implies that if the VC dimension is infinite then SP dimension is also infinite.
\end{proof}

\section{Extended Technical Details for Statistical Parity Audit}\label{app:extexp}
\subsection{Hardness of Identifying Prospect Class} 

When the prospect class is infinite, any algorithm attempting to exhaustively evaluate all models in the prospect class would never terminate. This challenge can be illustrated through linear classifiers in two dimensions, as shown in Figure~\ref{fig:hardprospect}. In this example, any line passing through the blue region belongs to the prospect class, resulting in infinitely many candidate models. 

\begin{figure}[t!]
    \centering
    \includegraphics[width= 0.4\columnwidth]{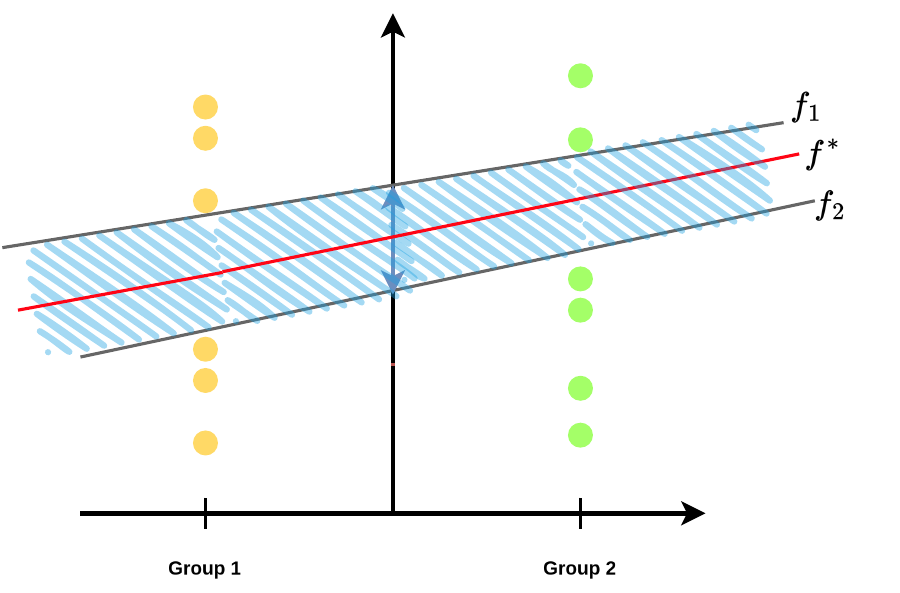}
    \caption{Illustration of prospect class for linear classifiers. Group 1 (yellow) and Group 2 (green) represent the two protected groups. Any classifier in the region delimited by $f_1$ and $f_2$ has the same statistical parity value.}\label{fig:hardprospect}
\end{figure}

\subsection{Relationship between Prospect Ratio and SP-dimension} 

The prospect ratio and SP dimension serve fundamentally different purposes in our analysis. While the SP dimension, like other complexity measures such as VC dimension, characterizes the capacity of a hypothesis class to generalize finite-sample properties to distributional ones, the prospect ratio serves a distinct role. Specifically, it quantifies the likelihood of finding models that satisfy post-audit requirements while maintaining equivalent properties under audit values. A key distinction is that the prospect ratio is inherently data dependent, whereas the SP dimension is determined uniquely by the hypothesis class structure.
\section{Extended Experimental Details and Datasets}\label{app:extendexp}
\begin{figure*}[ht]
\centering
\setlength{\tabcolsep}{4pt} 
\renewcommand{\arraystretch}{1.3} 
\begin{tabular}{cccc}
\includegraphics[width=0.22\textwidth]{Figures/prospect_ratio_E1.pdf} &
\includegraphics[width=0.22\textwidth]{Figures/prospect_ratio_E2.pdf} &
\includegraphics[width=0.22\textwidth]{Figures/prospect_ratio_E3.pdf} &
\includegraphics[width=0.22\textwidth]{Figures/prospect_ratio_E4.pdf} \\[0.3cm]

\includegraphics[width=0.22\textwidth]{Figures/estimation_error_E1.pdf} &
\includegraphics[width=0.22\textwidth]{Figures/estimation_error_E2.pdf} &
\includegraphics[width=0.22\textwidth]{Figures/estimation_error_E3.pdf} &
\includegraphics[width=0.22\textwidth]{Figures/estimation_error_E4.pdf} \\[0.3cm]

\includegraphics[width=0.22\textwidth]{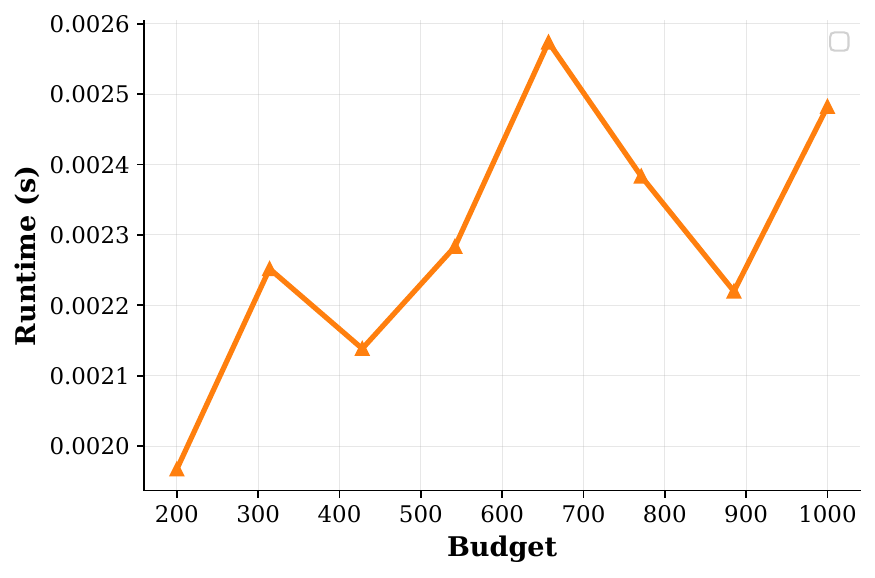} &
\includegraphics[width=0.22\textwidth]{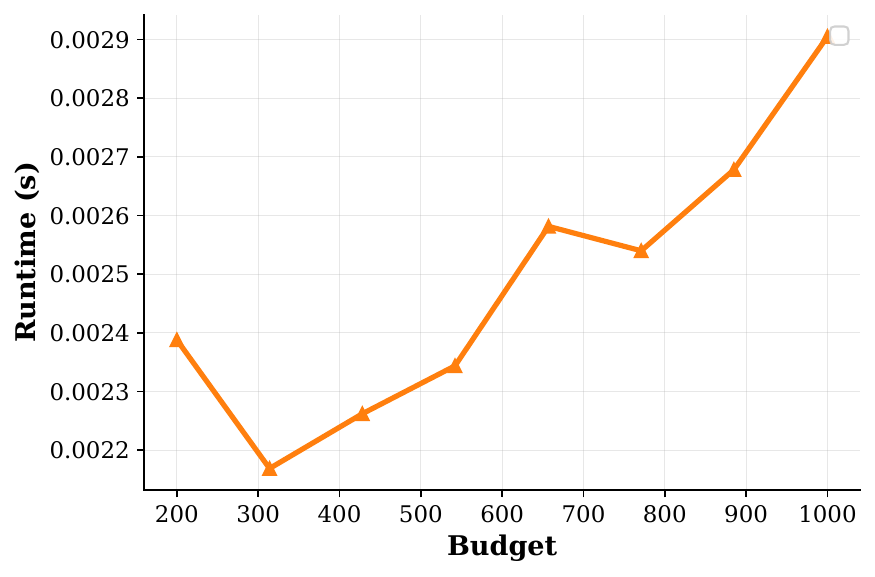} &
\includegraphics[width=0.22\textwidth]{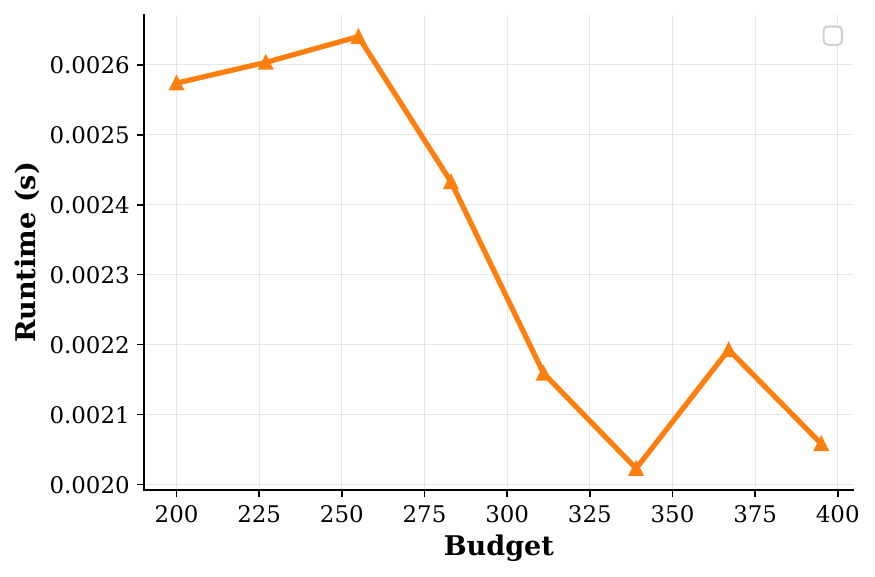} &
\includegraphics[width=0.22\textwidth]{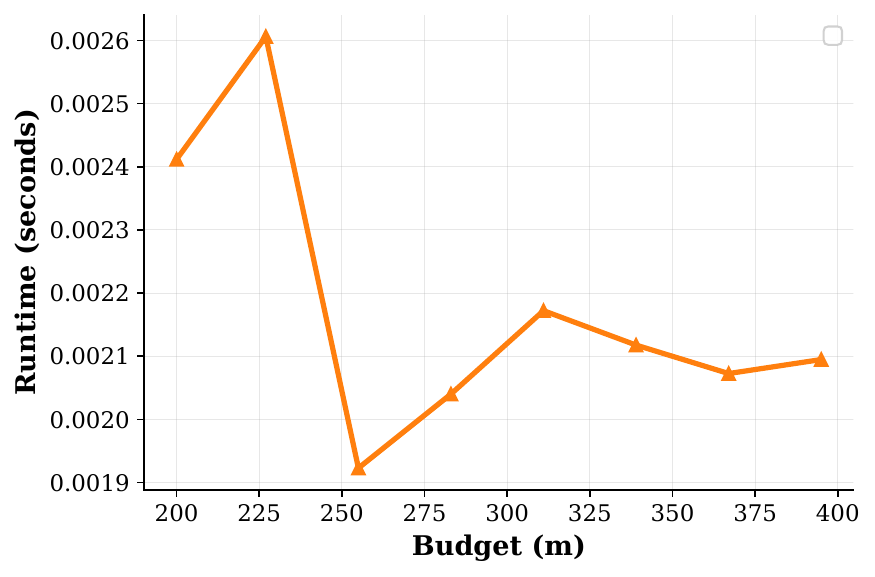} \\
\parbox[c]{0.22\textwidth}{\centering (a) COMPAS dataset\\[2pt] MLPs as strategic set} &
\parbox[c]{0.22\textwidth}{\centering (b) COMPAS dataset\\[2pt] RFs as strategic set} &
\parbox[c]{0.22\textwidth}{\centering (c) Student dataset\\[2pt] MLPs as strategic set} &
\parbox[c]{0.22\textwidth}{\centering (d) Student dataset\\[2pt] RFs as strategic set} \\[3pt]
\end{tabular}

\caption{Comparison of Runtimes, errors in statistical parity estimation, and prospect ratio across different datasets and strategic sets.}
\label{fig:bis}
\end{figure*}

All experiments were conducted on an 11th Gen Intel\textsuperscript{\textregistered} Core\texttrademark{} i7-1185G7 processor (3.00 GHz, 8 cores) with 32.0 GiB of RAM. Implementation details and instructions for reproducing our results are provided in the supplementary code repository: \url{https://anonymous.4open.science/r/Auditors-with-prospects-050F}.

Figure~\ref{fig:bis} extends Figure~\ref{fig:experiments} by reporting runtimes across datasets and strategic set transitions. In Experiment (a), the true prospect ratio (defined as the cardinality of the true prospect set divided by that of the sampled model from the strategic class) is 0.075. Despite this sparsity, Algorithm~\ref{algo} accurately identifies the prospect set using only 200  samples. Notably, estimation accuracy improves monotonically with sample size, while runtime remains stable, exhibiting only bounded, non-monotonic fluctuations. In Experiment (b), where the true prospect ratio increases to 0.125, the algorithm similarly recovers the prospect subset with high fidelity, achieving vanishing statistical error and low computational overhead. We further validate our approach on the Student Performance dataset~\citep{cortez2008using}, using gender as the protected attribute, and observe consistent behavior: the auditor reliably captures the full prospect class with minimal runtime and negligible estimation error. 

These findings align with the broader empirical analysis in Section~\ref{sec:experiments}. As shown in Figures~\ref{fig:experiments} and \ref{fig:bis}, the auditor effectively recovers the entire prospective class, as measured by the prospect ratio. Concurrently, it preserves statistical parity, demonstrating stability of fairness properties under audit. Crucially, this is achieved without sacrificing flexibility in model updates and predictive performance, illustrating a favorable trade-off between accuracy and completeness. 
 
\section{Useful Technical Results}
\begin{claim}\label{claiminterprob}
    For all $m_0, m_1 \in \R^{+}$, $$\min(m_0, m_1) \leq \frac{2 m_0 m_1}{m_0 + m_1}$$
\end{claim}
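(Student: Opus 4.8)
The plan is to recognize the right-hand side as (twice half of) the harmonic-type mean of $m_0$ and $m_1$, namely $\frac{2m_0m_1}{m_0+m_1} = \left(\tfrac{1}{m_0}+\tfrac{1}{m_1}\right)^{-1}\cdot 2 / 1$, and exploit the standard fact that the minimum of two positive reals does not exceed their harmonic mean. Concretely, I would first reduce to the case $m_0 \le m_1$ without loss of generality, since both sides of the claimed inequality are symmetric in $m_0$ and $m_1$ (the left side is $\min(m_0,m_1)$ and the right side $\frac{2m_0m_1}{m_0+m_1}$ is manifestly symmetric). Under this assumption, $\min(m_0,m_1) = m_0$.

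Next I would divide the desired inequality $m_0 \le \frac{2m_0m_1}{m_0+m_1}$ through by $m_0 > 0$, which is legitimate since $m_0 \in \mathbb{R}^+$, reducing it to $1 \le \frac{2m_1}{m_0+m_1}$. Multiplying both sides by the positive quantity $m_0 + m_1$ gives the equivalent statement $m_0 + m_1 \le 2m_1$, i.e. $m_0 \le m_1$, which holds by our WLOG assumption. Chaining these equivalences back up establishes the claim. One could alternatively present it as a single cross-multiplication argument: $\frac{2m_0m_1}{m_0+m_1} - \min(m_0,m_1) = \frac{2m_0m_1 - \min(m_0,m_1)(m_0+m_1)}{m_0+m_1}$, and observe that with $\min(m_0,m_1)=m_0$ the numerator is $2m_0m_1 - m_0^2 - m_0 m_1 = m_0(m_1 - m_0) \ge 0$ while the denominator is strictly positive.

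There is essentially no obstacle here — this is a routine elementary inequality — so the only care needed is to state the WLOG symmetry reduction cleanly and to note the positivity of $m_0$, $m_1$, and $m_0+m_1$ that justifies the divisions and sign of the denominator. I would keep the written proof to two or three lines.

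\begin{proof}
By symmetry we may assume $m_0 \le m_1$, so that $\min(m_0,m_1) = m_0$. Since $m_0, m_1 > 0$, we have $m_0 + m_1 > 0$ and
\[
\frac{2m_0m_1}{m_0+m_1} - m_0 = \frac{2m_0m_1 - m_0(m_0+m_1)}{m_0+m_1} = \frac{m_0(m_1 - m_0)}{m_0+m_1} \ge 0,
\]
because $m_0 > 0$, $m_1 - m_0 \ge 0$, and $m_0 + m_1 > 0$. Hence $\min(m_0,m_1) = m_0 \le \frac{2m_0m_1}{m_0+m_1}$.
\end{proof}
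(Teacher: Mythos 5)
Your proof is correct: the WLOG reduction to $m_0 \le m_1$ and the computation $\frac{2m_0m_1}{m_0+m_1} - m_0 = \frac{m_0(m_1-m_0)}{m_0+m_1} \ge 0$ are exactly what is needed, and this is the standard fact that the minimum is bounded by the harmonic mean. The paper states Claim~\ref{claiminterprob} without proof, so there is nothing to compare against; your two-line argument fills that gap cleanly.
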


\begin{claim}\label{claim:separation}
    For $a \geq 1,b>0$, if the following holds:
    $$x \geq 4a \log 2a +2b$$
    We have:
    $$x \geq a \log x + b$$
\end{claim}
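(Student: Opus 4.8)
\textbf{Proof plan for Claim~\ref{claim:separation}.} The plan is to linearise the term $a\log x$ via the elementary bound $\log t \le t-1$ (with $\log$ the natural logarithm, as throughout the paper), applied at the scale $t = x/(2a)$, and then to split the hypothesis $x \ge 4a\log 2a + 2b$ into two halves of size $x/2$, one of which absorbs $a\log x$ and the other $b$.

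Concretely, I would first record that $x>0$ (since $a\ge 1$ forces $\log 2a>0$, and $b>0$), so $\log x$ is well defined, and that the hypothesis immediately gives $\tfrac{x}{2}\ge 2a\log 2a + b$. Next, from $\log\!\big(\tfrac{x}{2a}\big)\le \tfrac{x}{2a}-1$ I obtain $\log x \le \tfrac{x}{2a}-1+\log 2a$, and multiplying through by $a>0$,
\[
a\log x \;\le\; \frac{x}{2}\;+\;a(\log 2a - 1).
\]
Subtracting this from $x$ and then inserting $\tfrac{x}{2}\ge 2a\log 2a + b$ yields
\[
x - a\log x \;\ge\; \frac{x}{2} - a\log 2a + a \;\ge\; (2a\log 2a + b) - a\log 2a + a \;=\; a\log 2a + a + b \;\ge\; b,
\]
where the final inequality uses $a\log 2a\ge 0$ (valid since $a\ge 1$) and $a>0$. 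This is exactly $x \ge a\log x + b$, as claimed.

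There is no real obstacle here; the only genuine ``choice'' is the scaling $\alpha=1/(2a)$ in the standard inequality $\log(\alpha x)\le\alpha x-1$, which is forced by wanting the resulting linear term to be precisely $x/2$ so that the factor $4$ and the summand $2b$ in the hypothesis split cleanly into the two pieces needed. Should one prefer a pure calculus argument, the same conclusion follows by noting that $g(x):=x-2a\log x$ is minimised at $x=2a$, increasing on $[2a,\infty)$, with $4a\log 2a\ge 2a$ for $a\ge 1$ and $g(4a\log 2a)=2a(\log a-\log\log 2a)\ge 0$ because $a\ge\log 2a$ on $[1,\infty)$; this reproves the auxiliary bound $x\ge 2a\log x$ for $x\ge 4a\log 2a$, after which $x=\tfrac{x}{2}+\tfrac{x}{2}\ge a\log x+b$. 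The Fenchel-type bound above, however, avoids all of these case checks.
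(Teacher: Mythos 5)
Your proof is correct. Note that the paper states Claim~\ref{claim:separation} in its ``Useful Technical Results'' section without supplying any proof, so there is nothing to compare against; your argument fills that gap. The chain checks out: $\log t\le t-1$ at $t=x/(2a)$ gives $a\log x\le \tfrac{x}{2}+a(\log 2a-1)$, and combining with $\tfrac{x}{2}\ge 2a\log 2a+b$ yields $x-a\log x\ge a\log 2a+a+b\ge b$, using $a\ge 1$ so that $\log 2a>0$. This is also the standard route for this lemma (it is essentially Lemma A.2 of Shalev-Shwartz and Ben-David), and your linearisation at scale $1/(2a)$ is exactly what makes the constants $4$ and $2$ in the hypothesis split cleanly; the alternative monotonicity argument you sketch is likewise sound but, as you say, needs the extra checks $4a\log 2a\ge 2a$ and $a\ge\log 2a$ on $[1,\infty)$.
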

\begin{claim}[Bernstein inequality]\label{lemma:bernstein}
When $X_1, \cdots, X_n$ are independent random variables, with $\prob[X_i] \leq M$ almost surely for all $i \in [n]$, the following holds
    $$\mathbb{P}\left(\left|\sum_{i=1}^{n} (X_i - \mathbb{E}[X_i])\right| \geq t\right) \leq 2\exp\left(-\frac{t^2/2}{\sum_{i=1}^{n} \mathbb{E}[X_i^2] + Mt/3}\right)$$
\end{claim}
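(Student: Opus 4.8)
The plan is to use the standard Chernoff argument: bound the moment generating function (MGF) of the centered sum, apply Markov's inequality, and optimize the free parameter.

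First I would reduce to one-sided deviations. Writing $Y_i \triangleq X_i - \E[X_i]$, we have $\E[Y_i] = 0$ and, under the intended boundedness hypothesis $|X_i - \E[X_i]| \le M$ a.s., also $|Y_i| \le M$. It suffices to control $\prob(\sum_i Y_i \ge t)$; running the same argument on $-Y_i$ and taking a union bound yields the factor $2$ and the absolute value. The core step is the per-variable estimate, valid for $0 < \lambda < 3/M$:
\begin{equation*}
\E[e^{\lambda Y_i}] \;\le\; \exp\!\left(\frac{\tfrac{1}{2}\lambda^2\,\E[Y_i^2]}{1 - \lambda M/3}\right).
\end{equation*}
I would prove this by expanding $e^{\lambda Y_i} = 1 + \lambda Y_i + \sum_{k\ge 2}\tfrac{\lambda^k Y_i^k}{k!}$, killing the linear term via $\E[Y_i]=0$, bounding $|Y_i|^k \le M^{k-2}Y_i^2$ for $k\ge 2$, using $k! \ge 2\cdot 3^{k-2}$ to recognize a geometric series $\sum_{k\ge 2}(\lambda M/3)^{k-2} = (1-\lambda M/3)^{-1}$, and finally $1+u \le e^u$. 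Since $\E[Y_i^2] = \mathrm{Var}(X_i) \le \E[X_i^2]$, replacing $\E[Y_i^2]$ by $\E[X_i^2]$ only loosens the bound and matches the denominator in the claim.

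Next, by independence, $\E[e^{\lambda \sum_i Y_i}] = \prod_i \E[e^{\lambda Y_i}] \le \exp\!\big(\tfrac12\lambda^2\sigma^2/(1-\lambda M/3)\big)$ with $\sigma^2 \triangleq \sum_i \E[X_i^2]$, so Markov's inequality gives, for every $0 < \lambda < 3/M$,
\begin{equation*}
\prob\Big(\textstyle\sum_i Y_i \ge t\Big) \;\le\; \exp\!\left(-\lambda t + \frac{\tfrac12\lambda^2\sigma^2}{1-\lambda M/3}\right).
\end{equation*}
The optimization is explicit: taking $\lambda = t/(\sigma^2 + Mt/3)$, which lies in $(0,3/M)$, makes the exponent exactly $-\tfrac{t^2/2}{\sigma^2 + Mt/3}$. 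Adding the symmetric one-sided bound completes the proof.

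The main obstacle is the MGF estimate: the bookkeeping in the series expansion — the inequalities $|Y_i|^k \le M^{k-2}Y_i^2$ and $k! \ge 2\cdot 3^{k-2}$, and correctly summing the resulting geometric series. Once that is in place, independence, Markov, and the choice of $\lambda$ are routine.
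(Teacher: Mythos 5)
Your proof is correct: it is the standard Chernoff--Cram\'er derivation of Bernstein's inequality, and every step checks out --- the per-variable MGF bound via $|Y_i|^k \le M^{k-2}Y_i^2$ and $k! \ge 2\cdot 3^{k-2}$, the geometric series valid for $0<\lambda<3/M$, the explicit optimizer $\lambda = t/(\sigma^2+Mt/3)$ (which indeed makes the exponent exactly $-\tfrac{t^2/2}{\sigma^2+Mt/3}$), and the observation that passing from $\sum_i \mathrm{Var}(X_i)$ to $\sum_i \E[X_i^2]$ only weakens the bound. The paper states this claim as a standard off-the-shelf tool and offers no proof of its own, so there is nothing to compare against; you also correctly repaired the garbled hypothesis ``$\prob[X_i]\le M$'' as a boundedness condition on $X_i-\E[X_i]$, which is the assumption actually needed for the two-sided statement.
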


\begin{lemma}[Discrepancy Chernoff bounds]\label{lemma:discchernoff}
    If $Q_1, \cdots, Q_{m_0}, R_1, \cdots, R_{m_1}$ are independent random variables taking values in $[0,1]$, then:

    $$\prob \Big[Q_{m_0} - R_{m_1} - (\E Q_{m_0} - \E R_{m_1}   ) >\epsilon\Big]  \leq \exp{\frac{- 2 m_0 m_1 \epsilon^2}{m_0 + m_1}}$$
\end{lemma}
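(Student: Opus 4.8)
The plan is to recognize the statement as a one-line consequence of Hoeffding's inequality for a sum of bounded independent random variables, once the implicit averaging convention is made explicit. First I would pin down that $Q_{m_0}$ and $R_{m_1}$ denote the empirical means $\bar Q \triangleq \frac{1}{m_0}\sum_{i=1}^{m_0} Q_i$ and $\bar R \triangleq \frac{1}{m_1}\sum_{j=1}^{m_1} R_j$ --- this is the reading under which the lemma is invoked in the proofs of Lemmas~\ref{lemma:sym1} and \ref{lemma:spsym2}. Then I would write $\bar Q - \bar R$ as a single sum
\[
T \;\triangleq\; \sum_{i=1}^{m_0} \frac{Q_i}{m_0} \;+\; \sum_{j=1}^{m_1} \Bigl(-\frac{R_j}{m_1}\Bigr),
\]
which has $m_0 + m_1$ mutually independent summands (independence of the full family $Q_1,\dots,Q_{m_0},R_1,\dots,R_{m_1}$ is hypothesized).

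Next I would record the almost-sure ranges of the summands: each $Q_i/m_0 \in [0,1/m_0]$ has width $1/m_0$, and each $-R_j/m_1 \in [-1/m_1,0]$ has width $1/m_1$. Applying Hoeffding's inequality, which states $\prob[T - \E T > \epsilon] \le \exp\bigl(-2\epsilon^2 / \sum_k c_k^2\bigr)$ when the $k$-th summand lies in an interval of length $c_k$, and substituting
\[
\sum_k c_k^2 \;=\; m_0\cdot\Bigl(\tfrac{1}{m_0}\Bigr)^2 + m_1\cdot\Bigl(\tfrac{1}{m_1}\Bigr)^2 \;=\; \frac{1}{m_0}+\frac{1}{m_1} \;=\; \frac{m_0+m_1}{m_0 m_1},
\]
gives exactly $\exp\bigl(-2 m_0 m_1 \epsilon^2/(m_0+m_1)\bigr)$. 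Since $T - \E T$ equals $\bar Q_{m_0} - \bar R_{m_1} - (\E \bar Q_{m_0} - \E \bar R_{m_1})$, this is the claimed bound. If one prefers to keep the lemma self-contained, the same constant drops out of the Chernoff--Cram\'er method run directly: bound $\E[e^{\lambda(T-\E T)}] \le \prod_k e^{\lambda^2 c_k^2/8}$ termwise via Hoeffding's lemma, then optimize over $\lambda > 0$.

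There is no genuine obstacle here beyond bookkeeping: the only subtlety is to use the interval \emph{lengths} $1/m_0$ and $1/m_1$ of the rescaled summands rather than the raw $[0,1]$ range of the $Q_i$ and $R_j$, and then to simplify the harmonic-type sum $1/m_0 + 1/m_1$. The one thing worth flagging in the writeup is the convention issue: under the alternative reading of $Q_{m_0}, R_{m_1}$ as the $m_0$-th and $m_1$-th individual variables the inequality would be false in general, so the averaging interpretation should be stated before any estimate is carried out.
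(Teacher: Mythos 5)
Your proof is correct. The paper itself states Lemma~\ref{lemma:discchernoff} without proof in its technical appendix, and your derivation is exactly the standard one that justifies it: interpret $Q_{m_0}$ and $R_{m_1}$ as the group averages (the reading under which the lemma is applied throughout the paper), write the difference as a single sum of $m_0+m_1$ independent summands with interval lengths $1/m_0$ and $1/m_1$, and apply Hoeffding's inequality to get $\sum_k c_k^2 = 1/m_0 + 1/m_1 = (m_0+m_1)/(m_0 m_1)$ and hence the stated exponent. Your observation that the literal reading (individual variables rather than averages) would make the inequality false is also correct and worth keeping in the writeup.
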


The following claim will serve to derive upper bounds for improper auditing.
\begin{claim}\label{claimm}
    For all $m,s \in \N$,
    \begin{align*}
         \sum_{i=0}^s \binom{m}{i} \leq \Big(\frac{en}{s}\Big)^s
    \end{align*}
\end{claim}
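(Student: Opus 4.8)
The plan is to prove this standard Sauer--Shelah-type bound by a normalization-and-binomial-theorem argument, interpreting the $n$ in the right-hand side as $m$ (the upper index of the binomial coefficients), since $m$ and $s$ are the only quantified variables. I would first restrict attention to the regime $1 \le s \le m$, which is the only case needed when the claim is invoked in the proof of Lemma~\ref{lemma:spsym2} (applying Sauer's lemma on each protected group). The degenerate case $s=0$ gives the trivial bound $\binom{m}{0} = 1 \le 1$.

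The core step is to multiply the partial sum by the factor $(s/m)^s$, which is at most $1$ because $s \le m$. The key observation is that for every index $i$ in the summation range $0 \le i \le s$ we have $(s/m)^s \le (s/m)^i$, since $s/m \le 1$ and $i \le s$. This yields
$$\left(\frac{s}{m}\right)^s \sum_{i=0}^s \binom{m}{i} \le \sum_{i=0}^s \binom{m}{i}\left(\frac{s}{m}\right)^i.$$
Every summand on the right is non-negative, so extending the range to $i = m$ only increases the value, and the binomial theorem then gives $\sum_{i=0}^m \binom{m}{i}(s/m)^i = \left(1 + s/m\right)^m$.

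Finally I would apply the elementary inequality $1 + x \le e^x$ with $x = s/m$ to obtain $\left(1 + s/m\right)^m \le e^s$. Chaining the estimates gives $(s/m)^s \sum_{i=0}^s \binom{m}{i} \le e^s$, and rearranging produces $\sum_{i=0}^s \binom{m}{i} \le e^s (m/s)^s = (em/s)^s$, as claimed. There is no substantive obstacle in this argument; the only point requiring care is the direction of the inequality $(s/m)^s \le (s/m)^i$, which relies simultaneously on $s/m \le 1$ (equivalently $s \le m$) and on $i \le s$. I would therefore state the hypothesis $1 \le s \le m$ explicitly at the outset so that both conditions are available when that step is used.
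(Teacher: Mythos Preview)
Your argument is correct and is exactly the standard normalization-plus-binomial-theorem proof of this Sauer--Shelah-type estimate; the paper itself states Claim~\ref{claimm} as a known technical fact without proof, so there is no alternative approach to compare against. Your reading of $n$ as $m$ and your explicit restriction to $1 \le s \le m$ (with the trivial $s=0$ case handled separately) are both appropriate.
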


\end{document}